\documentclass{article}

\usepackage{microtype}
\usepackage{graphicx}
\usepackage{subcaption}
\usepackage{booktabs}
\usepackage{hyperref}
\usepackage{enumitem}
\usepackage{multirow}
\usepackage{colortbl}
\usepackage{longtable}
\usepackage{booktabs}
\usepackage{xcolor}
\usepackage{tcolorbox}
\tcbuselibrary{skins}
\tcbuselibrary{skins}
\usepackage{textcomp}   
\usepackage{dsfont}
\usepackage{tabularx}

\usepackage[accepted]{icml2026}

\usepackage{amsmath}
\usepackage{amssymb}
\usepackage{mathtools}
\usepackage{amsthm}
\tcbuselibrary{breakable}
\usepackage[capitalize,noabbrev]{cleveref}

\usepackage{algorithm}
\usepackage{algorithmic}

\graphicspath{{figures/}}

\theoremstyle{plain}
\newtheorem{proposition}{Proposition}[section]

\theoremstyle{definition}

\theoremstyle{remark}

\newcommand{\formulaEntropy}{%
    \Hent(C) = -\sum_{k=1}^K \Pbar_k \log \Pbar_k,%
}

\newcommand{\formulaSemanticDistortion}{%
    \begin{aligned}
        D^{(e)} &= \sum_{k=1}^K \Pbar_k \cdot \Var_k^{(e,w)}, \\
        \Var_k^{(e,w)} &= \frac{1}{W_k} \sum_{n: c(n) = k} P_n \| e_n - \vmu_k^{(e)} \|_2^2,
    \end{aligned}%
}

\newcommand{\formulaAttributionDistortion}{%
    \begin{aligned}
        D^{(a)} &= \sum_{k=1}^K \Pbar_k \cdot \Var_k^{(a,w)}, \\
        \Var_k^{(a,w)} &= \frac{1}{W_k} \sum_{n: c(n) = k} P_n \| a_n - \vmu_k^{(a)} \|_1.
    \end{aligned}%
}

\newcommand{\formulaRDObjective}{%
    \LRD = \underbrace{\Hent(C)}_{\text{Rate}} + \underbrace{\beta_e D^{(e)} + \beta_a D^{(a)}}_{\text{Distortion}}%
}

\newcommand{\formulaAssignment}{%
    \begin{split}
    c(n) = \argmin_{k} \Bigl[ &-\log \Pbar_k \\
    &+ \beta_e \| e_n - \vmu_k^{(e)} \|_2^2 + \beta_a \| a_n - \vmu_k^{(a)} \|_1 \Bigr],
    \end{split}%
}

\newcommand{\formulaCenterUpdate}{%
    \begin{aligned}
        \vmu_k^{(e)} &= \frac{1}{W_k} \sum_{n: c(n)=k} P_n \cdot e_n, \\
        \vmu_k^{(a)} &= \text{wmedian}\left(\{a_n\}_{n: c(n)=k}, \{P_n\}\right).
    \end{aligned}%
}

\newcommand{\formulaSplitCriterion}{%
    \beta_e \cdot \Delta D_k^{(e)} + \beta_a \cdot \Delta D_k^{(a)} > \Pbar_k \cdot \Hbin(\alpha)%
}

\newcommand{\formulaJunkCriterion}{%
    \Delta \Hent > \beta_e \cdot \Delta D^{(e)} + \beta_a \cdot \Delta D^{(a)}%
}

\newcommand{\R}{\mathbb{R}}

\newcommand{\Hent}{H}  
\newcommand{\Hbin}{H_{\text{bin}}}  

\newcommand{\Var}{\mathrm{Var}}

\newcommand{\argmin}{\mathop{\mathrm{argmin}}}

\newcommand{\LRD}{\mathcal{L}_{\text{RD}}}
\newcommand{\Wtotal}{W_{\text{total}}}
\newcommand{\Pbar}{\bar{P}}

\newcommand{\vmu}{\boldsymbol{\mu}}

\usepackage{placeins}
\icmltitlerunning{Unsupervised Feature Discovery by Aligning Semantics and Mechanisms}

\begin{document}

\twocolumn[
\icmltitle{Shared Semantics, Divergent Mechanisms:\\Unsupervised Feature Discovery by Aligning Semantics and Mechanisms}

\begin{icmlauthorlist}
    \icmlauthor{Hyunjin Cho}{sch}
    \icmlauthor{Youngji Roh}{sch}
    \icmlauthor{Jaehyung Kim}{sch}
\end{icmlauthorlist}

  \icmlaffiliation{sch}{Yonsei University}

  \icmlcorrespondingauthor{Jaehyung Kim}{jaehyungk@yonsei.ac.kr}

\icmlkeywords{Clustering, Rate-Distortion Theory, Multi-View Learning}

\vskip 0.3in
]

\printAffiliationsAndNotice{}

\begin{abstract}
As large language models are increasingly deployed in high-stakes settings, there is a growing need for tools that audit not only model outputs but also the internal computations that produce them.
Circuit analysis is a central approach in mechanistic interpretability, but it is typically target-conditioned, explaining a single prompt paired with a chosen completion.
This target-conditioned setup can obscure heterogeneity across a model's continuation distribution.
We introduce \textit{distribution-level unsupervised feature discovery}, which clusters sampled continuations using both semantic content and sequence-level mechanistic attributions, without manually specifying target outputs.
Our method represents each continuation with a semantic embedding and a prefix-to-continuation attribution signature, then optimizes a rate--distortion objective that trades off semantic coherence, mechanistic consistency, and cluster granularity.
Across clustering and steering analyses, the discovered clusters expose continuation modes that single-view baselines miss and provide interventional evidence that cluster signatures correspond to actionable mechanistic factors.
Overall, our approach complements circuit analysis and behavioral evaluation by providing a scalable audit of the mechanisms underlying a model’s continuation distribution.~\footnote{We provide the codebase at \href{https://github.com/Merenova/distribution-level-feature-discovery}{https://github.com/Merenova/distribution-level-feature-discovery}.}

\end{abstract}
\vspace{-0.3cm}
\section{Introduction}
\label{sec:introduction}

Despite the remarkable capabilities of large language models (LLMs), their deployment in high-stakes domains, such as medical question answering~\citep{tang2024medagentslargelanguagemodels} and autonomous web agents~\citep{chae2025webshepherdadvancingprmsreinforcing} demands rigorous mechanisms to audit their reliability and safety~\citep{Bommasani2021FoundationModels}.
Currently, behavioral evaluations and benchmarks serve as the dominant paradigm for assessing model performance, providing a practical way to quantify risk at scale.
However, such evaluations are fundamentally limited as they are restricted to observing input--output correlation and do not provide insight into the internal computations that produce those behaviors~\citep{jacovi-goldberg-2020-towards, doshivelez2017rigorousscienceinterpretablemachine}.
This limitation has motivated growing interest in \textit{mechanistic interpretability}~\citep{zhang2026locatesteerimprovepractical, bereska2024mechanisticinterpretabilityaisafety}, which aims to look beyond observable outputs and explain the specific internal computations driving model behavior~\citep{lindsey2025biology, cywiński2025elicitingsecretknowledgelanguage, nanda2023progressmeasuresgrokkingmechanistic}.

Recent progress in mechanistic interpretability has centered on \textit{circuit analysis}, which enables to reveal structured internal computations by isolating components causally responsible for a specific behavior~\citep{marks2025sparse, conmy2023automatedcircuitdiscoverymechanistic, wang2022interpretabilitywildcircuitindirect}.
Despite these improvements, circuit analysis typically considers a single target response specified in advance.
This constraint imposes two critical limitations.
First, it is unclear whether a mechanism identified for one chosen completion is generalizable to other semantically similar paraphrases, or whether it reflects a brittle dependence on a particular surface form (Fig.~\ref{fig:cross_sil} Top).
Second, even fixing a target completion, single-target analysis can obscure heterogeneity in internal computation: even for outputs with similar semantics, it could exhibit distinct patterns of internal activation that single-target analysis fails to capture (Fig.~\ref{fig:cross_sil} Bottom). 

\begin{figure}[t]
    \centering
    \begin{tcolorbox}[
      enhanced,
      colback=gray!5,
      colframe=gray!40,
      boxrule=0.4pt,
      left=2pt, right=2pt, top=2pt, bottom=2pt,
      fontupper=\scriptsize
    ]
    \textbf{Q:} \textit{Who sings a Khmer version of ``You've Got A Friend'' in 1973?}
    \vspace{0.2em}\hrule\vspace{0.2em}
    
    \textbf{A$_1$}: ``...in 1973 was sung by \textbf{Sok Sisowath}...''\\
    \textbf{A$_2$}: ``...was performed by \textbf{Sok Sisowath}, a notable Khmer singer...''\\
    \textbf{A$_3$}: ``...was recorded in 1973 by \textbf{Srey Mony}...''
    
    \vspace{0.2em}
    {\tiny\textcolor{gray}{
    (A$_1$, A$_2$): Semantic sim=0.95, Mechanism L1=132 $\rightarrow$ \textit{same meaning, different mechanism}\\
    (A$_2$, A$_3$): Semantic sim=0.62, Mechanism L1=87 $\rightarrow$ \textit{different meaning, similar mechanism}
    }}
    \end{tcolorbox}
        \includegraphics[width=\linewidth]{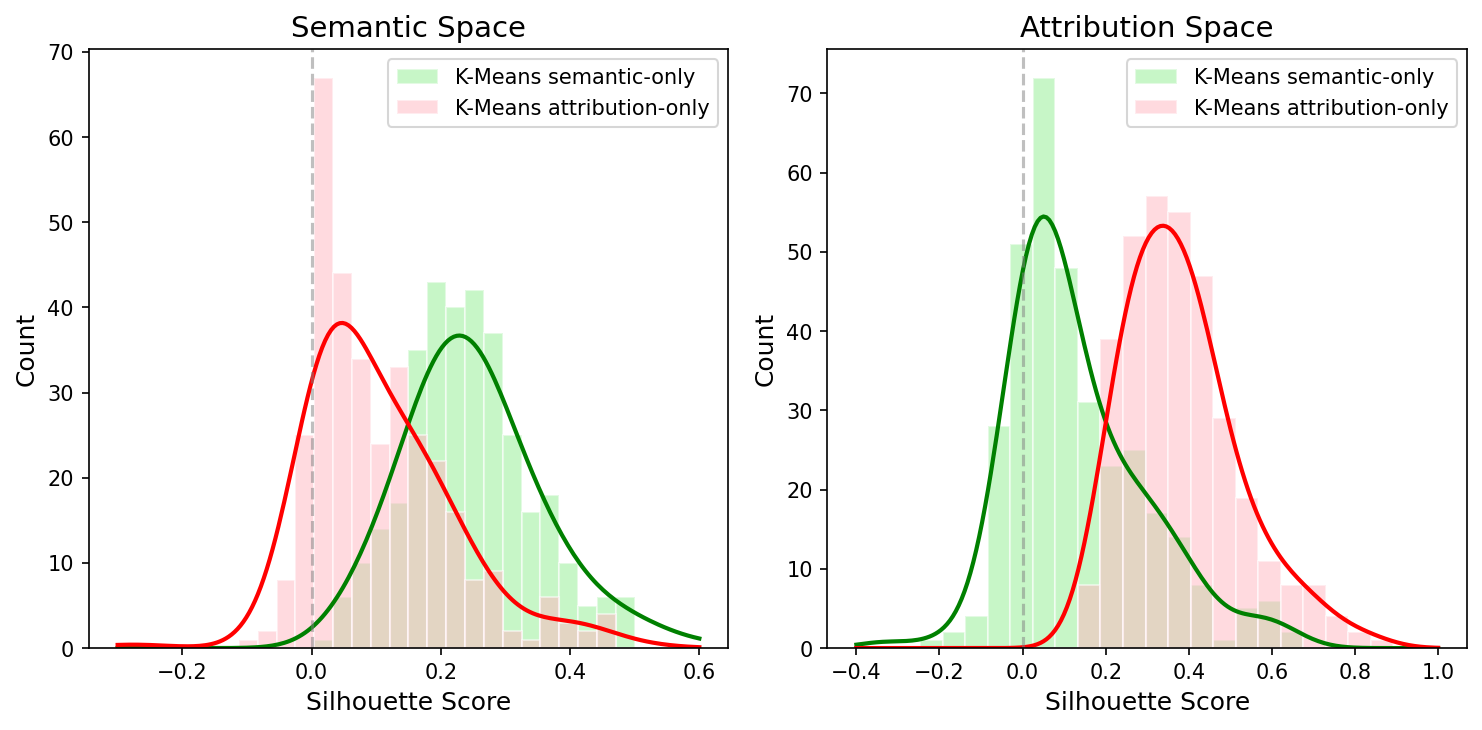}

    \caption{\textbf{Misalignment between semantic and mechanistic spaces.}
    \textit{Top:} An example triplet where (A$_1$, A$_2$) share semantic meaning (cosine similarity) but use different internal mechanisms (attributions, L1 distance), while (A$_2$, A$_3$) differ semantically but use similar mechanisms.
     \textit{Bottom:} Silhouette scores~\citep{ROUSSEEUW198753} for K-means partitions learned in one view and evaluated in both views; a partition that is clean in semantic space can be poor in attribution space, and vice versa.
     This observation motivates clustering continuations with both semantic and mechanistic distortions rather than treating either view as a proxy for the other.}
    \label{fig:cross_sil}
  \end{figure}

\textbf{Contribution.}
We propose a framework for unsupervised latent discovery from a prompt’s continuation distribution.
It complements circuit analysis by reducing reliance on hand-chosen targets and surfacing mechanistic hypotheses supported by cluster-level steering.
In practice, one can first use our method to discover continuation clusters, then trace the circuits associated with a selected cluster as a follow-up analysis.
This reduces the reliance on pre-specifying target outputs, which often confines standard circuit analysis to one (or a few) hand-chosen completions.

Specifically, our framework consists of three distinct stages. 
\textbf{(1) Sampling:} We first sample diverse continuations for a given input under a fixed decoding scheme, capturing the model's inherent output diversity.
\textbf{(2) Dual Representation:} We then represent each continuation through two complementary views: a semantic embedding capturing linguistic meaning, and a mechanistic embedding derived from sparse feature activations (\textit{e.g.}, transcoders~\citep{dunefsky2024transcoders}) that approximates a continuation-conditioned mechanistic signature of the model's computation.
\textbf{(3) Joint Clustering:} Finally, we cluster these continuations using a rate--distortion objective that jointly constrains both views, producing latent modes that are coherent in meaning while exhibiting consistent internal mechanistic representation.

Our experimental results addresses three questions.
\textbf{(i) Does joint clustering capture complementary structure?} We compare against semantic-only, attribution-only, and multi-view fusion baselines under rate-matched objectives.
\textbf{(ii) Does the rate--distortion knob expose meaningful granularity?} We show that increasing $\beta$ yields finer continuation modes whose mechanistic signatures inherit structure from coarser clusters.
\textbf{(iii) Do cluster-derived features provide interventional causal evidence?}
We test whether steering along cluster-derived mechanistic directions changes relative logit difference for the target cluster.

In summary, we formulate unsupervised latent discovery as a distribution-level task for mechanistic interpretability.
To our knowledge, this is the first framework that clusters a model's continuation distribution using both semantic structure and sequence-level mechanistic attribution, producing human-inspectable lens for downstream circuit analysis.

\section{Related Work}
\label{sec:related_work}

\textbf{Circuit analysis and causal attribution.}
Mechanistic interpretability explains model behavior by localizing components and pathways that causally support an output.
Causal mediation techniques~\citep{NEURIPS2020_92650b2e, judea_2001}, including activation patching~\citep{meng2022locating}, path patching~\citep{goldowskydill2023localizingmodelbehaviorpath, wang2022interpretabilitywildcircuitindirect}, and automated circuit discovery~\citep{conmy2023automatedcircuitdiscoverymechanistic}, identify behavior-relevant components through counterfactual effects.
Attribution patching~\citep{syed-etal-2024-attribution, nanda2023attributionpatching} scales these analyses with cheaper linearized causal scores.
However, most circuit workflows remain \textit{target-conditioned}: the analyst specifies a prompt and target output, so the resulting circuit is defined relative to that pair~\citep{wang2022interpretabilitywildcircuitindirect, lindsey2025biology}.
This leaves open whether a mechanism found for one surface form generalizes to semantically equivalent continuations, or whether alternative continuations rely on distinct mechanisms~\citep{zur2025languagemodelsawareroad, bigelow2025forking}.

\textbf{Feature disentanglement and polysemanticity.}
Circuit analysis is further complicated by polysemanticity: individual neurons can mix unrelated concepts under superposition~\citep{elhage2022toymodelssuperposition, olah2020zoom}.
Sparse autoencoders~\citep{bricken2023monosemanticity, cunningham2023sparseautoencodershighlyinterpretable} and transcoders~\citep{dunefsky2024transcoders} learn overcomplete sparse feature bases for dense activations, clarifying \textit{what} is represented and enabling feature-level attribution.
However, these dictionaries do not by themselves determine \textit{how} feature combinations organize downstream behavior.
Prior work often studies features in isolation~\citep{tian2025measuringsparseautoencoderfeature, gao2024scalingevaluatingsparseautoencoders} or groups them by dataset-level co-occurrence~\citep{marks2025sparse}.
We instead use sparse-feature attributions as a mechanistic view of each sampled continuation, organizing features by the continuation modes.

\textbf{Unsupervised discovery of latent modes.}
Our work also builds on unsupervised methods for uncovering latent structure in model representations.
Some approaches identify global directions for properties such as knowledge~\citep{burns2024discoveringlatentknowledgelanguage} or sentiment~\citep{tigges2023linearrepresentationssentimentlarge}, usually assuming a linear and often binary latent axis.
More recently, \citet{marks2025sparse} cluster dataset samples by causal feature similarity to discover recurring global circuits.
We make a complementary move from the \textit{dataset} level to the \textit{distribution} level, clustering diverse continuations of a single prompt to expose its computational heterogeneity.
Rate--distortion theory~\citep{rose-annealing, higgins2017betavae} gives the organizing principle for trading off compression, semantic coherence, and mechanistic consistency, recovering prompt-specific continuation modes that purely mechanistic or dataset-level clustering can obscure.

\setcounter{figure}{0}
\begin{figure*}[t]
    \centering
    \includegraphics[width=1\linewidth]{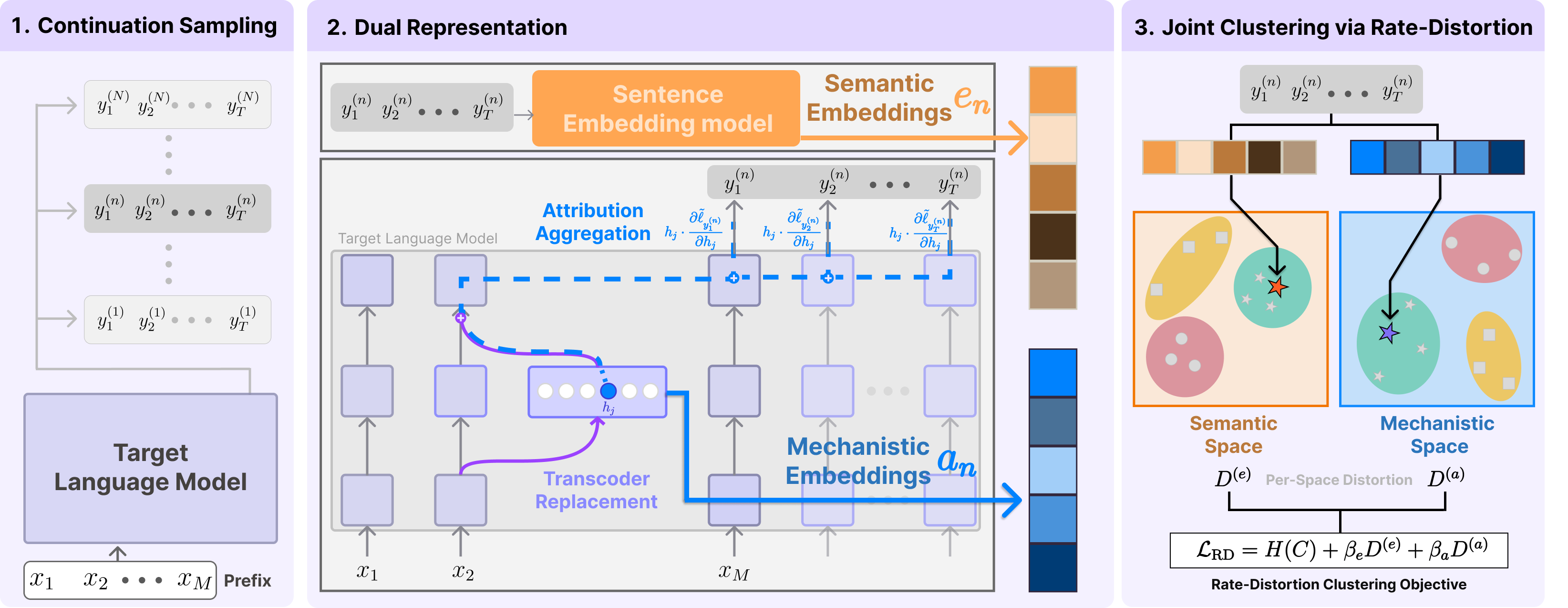}
    \caption{\textbf{Overview of distribution-level unsupervised feature discovery.}
    \textit{Continuation Sampling:} For a prefix $\mathbf{x}=x_{1:M}$, we sample continuations $\mathbf{y}^{(n)}=y^{(n)}_{1:T_n}$ with probability weights $P_n$.
    \textit{Dual Representation:} Each continuation is represented by a semantic embedding $e_n$ and a mechanistic embedding $a_n$, where $a_n$ aggregates prefix-feature effects over the continuation span.
    \textit{Joint Clustering via Rate-Distortion:} Rate--distortion clustering then groups $\{(e_n,a_n,P_n)\}_{n=1}^N$, balancing semantic coherence and mechanistic consistency.}
    \label{fig:method_overview}
\end{figure*}
\vspace{-0.in}
\setcounter{figure}{2}

\section{Method}
\label{sec:method}
In this section, we formally introduce our three-stage pipeline: sample continuations, build semantic and mechanistic views for each continuation, and cluster the weighted two-view representations.
The overview is in Figure~\ref{fig:method_overview}.

\subsection{Stage 1: Continuation Sampling}
\label{sec:method:sampling}

Given an input prefix $\mathbf{x}=x_{1:M}$, we sample $N$ distinct continuations $\mathbf{y}^{(n)}=y^{(n)}_{1:T_n}$ using nucleus sampling and assign each a length-normalized probability $P_n$ (Eq.~\ref{eq:prob-norm}).
These probabilities are used throughout the clustering objective: they weight each continuation’s contribution to the semantic and attribution distortions, and induce the cluster prior used in the rate term.
Thus, continuations with higher-probability have greater influence on both the within-cluster distortion and the effective cluster-rate penalty.
\begin{equation}
P_n =
\exp\left(\frac{1}{T_n}\sum_{t=1}^{T_n}\log p(y^{(n)}_t\mid \mathbf{x}, y^{(n)}_{<t})\right).
\label{eq:prob-norm}
\end{equation}

\subsection{Stage 2: Dual Representation}
\label{sec:method:representation}

For each continuation $n$, we build two complementary views.
The semantic embedding $e_n\in\mathbb{R}^{d_e}$ captures what the continuation says in context.
The mechanistic embedding $a_n\in\mathbb{R}^{d_a}$ captures how prefix sparse features affect the continuation's logits on average.
Keeping these views separate is important because semantically similar continuations can have different sequence-level attribution signatures, and mechanistically similar signatures can support semantically different continuations.

\paragraph{Attribution via transcoders.}
In our work, we represent the effect of features computed in MLPs on logits in the continuation span for mechanistic embeddings.
Following recent works on attribution and sparse dictionary learning~\citep{lindsey2025biology, dunefsky2024transcoders}, we use transcoders to decompose MLP computations into interpretable sparse features and compute direct linear attributions~\citep{syed-etal-2024-attribution} from features to output logits.
In formulas, for an MLP input $x^{(l)}\in\R^{d_{model}}$, a transcoder $TC$ defines sparse feature activations
\begin{equation}
h^{(l)}(x)=\text{ReLU}\!\big(W_{\text{enc}}^{(l)}x^{(l)}+b_{\text{enc}}^{(l)}\big)\in\R^{d_f},
\end{equation}
and reconstructs the MLP output as
\vspace{-0.1in}
\begin{equation}
TC^{(l)}(x)=W_{\text{dec}}^{(l)}h^{(l)}(x)+b_{\text{dec}}^{(l)}=\sum_{f=1}^{d_f} h^{(l)}_f(x)\,d^{(l)}_f+b_{\text{dec}}^{(l)},
\end{equation}
where $h^{(l)}_f(x)$ is the $f$-th activation of the transcoder and $d^{(l)}_f$ is the $f$-th decoder direction (column of $W_{dec}^{(l)}$) and typically $d_f\gg d_{\text{MLP}}$.
Transcoders are trained to approximate the MLP module computation $\text{MLP}^{(l)}(x)$ while keeping $h^{(l)}(x)$ sparse (e.g., $\|\text{MLP}^{(l)}(x)-TC^{(l)}(x)\|_2^2+\lambda\|h^{(l)}(x)\|_1$).

Let $j=(p,l,f)$ index a prefix token position $p\le M$, layer $l$, and transcoder feature $f$.
For a continuation token $y^{(n)}_t$, we measure the direct linear attribution of feature $j$ to the demeaned logit $\tilde{\ell}_{y^{(n)}_t}=\ell_{y^{(n)}_t}-|V|^{-1}\sum_{v\in V}\ell_v$ by multiplying feature activation and gradient~\citep{lindsey2025biology, syed-etal-2024-attribution}:
\begin{equation}
A_{j \to y^{(n)}_t}
=
    h_j \cdot
    \frac{\partial \tilde{\ell}_{y^{(n)}_t}}{\partial h_j}.
\label{eq:grad-act}
\end{equation}
The continuation-level attribution vector averages this token-wise effect over the sampled continuation span:
\begin{equation}
a_n[j] = \frac{1}{T_n}\sum_{t=1}^{T_n} A_{j \to y^{(n)}_t}.
\label{eq:aggregation}
\end{equation}
Thus $a_n$ can be interpreted as a sequence-level prefix-to-continuation effect.
Note that the index $j=(p,l,f)$ is retained, so discovered cluster signatures can still be inspected as position-layer-feature hypotheses for downstream circuit analysis.
We provide more details on derivation and the interpretation of the effect in Appendix~\ref{app:continuation-attribution-analysis}.

\paragraph{Normalization.}
Semantic embeddings are L2-normalized: $e_n \leftarrow e_n / \|e_n\|_2$, so that squared L2 distance corresponds to cosine dissimilarity.
Attribution vectors are RMS-normalized: $a_n \leftarrow a_n / \text{RMS}(a_n)$, which preserves sparsity structure and sign patterns while standardizing scale.

\subsection{Stage 3: Joint Clustering via Rate-Distortion}
\label{sec:method:clustering}
We formulate latent discovery as a rate--distortion problem: find few clusters (low rate) that faithfully capture per-continuation patterns (low distortion) across both views. 
Unlike $K$-means, which fixes $K$ a priori, our framework lets $K$ emerge from optimization and accommodates multiple distortion terms with interpretable trade-offs.
We call this joint-clustering framework \emph{$RD$-clustering}.

Rate--distortion theory formalizes a trade-off between compression and fidelity: lower rate uses fewer effective codes, while lower distortion preserves more information.
Here, cluster assignments are the compressed representation, the entropy $H(C)$ is the rate, and the semantic and mechanistic reconstruction errors are the two distortion terms.

\paragraph{Problem formulation.}
We represent each $n$-th continuation as $(e_n, a_n, P_n)$.
Each cluster $k \in \{1, \ldots, K\}$ has centers $(\vmu_k^{(e)}, \vmu_k^{(a)})$ and prior $\pi_k = W_k / \sum_{k'} W_{k'}$, where $W_k = \sum_{n: c(n)=k} P_n$ is the total probability mass in cluster $k$ ($c(n)$ is the assignment function which returns the cluster index $k$ where continuation $n$ is assigned).

\paragraph{Rate term.}
The rate term measures model complexity---how many bits are needed to encode cluster assignments.
We use the entropy of the cluster prior:
\begin{equation}
\formulaEntropy
\label{eq:entropy}
\end{equation}
where $C$ denotes the assignment configuration over all continuations and $\Pbar_k = W_k / \sum_{k'} W_{k'}$.
Lower entropy means fewer effective clusters and a simpler clustering structure.

\paragraph{Distortion terms.}
The distortion terms measure representation error---how well cluster centers summarize their members.
We define probability-weighted distortions in both views:
\begin{align}
\formulaSemanticDistortion \label{eq:semantic-distortion} \\
\formulaAttributionDistortion \label{eq:attribution-distortion}
\end{align}
Semantic distortion uses L2 distance; attribution distortion uses L1, which is robust to outliers in sparse vectors.

\paragraph{Joint objective.} We set the clustering objective as follows:
\begin{equation}
\boxed{
\formulaRDObjective
}
\label{eq:rd-objective}
\end{equation}
We parameterize $\beta_e = \gamma \beta$ and $\beta_a = (1 - \gamma) \beta$, where $\beta > 0$ controls the rate-distortion tradeoff ($\beta \to 0$ yields $K=1$; $\beta \to \infty$ yields $K=N$) and $\gamma \in [0, 1]$ balances two views.

\paragraph{Optimization.}
We optimize \cref{eq:rd-objective} via alternating minimization.
The \textbf{E-step} updates assignments:
\begin{equation}
\formulaAssignment
\label{eq:assignment}
\end{equation}
where the $-\log \Pbar_k$ term acts as an entropic prior favoring larger clusters.
The \textbf{M-step} updates centers:
\begin{equation}
\formulaCenterUpdate
\label{eq:centers}
\end{equation}

where $\mathrm{wmedian}$ denotes the \emph{coordinate-wise} weighted median, which is a minimizer of the sum of L1 distance.

\paragraph{Adaptive operations.}
Starting from $K=1$, we adaptively adjust the number of clusters through two operations after each E- and M-step:
\textbf{Split} discovers new structure by dividing heterogeneous clusters, while \textbf{Junk} removes spurious clusters that do not capture meaningful patterns.

\begin{proposition}[Split Criterion]
\label{prop:split}
Cluster $k$ should split into $(k_1, k_2)$ with mass fractions $(\alpha, 1-\alpha)$ if:
\begin{equation}
\formulaSplitCriterion
\label{eq:split-criterion}
\end{equation}
where $\Hbin(\alpha)$ is binary entropy and $\Delta D_k^{(\cdot)}$ is the distortion reduction.
\end{proposition}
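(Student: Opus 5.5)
The plan is to show that, under the split criterion, the objective $\LRD$ in \cref{eq:rd-objective} strictly decreases when the centers of the two children are held fixed. I will compute the change in the rate term and the change in the distortion terms separately and compare them. The natural reading of ``should split'' is exactly this: the split lowers $\LRD$.

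\textbf{Rate term.} Write $\Pbar_k$ for the prior of cluster $k$ before the split. After the split, cluster $k$ is replaced by children $k_1,k_2$ with priors $\alpha\Pbar_k$ and $(1-\alpha)\Pbar_k$, and all other $\Pbar_{k'}$ are unchanged. Only the term $-\Pbar_k\log\Pbar_k$ in \cref{eq:entropy} is affected. It is replaced by
\begin{equation*}
-\alpha\Pbar_k\log(\alpha\Pbar_k)-(1-\alpha)\Pbar_k\log((1-\alpha)\Pbar_k).
\end{equation*}
Using $\log(xy)=\log x+\log y$ and $\alpha+(1-\alpha)=1$, this equals $-\Pbar_k\log\Pbar_k+\Pbar_k\Hbin(\alpha)$. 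So the rate increases by exactly $\Pbar_k\Hbin(\alpha)$. This is the grouping property of entropy.

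\textbf{Distortion terms.} Observe that $\Pbar_k\Var_k^{(e,w)} = \frac{W_k}{\Wtotal}\cdot\frac{1}{W_k}\sum_{n:c(n)=k}P_n\|e_n-\vmu_k^{(e)}\|_2^2$. This equals $\frac{1}{\Wtotal}\sum_{n:c(n)=k}P_n\|e_n-\vmu_k^{(e)}\|_2^2$, so $D^{(e)}$ is additive over clusters. The same holds for $D^{(a)}$ with the $L_1$ norm. Hence splitting changes only cluster $k$'s contribution.

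I define $\Delta D_k^{(e)}$ as cluster $k$'s contribution before the split minus the sum of the two children's contributions, with each child using its own center; $\Delta D_k^{(a)}$ is defined the same way. By this definition, the total distortion decreases by $\beta_e\Delta D_k^{(e)}+\beta_a\Delta D_k^{(a)}$.

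I will also note why this reduction is nonnegative once the children are given M-step centers (\cref{eq:centers}). The weighted mean minimizes weighted squared $L_2$ error, and the coordinate-wise weighted median minimizes weighted $L_1$ error. The parent center is therefore a feasible (suboptimal) center choice for each child, so the children's optimal distortion is no larger than the parent's.

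\textbf{Conclusion.} Combining the two pieces,
\begin{equation*}
\LRD^{\text{after}}-\LRD^{\text{before}}=\Pbar_k\Hbin(\alpha)-\bigl(\beta_e\Delta D_k^{(e)}+\beta_a\Delta D_k^{(a)}\bigr).
\end{equation*}
This is negative exactly when \cref{eq:split-criterion} holds, so under that condition the split lowers the objective.

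The main obstacle is stating precisely what $\Delta D_k$ is and making sure the $\Pbar_k$ normalization matches. The rate change carries a factor $\Pbar_k$, so the distortion reduction must be measured on the same global normalization, i.e.\ relative to $\Wtotal$. I will spell this out using the additive identity above. I will also remark that the claim concerns a single split with fixed assignments; later E-steps can only decrease $\LRD$ further, because the alternating minimization is monotone.
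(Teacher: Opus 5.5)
Your proposal is correct and follows the paper's own derivation. The grouping property of entropy gives the rate increase $\Pbar_k \Hbin(\alpha)$, and the distortion change is confined to cluster $k$: your identity $\Pbar_k \Var_k^{(e,w)} = \Wtotal^{-1}\sum_{n:c(n)=k} P_n\|e_n-\vmu_k^{(e)}\|_2^2$ makes explicit what the paper leaves implicit in its ``other terms'' and in $\alpha\Pbar_k\Var_1^{(e,w)}+(1-\alpha)\Pbar_k\Var_2^{(e,w)}$, and comparing the two changes yields the criterion as $\Delta\LRD<0$ (your remark that $\Delta D_k^{(\cdot)}\geq 0$ under M-step centers is correct but not needed).
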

\begin{proposition}[Junk Criterion]
\label{prop:junk}
Cluster $k$ should be removed and the members are reassigned if:
\begin{equation}
\formulaJunkCriterion
\label{eq:junk-criterion}
\end{equation}
where $\Delta \Hent > 0$ is the entropy reduction and $\Delta D^{(\cdot)} > 0$ is the distortion increase.
\end{proposition}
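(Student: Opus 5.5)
The natural reading of the Junk Criterion is a descent condition on $\LRD$ (Eq.~\eqref{eq:rd-objective}). The plan is therefore to compare the objective before and after removing cluster $k$, and to show that removal strictly lowers $\LRD$ exactly when the stated inequality holds. I would fix all other clusters and their centers. Let $C$ be the current configuration and $C'$ the configuration obtained by deleting cluster $k$ and reassigning each of its members $n$ to some $k'\neq k$. The natural choice is the E-step rule of Eq.~\eqref{eq:assignment} restricted to the surviving clusters. After reassignment I would allow the M-step of Eq.~\eqref{eq:centers} to recompute the centers of the receiving clusters. I would then define
\[
\Delta \Hent = \Hent(C)-\Hent(C'),\qquad \Delta D^{(\cdot)} = D^{(\cdot)}(C')-D^{(\cdot)}(C).
\]
With these definitions the change in the objective is exactly
\[
\LRD(C')-\LRD(C) = -\Delta\Hent + \beta_e\Delta D^{(e)} + \beta_a \Delta D^{(a)}.
\]
This is negative if and only if Eq.~\eqref{eq:junk-criterion} holds, so the proposition follows once the quantities are well defined.

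The main step is to justify the signs asserted in the statement. For the rate term, deleting $k$ merges its mass $\Pbar_k$ into the surviving clusters. This is a coarsening of the partition. Entropy of a distribution does not increase under merging outcomes, because $\Hent(C)=\Hent(C')+\Hent(C\mid C')$ with the conditional entropy nonnegative. It strictly decreases whenever $\Pbar_k>0$ and members go to clusters of positive mass. This gives $\Delta\Hent>0$. For the distortions, I would use the fact that, with all assignments fixed, the weighted mean minimizes the weighted squared-$L_2$ sum and the coordinate-wise weighted median minimizes the weighted $L_1$ sum. The current configuration with its own centers is optimal for $k$'s members, so forcing them onto other centers cannot lower the total weighted distortion. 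I also need the identity $\Pbar_k\Var_k^{(\cdot,w)}=\frac{1}{\Wtotal}\sum_{n:c(n)=k}P_n\|\cdot\|$, which turns $D^{(\cdot)}$ into a plain weighted sum over points.

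The step I expect to be the main obstacle is the distortion sign. The unconditional claim $\Delta D^{(\cdot)}\ge 0$ compares two different partitions, and a merged partition with re-optimized centers can in principle have lower distortion only if the original configuration was not a fixed point. I would therefore state the result at a fixed point of the alternating minimization, or simply take $\Delta D>0$ as the given hypothesis, as the statement does. In either case, the criterion reduces to the exact objective-decrease identity displayed above.

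Finally, as a consistency check, I would note that this criterion is the mirror image of Proposition~\ref{prop:split}. A split of mass fraction $\alpha$ raises the entropy by exactly $\Pbar_k\Hbin(\alpha)$ by the grouping property. Reversing a split that fails Eq.~\eqref{eq:split-criterion} is therefore a junk move satisfying Eq.~\eqref{eq:junk-criterion}.
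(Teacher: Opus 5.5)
Your core argument is exactly the paper's proof: define $\Delta\Hent=\Hent(C)-\Hent(C')$ and $\Delta D^{(\cdot)}=D^{(\cdot)}(C')-D^{(\cdot)}(C)$, write $\LRD(C')-\LRD(C)=-\Delta\Hent+\beta_e\Delta D^{(e)}+\beta_a\Delta D^{(a)}$, and rearrange. That identity already establishes the criterion, whatever the signs of these quantities turn out to be. The paper measures the distortion change against the existing centers of the receiving clusters rather than re-running the M-step. Either choice works, as long as $\Delta D$ is computed from the configuration actually evaluated. The paper does not prove the signs in the ``where'' clause; it asserts them as descriptions of the move.

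The part you call the main step, the sign justification, does not go through.

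\emph{Entropy.} Under the E-step rule restricted to the surviving clusters, the members of $k$ are generally spread over several clusters. Then $C'$ is not a coarsening of $C$ (it is not a function of $C$), so the chain rule $\Hent(C)=\Hent(C')+\Hent(C\mid C')$ is unavailable. The entropy can in fact go up. Take masses $(0.1,0.1,0.8)$, remove the third cluster, and split its mass evenly to get $(0.5,0.5)$: $\Hent$ rises from about $0.64$ to $\ln 2\approx 0.69$ nats. This refutes your claim that it strictly decreases whenever members go to clusters of positive mass.

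\emph{Distortion.} M-step optimality only says $\vmu_k$ is the best \emph{single} center for $k$'s members. It says nothing about sending different members to different surviving centers, each of which can be closer. Restricting to a fixed point does not rescue this, because of the rate term in the assignment rule. Write $d(n,k)=\beta_e\|e_n-\vmu_k^{(e)}\|_2^2+\beta_a\|a_n-\vmu_k^{(a)}\|_1$. The fixed-point condition $-\log\Pbar_k+d(n,k)\le-\log\Pbar_j+d(n,j)$ guarantees $d(n,j)\ge d(n,k)$ only when $\Pbar_j\ge\Pbar_k$.

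There are two ways to repair this. One is to keep the signs as hypotheses, which is effectively what the statement and the paper do. The other is to restrict to the single-recipient case, where both of your arguments become correct. If all of $k$ is merged into one cluster $j$, the grouping property gives $\Delta\Hent=(\Pbar_j+\Pbar_k)\,\Hbin\bigl(\Pbar_k/(\Pbar_j+\Pbar_k)\bigr)>0$. With M-step-optimal old centers, re-optimizing the merged center gives $\Delta D^{(\cdot)}\ge 0$ in each view, since the merged center is a feasible choice for each old cluster separately. That single-recipient case is also the only one in which your ``mirror of the split'' remark is exact.
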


Intuitively, a cluster splits when it contains distinct subgroups whose separation reduces representation error more than the cost of adding a new cluster. 
For Junk operation, a cluster is removed when it captures noise or outliers rather than a coherent mode.
In detail, we use PCA-based initialization for Split operation (Appendix~\ref{app:pca-split}).
Overall details for derivations are in Appendix~\ref{app:derivations}.

\begin{proposition}[Convergence]
\label{prop:convergence}
The algorithm produces a monotonically non-increasing sequence $\{\LRD^{(t)}\}$ that converges (see Appendix~\ref{app:convergence-proof}).
\end{proposition}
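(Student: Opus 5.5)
The plan is a coordinate-descent argument. The key move is to rewrite $\LRD$ as a function of the assignment $c$, the centers $\vmu$, and an auxiliary prior $\pi$, so that every operation of the algorithm is an exact or inequality-preserving minimization. Then $\LRD$ cannot increase, it is bounded below, and there are only finitely many assignments.

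\textbf{Step 1 (reformulation).} Since $\Pbar_k\Var_k^{(\cdot,w)} = (W_k/\Wtotal)\,W_k^{-1}\sum_{n:c(n)=k}P_n\|\cdot\|$, both distortions are $\Wtotal^{-1}\sum_n P_n\,d(x_n,\vmu_{c(n)})$. Likewise $\Hent(C)=-\sum_k\Pbar_k\log\Pbar_k = -\Wtotal^{-1}\sum_n P_n\log\Pbar_{c(n)}$. I therefore define
\[
J(c,\vmu,\pi)=\frac{1}{\Wtotal}\sum_n P_n\Bigl[-\log\pi_{c(n)}+\beta_e\|e_n-\vmu^{(e)}_{c(n)}\|_2^2+\beta_a\|a_n-\vmu^{(a)}_{c(n)}\|_1\Bigr].
\]
By Gibbs' inequality, $\min_\pi J = \LRD(c,\vmu)$, attained at $\pi=\Pbar(c)$, with centers held fixed.

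\textbf{Step 2 (each step is descent).} I check the three operations in turn.
\begin{itemize}[leftmargin=*]
\item \emph{E-step.} Eq.~\eqref{eq:assignment} minimizes $J$ over $c$ pointwise, holding $\pi=\Pbar$ and $\vmu$ fixed, so $J$ does not increase.
\item \emph{Prior re-estimation.} Recomputing $\Pbar$ from the new $c$ minimizes $J$ over $\pi$, which brings $J$ back to $\LRD$.
\item \emph{M-step.} Eq.~\eqref{eq:centers} minimizes each cluster's weighted sum separately. The weighted mean minimizes the weighted squared $L_2$ error, and the coordinatewise weighted median minimizes the weighted $L_1$ error, since the $L_1$ objective separates across coordinates. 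Neither changes $c$, so $\LRD$ does not increase.
\end{itemize}
Chaining these gives $\LRD^{(t+1)}\le\LRD^{(t)}$ within an E/M sweep.

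\textbf{Step 3 (Split/Junk).} I would argue that these operations are only accepted when they lower $\LRD$, which is what Propositions~\ref{prop:split} and~\ref{prop:junk} encode.
\begin{itemize}[leftmargin=*]
\item \emph{Split.} Splitting cluster $k$ into fractions $(\alpha,1-\alpha)$ raises entropy by exactly $\Pbar_k\Hbin(\alpha)$, by the grouping property of entropy, and lowers distortion by $\beta_e\Delta D^{(e)}_k+\beta_a\Delta D^{(a)}_k$. So Eq.~\eqref{eq:split-criterion} is exactly the condition that $\LRD$ strictly decreases.
\item \emph{Junk.} Removing a cluster changes $\LRD$ by $-\Delta\Hent+\beta_e\Delta D^{(e)}+\beta_a\Delta D^{(a)}$, which is negative under Eq.~\eqref{eq:junk-criterion}.
\end{itemize}
Both conditions must be checked using the actual post-operation assignment and centers. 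If the implementation evaluates the criteria on proposed configurations, I will assume proposals are rejected unless the test holds.

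\textbf{Step 4 (convergence).} The objective is bounded below, since $\Hent\ge 0$ and both distortions are nonnegative. A monotone non-increasing sequence that is bounded below converges. For finite termination I would add a second argument. There are finitely many assignments, at most $N^N$, and for a given $c$ the M-step value is determined, so $\LRD$ after each M-step takes finitely many values. With strict decrease required for any change, the configuration stabilizes after finitely many iterations.

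\textbf{Main obstacle.} I expect two places to need care. The first is Step 2's interplay between the E-step, which uses the old $\Pbar$, and the entropy term, which uses the new one; the auxiliary-$\pi$ trick is what resolves this. The second is ties and non-unique medians. I would handle these with a fixed tie-breaking rule that keeps the current assignment unless the improvement is strict, which rules out cycling.
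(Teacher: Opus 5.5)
Your argument is correct and has the same skeleton as the paper's proof. Both run the chain $L^{(t,0)}\ge L^{(t,1)}\ge L^{(t,2)}\ge L^{(t,3)}=L^{(t+1,0)}$ through the E-step, M-step and Split/Junk, then use $\LRD\ge 0$ and monotone convergence.

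The main difference is how the first link is justified. The paper only cites ``optimality of the assignment update.'' But Eq.~\eqref{eq:assignment} freezes $\Pbar$ at the old assignment while $\Hent(C)$ is evaluated at the new one, so the E-step is not in general a minimizer of $\LRD$ over assignments. The paper's appendix derivation of the E-step glosses over this non-separability of the rate term. Your auxiliary prior with Gibbs' inequality supplies exactly the missing step:
$\LRD(c^{\mathrm{new}},\vmu)=\min_\pi J(c^{\mathrm{new}},\vmu,\pi)\le J(c^{\mathrm{new}},\vmu,\Pbar^{\mathrm{old}})\le J(c^{\mathrm{old}},\vmu,\Pbar^{\mathrm{old}})=\LRD(c^{\mathrm{old}},\vmu)$.
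Rewriting the distortions as $\Wtotal^{-1}\sum_n P_n\,d(\cdot)$ also removes the $0\cdot W_k^{-1}$ ambiguity for clusters the E-step empties.

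Reading $\Delta\Hent$ and $\Delta D$ in the Junk test as signed actual changes is also the right choice. The paper's side condition $\Delta\Hent>0$ can fail when a removed cluster's members are spread over several clusters. For example, removing the mass-$0.8$ cluster of $(0.1,0.1,0.8)$ and splitting its members evenly gives $(0.5,0.5)$, which raises the entropy.

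Your finite-termination argument proves more than the proposition states; the paper establishes only convergence of the objective values. The argument rests on three facts: the post-M-step value depends only on the partition, there are finitely many partitions, and the value strictly decreases whenever anything changes, given a keep-current tie-break and a deterministic median.

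The paper's version buys brevity; yours makes every inequality in the chain actually hold.
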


The overall algorithm is presented in Algorithm \ref{alg:rd-clustering}. The per-iteration cost is $O(NKd)$ where $d = \max(d_e, d_a)$.

\begin{algorithm}[H]
\caption{Rate-Distortion Clustering}
\label{alg:rd-clustering}
\begin{algorithmic}[1]
\REQUIRE continuations $\{(e_n, a_n, P_n)\}_{n=1}^N$, parameters $(\beta, \gamma)$
\ENSURE assignments $\{c(n)\}$, centers $\{(\vmu_k^{(e)}, \vmu_k^{(a)})\}$
\STATE Initialize: $K \gets 1$, assign all points to a single cluster
\REPEAT
    \STATE \textbf{E-step:} Update assignments via \cref{eq:assignment}
    \STATE \textbf{M-step:} Update centers via \cref{eq:centers}
    \STATE \textbf{Split:} For each cluster, split if \cref{eq:split-criterion} holds
    \STATE \textbf{Junk:} Remove clusters satisfying \cref{eq:junk-criterion}
\UNTIL{convergence}
\STATE \textbf{return} assignments, centers
\end{algorithmic}
\end{algorithm}

\section{Analysis of Rate-Distortion Dynamics}
\label{sec:clustering_analysis}

In this section, we empirically validate three properties of the proposed rate-distortion framework: (1)~$\gamma$ controls structural alignment between views, (2)~$\beta$ controls cluster resolution, which could show structures with different levels of granularity, and (3)~adaptive operations (split/junk) achieve better rate-distortion tradeoffs than fixed-$K$ baselines.

\paragraph{Experimental setup.}
We validate our framework using \textbf{Qwen3-8B} \citep{yang2025qwen3} on \textbf{AmbigQA}~\citep{min-etal-2020-ambigqa}, where ambiguity in questions requires the model to resolve multiple valid interpretations.
We evaluate on 300 questions, treating each question as the input and the model’s sampled answers as continuations; for each question, we generate 500 distinct responses.
We then use \textbf{EmbeddingGemma}~\cite{vera2025embeddinggemmapowerfullightweighttext} as a semantic embedding model, \texttt{mwhanna/qwen3-8b-transcoders} as a mechanism (attribution) embedding model.
After rollout, we use $\beta \in [1.0, 2.0]$, $\gamma \in [0.1, 0.9]$ as $RD$-clustering parameters.
The results on {Qwen3-4B} are in Appendix~\ref{app:qwen3-4b-repl}.
While our main results use \textbf{Qwen3-8B} on \textbf{AmbigQA}, we additionally report results on \textbf{MMLU}~\citep{hendrycks2021measuring} using \textbf{Qwen3} models a \textbf{Gemma3} model~\citep{gemmateam2025gemma3technicalreport} in Appendix~\ref{app:extened-results}.
\subsection{\(\beta\)–\(\gamma\) Sweep Under the Rate-Distortion Objective}
\label{sec:analysis:rd_table}

\begin{table*}[t]
\centering
\scriptsize
\renewcommand{\arraystretch}{0.90}
\setlength{\tabcolsep}{3.2pt}

\begin{tabular}{l c *{4}{c c c c}}
\toprule
\multicolumn{2}{c}{} & \multicolumn{16}{c}{$\beta$} \\
\cmidrule(lr){3-18}
Method & $\gamma$
& \multicolumn{4}{c}{0.50}
& \multicolumn{4}{c}{0.75}
& \multicolumn{4}{c}{1.00}
& \multicolumn{4}{c}{1.25} \\
\cmidrule(lr){3-6}\cmidrule(lr){7-10}\cmidrule(lr){11-14}\cmidrule(lr){15-18}
  & 
  & $H$ & $D^{(e)}$ & $D^{(a)}$ & $D_{\gamma}$
  & $H$ & $D^{(e)}$ & $D^{(a)}$ & $D_{\gamma}$
  & $H$ & $D^{(e)}$ & $D^{(a)}$ & $D_{\gamma}$
  & $H$ & $D^{(e)}$ & $D^{(a)}$ & $D_{\gamma}$ \\
\midrule

\multirow{5}{*}{\textbf{RD}}
& 0.10
& 1.70 & 0.19 & 9.68 & \textbf{8.73}
& 2.29 & 0.17 & 7.97 & \textbf{7.19}
& 2.60 & 0.16 & 7.22 & \textbf{6.52}
& 2.69 & 0.16 & 7.01 & \textbf{6.33} \\
& 0.30
& 1.29 & 0.21 & 11.12 & \textbf{7.85}
& 1.99 & 0.18 & 8.71 & \textbf{6.15}
& 2.36 & 0.17 & 7.75 & \textbf{5.48}
& 2.60 & 0.16 & 7.22 & \textbf{5.10} \\
& 0.50
& 0.75 & 0.23 & 13.56 & \textbf{6.89}
& 1.40 & 0.20 & 10.73 & \textbf{5.46}
& 1.87 & 0.18 & 9.15 & \textbf{4.66}
& 2.28 & 0.17 & 7.96 & \textbf{4.06} \\
& 0.70
& 0.28 & 0.25 & 16.33 & \textbf{5.07}
& 0.62 & 0.23 & 14.20 & \textbf{4.42}
& 1.11 & 0.21 & 11.91 & \textbf{3.72}
& 1.47 & 0.20 & 10.44 & \textbf{3.27} \\
& 0.90
& 0.00 & 0.26 & 18.85 & \textbf{2.12}
& 0.03 & 0.26 & 18.41 & \textbf{2.07}
& 0.08 & 0.25 & 17.93 & \textbf{2.02}
& 0.16 & 0.25 & 17.19 & \textbf{1.94} \\
\midrule

\multirow{5}{*}{\textbf{KM-S}}
& 0.10
& 1.65 & 0.14 & 43.61 & 39.27
& 2.16 & 0.11 & 40.11 & 36.11
& 2.35 & 0.10 & 38.78 & 34.91
& 2.40 & 0.10 & 38.72 & 34.86 \\
& 0.30
& 1.34 & 0.15 & 45.43 & 31.85
& 1.93 & 0.12 & 41.50 & 29.09
& 2.20 & 0.11 & 39.78 & 27.88
& 2.35 & 0.10 & 38.86 & 27.24 \\
& 0.50
& 0.88 & 0.18 & 49.15 & 24.66
& 1.42 & 0.15 & 44.95 & 22.55
& 1.83 & 0.13 & 42.07 & 21.10
& 2.15 & 0.11 & 39.85 & 19.98 \\
& 0.70
& 0.65 & 0.19 & 51.00 & 15.43
& 0.81 & 0.18 & 49.85 & 15.08
& 1.18 & 0.16 & 46.90 & 14.18
& 1.49 & 0.14 & 44.40 & 13.42 \\
& 0.90
& 0.59 & 0.19 & 51.40 & 5.31
& 0.59 & 0.19 & 51.40 & 5.31
& 0.59 & 0.19 & 51.40 & 5.31
& 0.62 & 0.19 & 51.08 & 5.28 \\
\midrule

\multirow{5}{*}{\textbf{KM-A}}
& 0.10
& 1.67 & 0.18 & 31.32 & 28.20
& 2.17 & 0.16 & 26.50 & 23.87
& 2.37 & 0.16 & 24.97 & 22.49
& 2.43 & 0.16 & 24.74 & 22.28 \\
& 0.30
& 1.34 & 0.19 & 34.93 & 24.51
& 1.94 & 0.17 & 28.43 & 19.95
& 2.22 & 0.16 & 26.13 & 18.34
& 2.37 & 0.16 & 25.03 & 17.57 \\
& 0.50
& 0.88 & 0.21 & 40.62 & 20.41
& 1.44 & 0.19 & 33.73 & 16.96
& 1.83 & 0.18 & 29.48 & 14.83
& 2.17 & 0.17 & 26.32 & 13.24 \\
& 0.70
& 0.65 & 0.21 & 44.28 & 13.43
& 0.82 & 0.21 & 41.57 & 12.62
& 1.19 & 0.20 & 36.49 & 11.09
& 1.50 & 0.19 & 32.87 & 9.99 \\
& 0.90
& 0.60 & 0.22 & 45.08 & 4.70
& 0.60 & 0.22 & 45.08 & 4.70
& 0.60 & 0.22 & 45.08 & 4.70
& 0.61 & 0.22 & 44.81 & 4.68 \\
\bottomrule
\end{tabular}
\vspace{0.1in}
\caption{\textbf{RD vs.\ K-means at matched rate $H$.} For each ($\beta$, $\gamma$) configuration, we find the K-means $K$ that yields entropy closest to the RD result, enabling a fair rate-matched comparison. We report $D_{\gamma}=\gamma D^{(e)} + (1-\gamma)D^{(a)}$. Bold indicates lowest $D_{\gamma}$. \textbf{KM-S}: K-means on semantic embedding space only; \textbf{KM-A}: K-means on attribution embedding space only. RD wins 94\% of comparisons.}
\label{tab:rd-vs-kmeans-fixed-rate}
\end{table*}

We evaluate the rate $H$, the distortions $D^{(e)}$ and $D^{(a)}$, and the mixed distortion $D_{\gamma} = \gamma D^{(e)} + (1-\gamma)D^{(a)}$ after RD clustering.
We compare against K-means baselines clustered separately in each view, denoted KM-S and KM-A. The comparison is \emph{rate-matched}: for each RD solution, we choose the baseline cluster count whose assignment entropy $H(C)$ is closest to that of the RD clustering.
As shown in Table~\ref{tab:rd-vs-kmeans-fixed-rate}, RD clustering achieves the lowest mixed distortion in \textbf{94\%} of settings.
By contrast, KM-S and KM-A exhibit complementary failure modes: KM-S preserves semantic coherence but incurs high mechanistic distortion, whereas KM-A reduces mechanistic distortion at the cost of semantic coherence.
The advantage is most pronounced at intermediate $\gamma$, where both views are important, although RD remains competitive near the single-view extremes.
\vspace{-0.1in}
\subsection{$\beta$ for Different Granularity of Structures}
\label{sec:analysis:beta}

\begin{figure}[t]
  \centering
  \includegraphics[width=\linewidth]{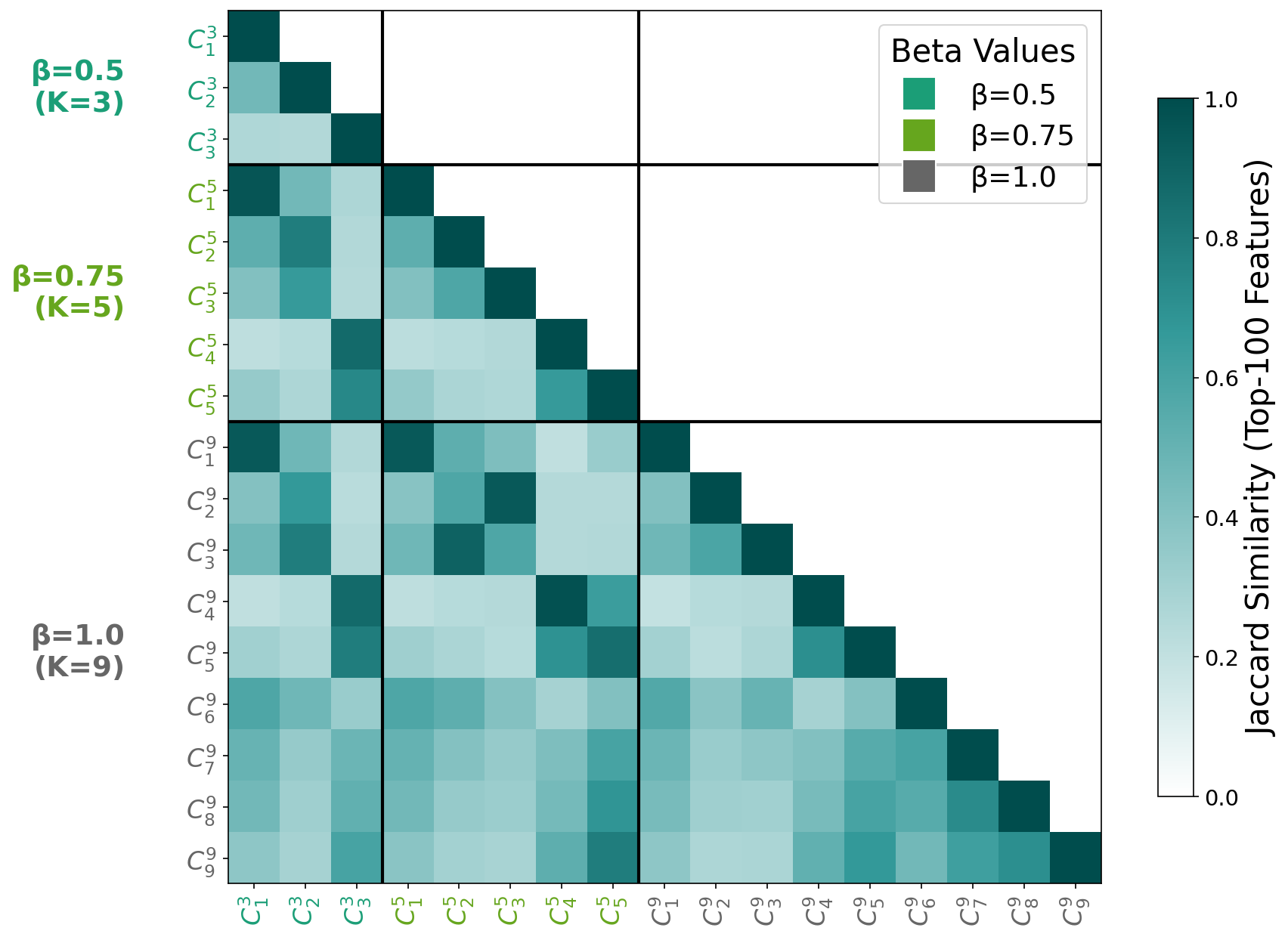}
\caption{\textbf{Cluster mechanistic similarity across $\beta$ values.} Jaccard similarity of top-100 mechanistic features between clusters at $\beta \in \{0.5, 0.75, 1.0\}$ for the prompt ``Who sang the song `If god was one of us?'''. 
The notation $C_i^{K}$ denotes cluster index $i$ among the $K$ clusters produced at a given $\beta$.
Higher cross-$\beta$ similarity reflects parent--child structure across resolutions; a score-labeled version is provided in Fig.~\ref{fig:jaccard_number}.}
  \label{fig:jaccard_beta}
\end{figure}

We examine how annealing $\beta$ can be helpful to discover meaningful behaviors with various granularity, using a representative example (Figure~\ref{fig:jaccard_beta}).
For the prompt ``\text{Who sang the song `If god was one of us?'}'', we first sample 500 continuations, and cluster with our $RD$-clustering method with a fixed $\gamma (=0.5)$ annealing $\beta$ from $\{0.5, 0.75, 1.0\}$.
Then, for each cluster, we normalize each attribution center with the mean of centers with the same $\beta$ (\textit{i.e.}, $\tilde{\mu}^{(a)}_i = \mu^{(a)}_i - \frac{1}{K} \sum_{k} \mu^{(a)}_k$), to specify distinct features for each cluster.
Finally, we compute the Jaccard similarity of features with top-100 magnitude, which measures the ratio of co-activated features.
In this section, we use a notation $C_i^k$, where $i$ indicates the index of the cluster within the $\beta$, and $k$ as the total number of clusters with the same $\beta$, and higher $k$ implies higher $\beta$.

At $\beta = 0.5$ ($K=3$), the algorithm identifies three coarse \textbf{stylistic} modes rather than semantic distinctions:
\begin{itemize}[nosep,leftmargin=*]
    \item \textbf{$C_1^3$}: Formatted solo artist---\emph{``The song \textbf{``If God Was One of Us''} was performed by \textbf{Gloria Gaynor}''}
    \item \textbf{$C_2^3$}: Band/group with elaboration---\emph{``...performed by \textbf{The Highwaymen}, a country music supergroup consisting of \textbf{Johnny Cash, Willie Nelson}...''}
    \item \textbf{$C_3^3$}: Minimal formatting---\emph{``The song ``If God Was One of Us'' was performed by \textbf{Dusty Springfield}''} (no bold on song title)
\end{itemize}

\noindent Critically, the \emph{semantic content} (artist names) varies widely within each cluster---the model hallucinates many different artists---but the \emph{generation style} determines cluster membership. This results reveals that factual recall is more fine-grained rather than sentence styles.

At $\beta = 0.75$ ($K=5$), finer distinctions emerge. The Jaccard heatmap reveals the split structure: $C_2^3$ (band elaboration) from $\beta=0.5$ shows high similarity with both $C^5_2$ ($=$0.79)  and $C^5_3$ ($=$0.65) at $\beta=0.75$, indicating that $C_2^3$ splits into:
\begin{itemize}[nosep,leftmargin=*]
    \item \textbf{$C_2^5$}: Supergroup member enumeration---\emph{``...performed by \textbf{The Highwaymen}, a supergroup consisting of \textbf{Johnny Cash, Willie Nelson, Waylon Jennings}...''}
    \item \textbf{$C_3^5$}: Genre-based band descriptions---\emph{``...performed by \textbf{The Temptations}, a legendary Motown group''}
\end{itemize}
Meanwhile, $C_1^3$ (solo artist) remain relatively stable, each mapping to a single dominant cluster at $\beta=0.75$ ($C_x^5$), visible as a distinct high-similarity cell among clusters.

At $\beta = 1.0$ ($K=9$), the algorithm further resolves earlier clusters which represents the more subtle concepts. The heatmap shows further splitting on $C_5^5$ into $C_4^9$ to $C_9^9$, which varies group names ($C_5^9$) and Solo singer with first/last name ($C_9^9$) as similar to split from $C^1_2$ into $C^5_2$ and $C^5_3$ (but with no formatting on the song title).

To summarize, the heatmap quantifies this hierarchical structure: most clusters within the same $\beta$ share $< 50\%$ of their top mechanistic features, but with elevated values ($> 0.6$) along parent-child lineages.
Overall results confirm that successive splits activate genuinely distinct neural pathways, which show the effectiveness in granularity by annealing $\beta$.
\vspace{-0.5cm}

\subsection{$\gamma$ for Balance Between Mechanism and Semantic}
\label{sec:analysis:gamma}
\begin{figure*}[t]
  \centering
  \includegraphics[width=0.95\linewidth]{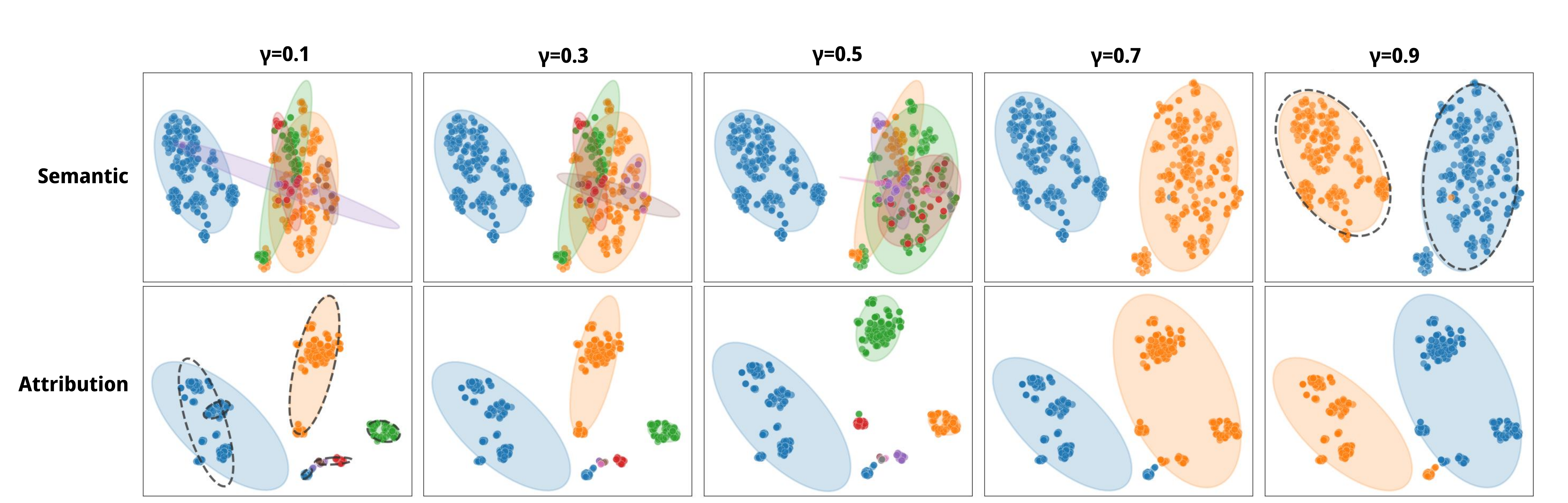}
\caption{\textbf{Cluster structure dynamics across $\gamma$ values.} Each plots are drawn by sweeping $\gamma \in [0.1,0.9]$, keeping consistent rate by varying $\beta ~\approx 0.9$. Dashed ellipses indicate K-means boundaries in each space with the same number of clusters $k$ as $RD$-clustering.}
  \label{fig:gamma}    
\vspace{-0.5cm}
\end{figure*}
The hyperparameter $\gamma$ governs the trade-off between semantic distortion $D_e$ and attribution distortion $D_a$ in the joint objective $D_\gamma = \gamma D_e + (1-\gamma) D_a$. Ideally, we would demonstrate this effect at fixed rate $H$; however, constraining the rate while performing cluster reassignment and adaptive operations is intractable. Instead, we visualize the qualitative effect of $\gamma$ on clustering structure across both representation spaces. 
Figure~\ref{fig:gamma} shows the example that cluster assignments projected onto semantic (top) and attribution (bottom) spaces for varying $\gamma$ at fixed $\beta=2$. At low $\gamma$ (e.g., $\gamma=0.1$), the objective prioritizes attribution distortion, yielding clusters that are compact in attribution space but potentially overlapped in semantic space (dashed ellipse, bottom-left). This produces fine-grained partitions that group generations by shared computational mechanisms, regardless of their surface-level semantic similarity.

Conversely, at high $\gamma$ (\textit{e.g.}, $\gamma=0.9$), semantic distortion dominates, resulting in clusters that are similar to K-means with semantic embedding space but may span noisy attribution patterns (dashed ellipse, top-right). The number of clusters also decreases, as semantically similar generations are grouped together even when they arise from different underlying mechanisms. 
At intermediate values ($\gamma=0.5$), the clustering achieves a balance, producing partitions that are reasonably coherent in both spaces. This demonstrates that $\gamma$ provides a continuous control for navigating the trade-off between mechanism-based and embedding-based views of generation diversity---a flexibility unavailable to single-space clustering methods like K-means.

\begin{figure}[!t]
  \centering
  \includegraphics[width=0.85\linewidth]{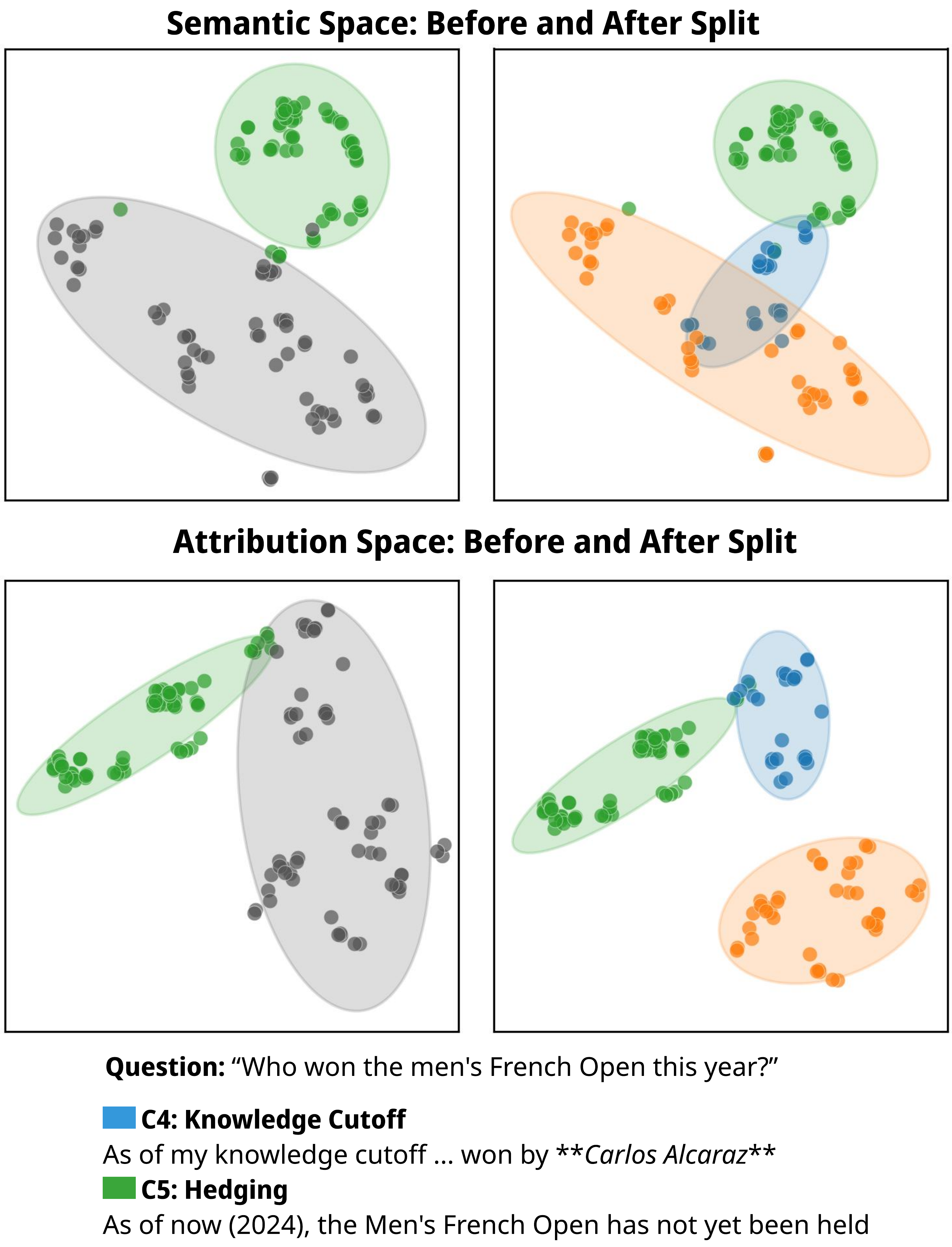}
  \caption{\textbf{Split operation example.} Top: semantic embedding view; bottom: mechanistic (attribution) view. A coarse cluster is split into two tighter subclusters, separating cluster C4 (\emph{knowledge-cutoff + answer}) from cluster C5 (\emph{as-of-now} framing).}
  \label{fig:split}
\end{figure}

\begin{figure}[!t]
  \centering
  \includegraphics[width=0.85\linewidth]{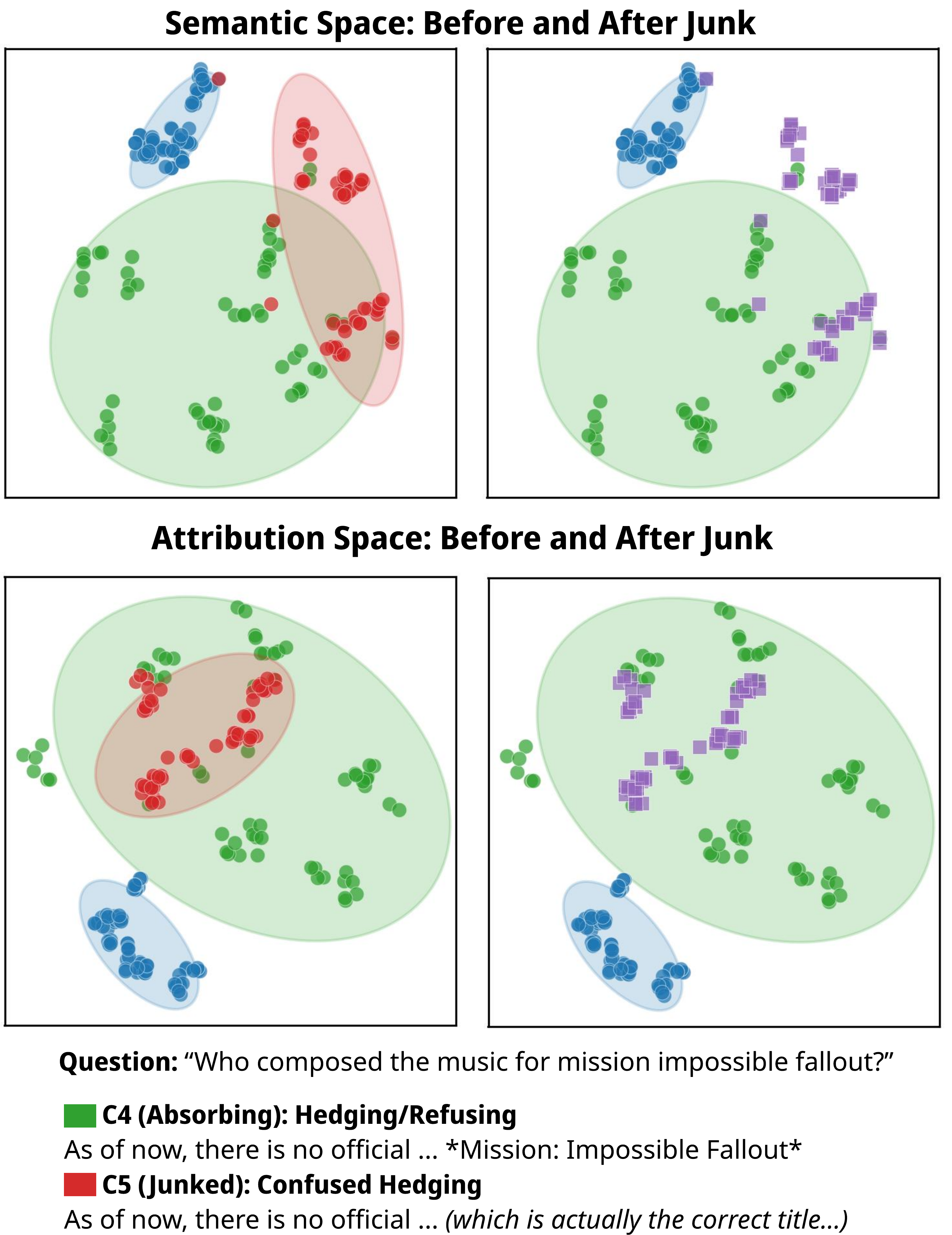}
  \caption{\textbf{Junk operation example.} Top: semantic embedding view; bottom: mechanistic (attribution) view. Clusters C4 and C5 overlap strongly, so Junk removes cluster C5 and reassigns its points to cluster C4, trading a small distortion increase for a larger rate reduction.}
  \label{fig:junk}
\end{figure}

\subsection{Adaptive Operations: Split and Junk}
\label{sec:analysis:adaptive}
In this section, we demonstrate how Split and Junk operations work in optimizing the rate-distortion objective $\LRD$.

\paragraph{Split operation: discovering latent structure.}
Figure~\ref{fig:split} illustrates a representative Split step.
Starting from a coarse partition, the algorithm targets a high-dispersion cluster and tests whether dividing it into two subclusters improves the rate--distortion objective.
A split is accepted when the drop in joint distortion outweighs the added rate cost ($L_{\text{RD}}$ decreases from 4.61 to 4.57), as captured by \cref{eq:split-criterion}.
In this example, the pre-split cluster mixes two generation modes that are similar in one view but heterogeneous in the other; after splitting, both subclusters become more coherent in \emph{both} semantic and mechanistic spaces.
Thus, Split operation is necessary to resolve mixed clusters and expose the latent structure that a single view would blur.

\paragraph{Junk operation: rejecting noise.}
Figure~\ref{fig:junk} shows a representative Junk operation.
The algorithm detects a small cluster whose points are not meaningfully separated: its center overlaps with another cluster in both views, so keeping it mostly increases the rate term.
The junk criterion (\cref{eq:junk-criterion}) triggers because removing the cluster yields a larger rate reduction than the distortion increase after reassignment, decreasing $\LRD$ (here from 5.54 to 5.42).
Thus, Junk operation is necessary to prevent over-segmentation by pruning redundant or noisy micro-clusters.

\FloatBarrier
\section{Causal Validation via Intervention}
\label{sec:causal}

While the previous sections show that our clusters are coherent in both semantic and mechanistic views, a critical question remains: do the corresponding mechanistic directions causally affect the continuations associated with each cluster, or are they merely correlated byproducts~\citep{joshi2026causalitykeyinterpretabilityclaims, geiger2025causalabstractiontheoreticalfoundation}? 
We study this at the level of \emph{intervention}~\citep{pearl2009causality}: if a cluster-level feature direction captures a causal factor, then increasing or suppressing the associated sparse features should shift the model's relative preference toward or away from continuations in that cluster.
We test this prediction using activation steering~\citep{lee2025programmingrefusalconditionalactivation, arditi2024refusallanguagemodelsmediated, tang-etal-2024-language} and measure steering-induced changes in target-cluster logits.

For each cluster $k$, we select a single representative continuation $n_k^\star$ and use its attribution vector as the cluster’s steering direction.
Concretely, we choose the cluster \emph{medoid}\footnote{A medoid is the data point that minimizes the average distance to all other points in the cluster.} as the member that is closest to the learned two-view prototype under the same distortion used in clustering:
\begin{equation}
n_k^\star
~:=~
\arg\min_{n: c(n)=k}
P_n \Big(
\beta_e \,\| e_n - \vmu_k^{(e)} \|_2^2
~+~
\beta_a \,\| a_n - \vmu_k^{(a)} \|_1
\Big),
\label{eq:medoid}
\end{equation}
and define the cluster steering vector as
\begin{equation}
a^{k} ~:=~ a_{n_k^\star}.
\label{eq:steer_direction}
\end{equation}
Using a medoid yields an \emph{interpretable} steering direction as it corresponds to an actual observed continuation in the cluster, rather than an abstract weighted median prototype.

\textbf{Steering methodology.} For each cluster $k$, we select the top-$B$ features by magnitude from the medoid attribution vector $a^k$: $\mathcal{F}_k = \text{top-}B\{f : |a^k[f]|\}$.
During generation, we intervene on these features $f \in \mathcal{F}_k$ by scaling their activations:
\begin{equation}
\Delta h_f = h_f \cdot \left( \epsilon \cdot \text{sign}(a^k[f])\right),
\label{eq:steer}
\end{equation}
where $\epsilon \in [-1, 1]$ controls intervention strength. At $\epsilon = -1$ (ablation), features with positive attribution are zeroed; at $\epsilon = +1$ (amplification), they are doubled.
Note that each feature corresponds to a specific tuple $(i, l, f)$ of token position, layer, and transcoder feature index. We apply steering locally at each tuple during the forward pass (i.e. intervening only at the exact position-layer) where that feature activates in the prefix, rather than globally across all positions or throughout the continuation.
As mentioned before, transcoders imitate the MLP layer's input-output behavior, the target feature difference $\Delta h_f$ is directly added to layer output redisual stream $\mathbf{r}$ after decoding along with original MLP output.
\begin{equation}
\mathbf{r}_{\text{pre}}^{(l+1)} = \mathbf{r}_{\text{mid}}^{(l)} + \text{MLP}^{(l)}(\mathbf{r}_{\text{mid}}^{(l)}) + \sum_{f \in \mathcal{F}_k^{(l)}} (\Delta h_fd^{(l)}_f+b_{dec}^{(l)})
\end{equation}
where $\mathbf{r}_{\text{mid}}^{(l)}$ is the residual stream at layer $l$ after attention,  $\mathbf{r}_{\text{pre}}^{(l+1)}$ is the residual stream at layer $l+1$ before attention.

We quantify how steering redistributes model preference across clusters using a
\emph{cluster-centered logit} score, defined as the sum of \emph{demeaned logits}
assigned to continuations in cluster $k$.
We define the cluster-centered logit as
\begin{equation}
L^{(\epsilon)}_k
~:=~
\sum_{n: c(n)=k}{\ell}^{(\epsilon)}_n - \bar{\ell}^{(\epsilon)}_n
~=~
\sum_{n: c(n)=k} \frac{1}{T} \sum_{i=1}^{T}\big({\ell}^{(\epsilon)}_{n_i} - \bar{\ell}^{(\epsilon)}_{n_i} \big),
\label{eq:cluster_centered_logit}
\end{equation}
where ${\ell}^{(\epsilon)}_{n_i}$ is the steered logit with strength $\epsilon$ at token $y_i$ and report its steering-induced change:
\begin{equation}
\Delta L^{(\epsilon)}_k
~=~
L^{(\epsilon)}_k - L^{(0)}_k.
\label{eq:delta_cluster_centered_logit}
\end{equation}
Intuitively, $\Delta L^{(\epsilon)}_k$ measures how steering shifts the model’s
\emph{relative logit preference} toward the set of continuations assigned to cluster $k$,
without conflating the effect with global logit shifts across the vocabulary.

\textbf{Setup.} For the main AmbigQA experiment, we cluster 500 continuations with $\beta\in{0.75,1.00,1.25}$ and $\gamma\in{0.3,0.5,0.7}$, select the top-$B$ features with $B\in{5,10}$ from each cluster, subsample 10 continuations per cluster, and steer with $\epsilon\in{-1.0,-0.5,-0.1,0.1,0.5,1.0}$.
We evaluate clusters with $2\le K\le 10$ because steering all clusters is computationally expensive.
We compare $RD$ medoid directions against two baselines. \textbf{KM-Sem} uses medoids from semantic-only K-means clusters with matched $K$, testing how much mechanistic faithfulness can be obtained from semantic similarity alone.
\textbf{Single} uses the attribution vector of a randomly selected continuation from the same $RD$ cluster, approximating the generalizability of conventional single-target analysis.
For each method, we report Pearson and Spearman correlations between steering strength $\epsilon$ and logit change $\Delta L^{(\epsilon)}_k$, averaged across clusters and prefixes.

\begin{table}[t]
\centering
\small
\scriptsize
\setlength{\tabcolsep}{3pt}

\renewcommand{\arraystretch}{0.90}
\setlength{\tabcolsep}{3.2pt}

\begin{tabular}{@{}>{\raggedright\arraybackslash}p{0.9cm} l ccc ccc ccc@{}}

\toprule
\multicolumn{2}{c}{} & \multicolumn{9}{c}{$\beta$} \\
\cmidrule(lr){3-11}
Method & Metric & \multicolumn{3}{c}{0.75} & \multicolumn{3}{c}{1.00} & \multicolumn{3}{c}{1.25} \\
\cmidrule(lr){3-5}\cmidrule(lr){6-8}\cmidrule(lr){9-11}
  & $\gamma$ & 0.3 & 0.5 & 0.7 & 0.3 & 0.5 & 0.7 & 0.3 & 0.5 & 0.7 \\
\midrule

\multirow{9}{*}{\textbf{RD}} & \cellcolor{gray!15}$\rho_s$ & \cellcolor{gray!15}0.30 & \cellcolor{gray!15}0.22 & \cellcolor{gray!15}0.27 & \cellcolor{gray!15}0.35 & \cellcolor{gray!15}0.32 & \cellcolor{gray!15}0.24 & \cellcolor{gray!15}0.26 & \cellcolor{gray!15}0.30 & \cellcolor{gray!15}0.20 \\
 & \cellcolor{gray!15}$\rho$ & \cellcolor{gray!15}0.31 & \cellcolor{gray!15}0.23 & \cellcolor{gray!15}0.28 & \cellcolor{gray!15}0.39 & \cellcolor{gray!15}0.33 & \cellcolor{gray!15}0.25 & \cellcolor{gray!15}0.31 & \cellcolor{gray!15}0.32 & \cellcolor{gray!15}0.21 \\
\cmidrule(lr){2-11}
 & $\epsilon_{-1.0}$ & -1.75 & -2.16 & -1.70 & -1.66 & -1.80 & -2.03 & -0.91 & -1.17 & -1.96 \\
 & $\epsilon_{-0.5}$ & -0.79 & -0.75 & -0.70 & -0.59 & -0.63 & -0.69 & -0.36 & -0.49 & -0.69 \\
 & $\epsilon_{-0.1}$ & -0.17 & -0.16 & -0.19 & -0.25 & -0.13 & -0.11 & -0.15 & -0.19 & -0.18 \\
 & $\epsilon_{0.1}$ & 0.19 & 0.12 & 0.09 & 0.08 & 0.19 & 0.18 & 0.01 & 0.07 & 0.09 \\
 & $\epsilon_{0.5}$ & 0.47 & 0.47 & 0.38 & 0.28 & 0.49 & 0.39 & 0.11 & 0.33 & 0.30 \\
 & $\epsilon_{1.0}$ & 0.45 & 0.58 & 0.50 & 0.56 & 0.63 & 0.43 & 0.28 & 0.58 & 0.34 \\
\midrule
\multirow{9}{*}{\shortstack[l]{\textbf{KM-}\\\textbf{Sem}}} & \cellcolor{gray!15}$\rho_s$ & \cellcolor{gray!15}-0.02 & \cellcolor{gray!15}-0.01 & \cellcolor{gray!15}0.05 & \cellcolor{gray!15}-0.08 & \cellcolor{gray!15}-0.02 & \cellcolor{gray!15}0.04 & \cellcolor{gray!15}-0.13 & \cellcolor{gray!15}-0.11 & \cellcolor{gray!15}-0.01 \\
 & \cellcolor{gray!15}$\rho$ & \cellcolor{gray!15}-0.02 & \cellcolor{gray!15}-0.01 & \cellcolor{gray!15}0.05 & \cellcolor{gray!15}-0.09 & \cellcolor{gray!15}-0.02 & \cellcolor{gray!15}0.04 & \cellcolor{gray!15}-0.13 & \cellcolor{gray!15}-0.11 & \cellcolor{gray!15}-0.01 \\
\cmidrule(lr){2-11}
 & $\epsilon_{-1.0}$ & -0.72 & -0.59 & -0.55 & -0.08 & -0.38 & -0.84 & -0.08 & 0.01 & -0.52 \\
 & $\epsilon_{-0.5}$ & -0.13 & -0.10 & -0.14 & 0.01 & 0.06 & -0.14 & 0.02 & 0.18 & -0.17 \\
 & $\epsilon_{-0.1}$ & 0.00 & -0.00 & -0.04 & -0.06 & 0.16 & 0.03 & 0.01 & 0.06 & -0.06 \\
 & $\epsilon_{0.1}$ & 0.04 & 0.00 & 0.01 & -0.05 & 0.12 & 0.09 & -0.08 & -0.08 & -0.06 \\
 & $\epsilon_{0.5}$ & -0.22 & -0.14 & -0.08 & -0.27 & -0.05 & -0.02 & -0.24 & -0.45 & -0.17 \\
 & $\epsilon_{1.0}$ & -0.77 & -0.73 & -0.44 & -0.59 & -0.51 & -0.50 & -0.44 & -0.87 & -0.76 \\
\midrule

\multirow{9}{*}{\textbf{Single}} & \cellcolor{gray!15}$\rho_s$ & \cellcolor{gray!15}0.13 & \cellcolor{gray!15}0.13 & \cellcolor{gray!15}0.14 & \cellcolor{gray!15}0.22 & \cellcolor{gray!15}0.14 & \cellcolor{gray!15}0.12 & \cellcolor{gray!15}0.17 & \cellcolor{gray!15}0.17 & \cellcolor{gray!15}0.10 \\
 & \cellcolor{gray!15}$\rho$ & \cellcolor{gray!15}0.14 & \cellcolor{gray!15}0.14 & \cellcolor{gray!15}0.15 & \cellcolor{gray!15}0.23 & \cellcolor{gray!15}0.15 & \cellcolor{gray!15}0.13 & \cellcolor{gray!15}0.19 & \cellcolor{gray!15}0.17 & \cellcolor{gray!15}0.12 \\
\cmidrule(lr){2-11}
 & $\epsilon_{-1.0}$ & -0.92 & -1.15 & -0.95 & -1.38 & -1.13 & -1.11 & -1.31 & -1.25 & -1.19 \\
 & $\epsilon_{-0.5}$ & -0.35 & -0.41 & -0.38 & -0.51 & -0.45 & -0.37 & -0.46 & -0.48 & -0.40 \\
 & $\epsilon_{-0.1}$ & -0.08 & -0.07 & -0.06 & -0.17 & -0.09 & -0.02 & -0.20 & -0.14 & -0.07 \\
 & $\epsilon_{0.1}$ & 0.04 & 0.09 & 0.06 & 0.07 & 0.08 & 0.08 & 0.04 & 0.12 & 0.03 \\
 & $\epsilon_{0.5}$ & 0.04 & 0.18 & 0.16 & 0.16 & 0.18 & 0.12 & 0.16 & 0.25 & 0.09 \\
 & $\epsilon_{1.0}$ & 0.07 & 0.16 & 0.18 & 0.23 & 0.20 & 0.07 & 0.13 & 0.27 & 0.07 \\
\bottomrule
\end{tabular}
\vspace{0.1in}
\caption{\textbf{Steering results by $\beta$ and $\gamma$ configuration top-$B$ and sign types are pooled.} Rows show methods with metric sub-rows. Gray rows: correlation metrics; white rows: steering effect at strength $\epsilon$. \textbf{RD}: $RD$-clustering medoids; \textbf{KM-Sem}: medoids from semantic-only K-means clusters; \textbf{Single}: Single attribution vector randomly chosen from $RD$ cluster continuations. We provide the full results in Table~\ref{tab:full_steering}.}
\label{tab:steer-results-transposed}
\end{table}
\textbf{Results.}
Table~\ref{tab:steer-results-transposed} shows that $RD$-clustering with medoid-based directions achieves the strongest and most consistent monotonic response: increasing $\epsilon$ yields larger $\Delta L^{(\epsilon)}_k$ and higher correlation across configurations.
In contrast, KM-Sem exhibits near-zero correlations, suggesting that semantic similarity alone does not reliably identify causally relevant features for cluster-specific continuations.
The Single baseline yields positive but weaker correlations than $RD$-medoid, indicating that a randomly chosen direction often fails to represent the cluster-level mechanism, consistent with the brittleness of single-continuation causal claims.

\textbf{Case study.}
We examine a qualitative study of two representative continuation clusters.
Table~\ref{tab:case-study-cluster-causality} illustrates that the $RD$ medoid transfers more reliably to held-out continuations.
In the production-funding cluster, amplification raises held-out centered logits by $+0.598$ ($RD$) versus $+0.068$ (Single); in the not-recognized cluster, $RD$ stays directional ($+0.239$ amplification, $-0.228$ ablation) while Single reverses sign on average.
This result suggests the discovered direction is not tied to one surface string but captures a cluster-level causal factor shared across distinct continuations.
\begin{table}[t]
\centering
\begin{tcolorbox}[
  enhanced,
  colback=gray!3,
  colframe=gray!35,
  boxrule=0.4pt,
  arc=1.5pt,
  left=4pt, right=4pt, top=4pt, bottom=4pt,
  fontupper=\scriptsize
]
\textbf{Q:} \textit{Who pays for the renovations on \emph{Holmes Next Generation}?}
\vspace{0.2em}

\setlength{\tabcolsep}{4pt}
\renewcommand{\arraystretch}{1.15}
\begin{tabularx}{\linewidth}{@{}l r >{\raggedright\arraybackslash}X r c r@{}}
\toprule
Source & idx & Continuation & $\Delta L_{+1}$ & Pos. & $\Delta L_{-1}$ \\
\midrule
\multicolumn{6}{@{}l}{\textbf{C$_1$:} \textit{production funding}}\\
\addlinespace[1pt]
\multicolumn{6}{@{}>{\raggedright\arraybackslash}p{\linewidth}}{%
  \textit{Held-out:}\par\vspace{1pt}
  \hangindent=1.6em\hangafter=1 \textbf{A$_1$} ``\ldots{}funded by the production company or network\ldots{}''\par
  \hangindent=1.6em\hangafter=1 \textbf{A$_2$} ``\ldots{}HNG\ldots{}funded by the production company\ldots{}''\par
  \hangindent=1.6em\hangafter=1 \textbf{A$_3$} ``\ldots{}reality TV show\ldots{}funded by the production company\ldots{}''}\\
\addlinespace[2pt]
\textbf{RD-medoid} & \textbf{215} & ``\ldots{}funded through the production budget\ldots{}''           & \textbf{+0.598} & \textbf{100\%} & $-0.189$ \\
Single             & 151          & ``\ldots{}spinoff of the original Holmes on Homes series\ldots{}'' & $+0.068$        & 70\%           & $-0.334$ \\
\midrule
\multicolumn{6}{@{}l}{\textbf{C$_2$:} \textit{not recognized}}\\
\addlinespace[1pt]
\multicolumn{6}{@{}>{\raggedright\arraybackslash}p{\linewidth}}{%
  \textit{Held-out:}\par\vspace{1pt}
  \hangindent=1.6em\hangafter=1 \textbf{A$_1$} ``\ldots{}not a widely recognized or officially named project\ldots{}''\par
  \hangindent=1.6em\hangafter=1 \textbf{A$_2$} ``\ldots{}not a well-known or officially recognized title\ldots{}''\par
  \hangindent=1.6em\hangafter=1 \textbf{A$_3$} ``\ldots{}no well-known entity\ldots{}associated with a property or renovation project\ldots{}''}\\
\addlinespace[2pt]
\textbf{RD-medoid} & \textbf{349} & ``\ldots{}not a widely recognized or officially documented show\ldots{}'' & \textbf{+0.239} & \textbf{100\%} & $-0.228$ \\
Single             & 410          & ``\ldots{}not a well-known or officially recognized show\ldots{}''        & $-0.011$        & 40\%           & $+0.258$ \\
\bottomrule
\end{tabularx}
\end{tcolorbox}
\vspace{0.1in}
\caption{\textbf{Cluster-level transfer case study (AmbigQA, Qwen3-8B).}
Steering from the $RD$ medoid vs.\ a random single continuation from the same cluster; held-out continuations shown for context. \emph{idx} is the source continuation defining the steering direction (text in \emph{Continuation}). $\Delta L_{+1}$/$\Delta L_{-1}$ are mean centered target-logit changes under amplification/ablation; Pos.\ is the fraction of held-out continuations with positive $\Delta L_{+1}$.}
\label{tab:case-study-cluster-causality}
\end{table}

We further inspect steering validation on reasoning tasks by tracing the effects across reasoning steps, following~\citet{bogdan2025thoughtanchorsllmreasoning}.
Specifically, we examine the causal effect of previous reasoning steps on a fixed target reasoning step.
We also analyze the dynamics of cluster merging and splitting from intermediate reasoning steps to the target step.
Results are reported in Appendix~\ref{app:reasoning}.

\vspace{-0.1cm}
\section{Conclusion}
\label{sec:conclusion}

We propose an unsupervised feature discovery framework that analyzes a prompt’s continuation distribution jointly in semantic and mechanistic views.
We show that joint objective recovers interpretable continuation modes that semantic-only or mechanism-only baselines miss, and that cluster-derived feature sets yield consistent, directional steering effects, providing evidence that discovered modes correspond to actionable mechanistic factors.
Together, these results suggest that distribution-level analysis can complement conventional circuit analysis by summarizing heterogeneity across continuations and producing mechanistic hypotheses for targeted follow-up investigation.
Nevertheless, limitations on computational overhead and feature labeling remain, leaving it as future work.
We provide a detailed discussion in Appendix~\ref{app:limitations}.
\section*{Impact Statement}
\label{sec:impact}
This paper contributes a method for auditing language-model continuations by grouping sampled outputs according to both semantic content and sequence-level mechanistic attribution.
The intended positive impact is to help researchers inspect heterogeneous model behaviors without hand-selecting a single target completion, which may support more systematic safety and reliability analysis.
However, the method can also be misused if cluster-level steering is applied to amplify undesirable behaviors, so we frame steering experiments as diagnostic interventions rather than deployment recommendations.
In addition, validation under intervention does not guarantee the uniqueness of the identified features, which requires further counterfactual inspection.
The approach relies on pretrained embedding models, including a transcoder and a sentence embedding model, as well as attribution estimates derived from first-order linear approximations over transcoder features.
The approach depends on pretrained embedding models including a transcoder and a sentence embedding model, and attribution quality drawn by linear approximation on transcoder features; these dependencies should be considered when interpreting clusters in sensitive domains.

\section*{Acknowledgement}
Hyunjin is affiliated with the Department of Computer Science and Engineering, while Jaehyung and Youngji are affiliated with the Department of Artificial Intelligence at Yonsei University.
This research was supported in part by Institute for Information \& communications Technology Planning \& Evaluation (IITP) grant funded by the Korea government (MSIT) (No. RS-2020-II201361, Artificial Intelligence Graduate School Program (Yonsei University); No. RS-2026-25522672, Development of Unified Reasoning Technology Mimicking Human Cognition for Hierarchical Understanding and Unbounded Problem Solving).

\bibliography{references}
\bibliographystyle{icml2026}

\newpage
\appendix
\onecolumn

\appendix

\section{Detailed Derivations}
\label{app:derivations}

\subsection{E-step Derivation}
\label{app:e-step}

The assignment rule~\cref{eq:assignment} follows from minimizing the R-D objective with respect to assignments while holding centers fixed.

\begin{proof}[Derivation of E-step]
The contribution of sample $n$ to $\LRD$ depends on its assignment $c(n) = k$:
\begin{align}
\mathcal{L}_n &= -\Pbar_k \log \Pbar_k + \beta_e \Pbar_k \frac{\|e_n - \vmu_k^{(e)}\|_2^2}{W_k} P_n + \beta_a \Pbar_k \frac{\|a_n - \vmu_k^{(a)}\|_1}{W_k} P_n
\end{align}
Since $\Pbar_k = W_k / \Wtotal$ and the entropy term is shared across all samples in cluster $k$, minimizing the per-sample cost yields:
\begin{equation}
c(n) = \argmin_k \left[ -\log \Pbar_k + \beta_e \|e_n - \vmu_k^{(e)}\|_2^2 + \beta_a \|a_n - \vmu_k^{(a)}\|_1 \right]
\end{equation}
The rate term $-\log \Pbar_k$ favors assignment to larger clusters, acting as a regularizer against excessive fragmentation.
\end{proof}

\subsection{M-step Derivation}
\label{app:m-step}

\begin{proof}[Derivation of M-step]
For fixed assignments, the optimal centers minimize the weighted sum of distances.

\paragraph{Semantic Centers (L2).}
Taking the derivative with respect to $\vmu_k^{(e)}$:
\begin{equation}
\frac{\partial}{\partial \vmu_k^{(e)}} \sum_{n: c(n)=k} P_n \|e_n - \vmu_k^{(e)}\|_2^2 = -2 \sum_{n: c(n)=k} P_n (e_n - \vmu_k^{(e)}) = 0
\end{equation}
Solving yields the probability-weighted mean:
\begin{equation}
\vmu_k^{(e)} = \frac{\sum_{n: c(n)=k} P_n \cdot e_n}{\sum_{n: c(n)=k} P_n} = \frac{1}{W_k} \sum_{n: c(n)=k} P_n \cdot e_n
\end{equation}

\paragraph{Attribution Centers (L1).}
For L1 distance, the optimal center is the probability-weighted coordinate-wise median~\citep{sabo2008LAD}:
\begin{equation}
\vmu_k^{(a)}[f] = \text{weighted-median}\left(\{a_n[f]\}_{n: c(n)=k}, \{P_n\}_{n: c(n)=k}\right)
\end{equation}
where the weighted median is defined as the value $m$ satisfying $\sum_{n: a_n[f] < m} P_n \leq W_k/2$ and $\sum_{n: a_n[f] > m} P_n \leq W_k/2$.
\end{proof}

\subsection{Split Criterion Derivation}
\label{app:split-criterion}

\begin{proof}[Derivation of Split Criterion]
Consider splitting cluster $k$ with mass $W_k$ into $(k_1, k_2)$ with masses $(W_1, W_2)$ where $W_1 = \alpha W_k$ and $W_2 = (1-\alpha) W_k$.

\paragraph{Rate Change.} The entropy before split:
$H_{\text{before}} = -\Pbar_k \log \Pbar_k + \text{other terms}$

After split with $\Pbar_1 = \alpha \Pbar_k$ and $\Pbar_2 = (1-\alpha) \Pbar_k$:
\begin{align}
H_{\text{after}} &= -\alpha \Pbar_k \log(\alpha \Pbar_k) - (1-\alpha) \Pbar_k \log((1-\alpha) \Pbar_k) + \text{other terms} \\
&= -\Pbar_k [\alpha \log \alpha + (1-\alpha) \log(1-\alpha)] - \Pbar_k \log \Pbar_k + \text{other terms}
\end{align}
Thus $\Delta H = H_{\text{after}} - H_{\text{before}} = \Pbar_k \Hbin(\alpha)$ (rate increases).

\paragraph{Distortion Change.} Let $D_k^{(e)} = \Pbar_k \Var_k^{(e,w)}$ be the contribution to total distortion. After split:
\begin{equation}
D_{\text{after}}^{(e)} = \alpha \Pbar_k \Var_1^{(e,w)} + (1-\alpha) \Pbar_k \Var_2^{(e,w)}
\end{equation}
The distortion reduction is:
\begin{equation}
\Delta D_k^{(e)} = D_k^{(e)} - D_{\text{after}}^{(e)} = \Pbar_k \left[ \Var_k^{(e,w)} - \alpha \Var_1^{(e,w)} - (1-\alpha) \Var_2^{(e,w)} \right]
\end{equation}

\paragraph{Split Criterion.} Split is beneficial iff $\Delta \LRD < 0$:
\begin{equation}
\Pbar_k \Hbin(\alpha) < \beta_e \Delta D_k^{(e)} + \beta_a \Delta D_k^{(a)}
\end{equation}
\end{proof}

\subsection{Junk Criterion Derivation}
\label{app:junk-criterion}

\begin{proof}[Derivation of Junk Criterion]
Consider removing cluster $k$ and reassigning its points to remaining clusters using the R-D assignment rule.

\paragraph{Rate Change.} Removing $k$ reduces entropy (fewer clusters), so $\Delta H = H_{\text{before}} - H_{\text{after}} > 0$ is the rate savings.

\paragraph{Distortion Change.} Reassigning points from $k$ to other clusters increases distortion: $\Delta D = D_{\text{after}} - D_{\text{before}} > 0$.
The distortion increase is computed as the probability-weighted sum of the difference between distances to new centers versus old center.

\paragraph{Junk Criterion.} Junk is beneficial iff $\Delta \LRD < 0$:
\begin{equation}
-\Delta H + \beta_e \Delta D^{(e)} + \beta_a \Delta D^{(a)} < 0
\end{equation}
Rearranging: $\Delta H > \beta_e \Delta D^{(e)} + \beta_a \Delta D^{(a)}$
\end{proof}

\subsection{Convergence Proof}
\label{app:convergence-proof}

\begin{proof}[Proof of Proposition~\ref{prop:convergence}]
Let $L^{(t,0)}$ denote the value of $\LRD$ at the beginning of outer iteration $t$, and let
$L^{(t,1)}$, $L^{(t,2)}$, and $L^{(t,3)}$ denote the values after the E-step, M-step, and adaptive operations (Split and Junk), respectively.
By optimality of the assignment update~\cref{eq:assignment},
\begin{equation}
L^{(t,1)} \leq L^{(t,0)}.
\end{equation}
By optimality of the center update~\cref{eq:centers},
\begin{equation}
L^{(t,2)} \leq L^{(t,1)}.
\end{equation}
Finally, Split and Junk are applied only when they strictly decrease the objective by \cref{prop:split,prop:junk}, so
\begin{equation}
L^{(t,3)} \leq L^{(t,2)}.
\end{equation}
Since $L^{(t+1,0)} = L^{(t,3)}$, we obtain the chain
\begin{equation}
L^{(t,0)} \geq L^{(t,1)} \geq L^{(t,2)} \geq L^{(t,3)} = L^{(t+1,0)}.
\end{equation}
Therefore, writing $L^{(t)} := L^{(t,0)}$, the outer-loop objective values satisfy
\begin{equation}
L^{(t+1)} \leq L^{(t)} \qquad \forall t \geq 0,
\end{equation}
so $\{L^{(t)}\}_{t\geq 0}$ is monotonically non-increasing.

Moreover,
\begin{equation}
L^{(t)} = \Hent(C^{(t)}) + \beta_e D_t^{(e)} + \beta_a D_t^{(a)} \geq 0,
\end{equation}
where $D_t^{(e)}$ and $D_t^{(a)}$ denote the semantic and attribution distortions at iteration $t$.
Hence $\{L^{(t)}\}$ is bounded below by $0$.
By the monotone convergence theorem for real sequences, there exists a finite value $L^\star \geq 0$ such that
\begin{equation}
L^{(t)} \downarrow L^\star \qquad \text{as } t \to \infty.
\end{equation}
If we define the per-iteration decrease
\begin{equation}
\Delta_t := L^{(t)} - L^{(t+1)} \geq 0,
\end{equation}
then
\begin{equation}
\Delta_t \to L^\star - L^\star = 0.
\end{equation}
Thus the objective decreases monotonically, converges to a finite limit.
\end{proof}
This bounded monotone-descent argument is standard in iterative clustering and related EM-style learning algorithms~\citep{banerjee2005bregman, dhillon2003divisive}.
Regarding convergence to the global minimum, the presence of hard cluster assignments and discrete Split/Junk decisions makes the overall optimization problem non-convex. 
Consequently, the algorithm is not guaranteed to reach the global minimum; in general, it converges only to a local optimum or a fixed point of the update procedure.
\section{Implementation Details}
\label{app:implementation}

We provide the implementation details for main experiments (\S~\ref{sec:clustering_analysis} and \S~\ref{sec:causal}).

\subsection{Continuation Sampling}
\label{app:sampling}

\paragraph{Sampling Strategy.}
We sample continuations using nucleus (top-$p$) sampling with temperature scaling.
For each prefix, we draw multiple batches of continuations until reaching a target count of distinct continuations or a maximum number of batches:
\begin{itemize}[nosep]
    \item Temperature: $T = 1.0$
    \item Nucleus probability: $p = 0.9$
    \item Maximum tokens per continuation: 20
    \item Batch size: 32 continuations per batch
    \item Maximum batches: 200
    \item Target distinct continuations: 500
\end{itemize}

\paragraph{Instruction Template.}
\textbf{Qwen3} models support switching reasoning mode, we use non-thinking mode for AmbigQA.
For all experiments, we use raw questions as prefixes from each dataset without additional instruction.

\paragraph{Deduplication.}
Continuations are deduplicated by their text representation. For each unique text, we keep the sample with the highest per-token log probability to avoid bias toward shorter sequences:
\begin{equation}
\text{score}(y) = \frac{1}{|y|} \sum_{t=1}^{|y|} \log p(y_t \mid y_{<t}, x)
\end{equation}

\paragraph{Temperature Rescoring.}
When sampling with temperature $T \neq 1$, we rescore all continuations at $T=1$ using the model's logprobs interface. This ensures path probabilities reflect the true model distribution rather than the sampling distribution.

\subsection{Attribution Computation}
\label{app:attribution}

We compute attribution from prefix features to continuation tokens using gradient-based methods with transcoders.
We adapt the code from the \texttt{circuit-tracer} repository~\footnote{https://github.com/decoderesearch/circuit-tracer}.

\paragraph{Implementation Parameters.}
\begin{itemize}[nosep]
    \item Maximum feature nodes: 4096
    \item Batch size: 128 (continuation tokens per backward pass)
    \item Model: Qwen/Qwen3-8B or Qwen/Qwen3-4B
    \item Transcoder: mwhanna/qwen3-8b-transcoders or mwhanna/qwen3-4b-transcoders
    \item Layers: All MLP layers
    \item Data type: bfloat16
\end{itemize}

\subsection{Semantic Embedding Computation}
\label{app:embedding}

We compute \emph{contextual continuation embeddings} using a pretrained embedding model.

\paragraph{Method.}
For each continuation, we:
\begin{enumerate}[nosep]
    \item Concatenate prefix and continuation: $\text{full} = \text{prefix} \oplus \text{continuation}$
    \item Run a forward pass through the embedding model
    \item Extract hidden states for \emph{only} continuation tokens (which attend to the prefix through self-attention)
    \item Mean pool the continuation hidden states
    \item L2-normalize the resulting embedding
\end{enumerate}
This captures the semantic meaning of the continuation in the context of the prefix, rather than treating them independently.

\subsection{Normalization}
\label{app:normalization}

We normalize both semantic and attribution embeddings before clustering to ensure comparable scales across continuations.

\paragraph{Semantic Embedding: Spherical Normalization.}
Semantic embeddings are L2-normalized to lie on the unit sphere:
\begin{equation}
e_n \leftarrow \frac{e_n}{\|e_n\|_2}
\end{equation}
This is standard practice for sentence embeddings and ensures that cosine similarity equals the dot product.
Since our semantic distortion uses squared L2 distance, this normalization means $\|e_n - e_m\|_2^2 = 2(1 - e_n^\top e_m)$, directly relating distance to cosine similarity.

\paragraph{Attribution Embedding: RMS Normalization.}
Attribution vectors are normalized by their root-mean-square (RMS):
\begin{equation}
a_n \leftarrow \frac{a_n}{\text{RMS}(a_n)}, \quad \text{where} \quad \text{RMS}(a) = \sqrt{\frac{1}{d_a}\sum_{f=1}^{d_a} a[f]^2}
\end{equation}

We choose RMS normalization over L2 normalization for attribution vectors because:
\begin{enumerate}[nosep]
    \item \textbf{Sparsity preservation:} RMS normalization preserves the sparsity pattern and relative magnitudes within each vector, only standardizing the overall scale.
    \item \textbf{Robustness to dimensionality:} RMS is independent of dimensionality $d_a$, whereas L2 norm grows with $\sqrt{d_a}$ for vectors with similar per-coordinate magnitudes.
    \item \textbf{Interpretability:} After RMS normalization, attribution values can still be interpreted as relative contributions---a feature with value $2.0$ contributes twice as much as one with value $1.0$.
\end{enumerate}

\paragraph{When Normalization is Applied.}
Normalization is applied once after computing embeddings and attributions, before the clustering algorithm runs.
Cluster centers inherit the normalization: $\vmu_k^{(e)}$ lies approximately on the unit sphere, and $\vmu_k^{(a)}$ has approximately unit RMS.

\subsection{PCA-based Split Initialization}
\label{app:pca-split}

The Split operation first constructs a candidate two-way partition of cluster $k$
using a PCA-based split, refines the candidate with an alternating assignment-and-center update, and accepts it only if the split criterion in~\cref{eq:split-criterion}
is satisfied; otherwise, the original cluster is kept unchanged.
To initialize the candidate split, we compute the leading principal direction in a joint semantic/attribution representation of the current cluster and split the cluster by a hyperplane orthogonal to that direction~\citep{KMeansPCA2004,boleyPDDP1998}.

This procedure provides a structured alternative to random initialization.
In the two-cluster squared-Euclidean case, the leading principal component is the continuous relaxation of the binary K-means cluster-indicator vector~\citep{KMeansPCA2004}.
Therefore, thresholding the projected scores gives an initialization aligned with the two-means objective.
While, in our mixed $\ell_2^2/\ell_1$ objective, this exact relaxation no longer holds because the attribution term uses an $\ell_1$ dissimilarity.
We therefore use the PCA split as a heuristic initialization and rely on the subsequent refinement step and split-acceptance criterion to ensure that accepted splits improve the target objective, as well as considering rate increase.

Let
\[
\mathcal{I}_k = \{n : c(n)=k\},
\qquad
W_k = \sum_{n\in \mathcal{I}_k} P_n,
\qquad
\tilde P_n = \frac{P_n}{W_k}.
\]
For each sample in cluster $k$, define the centered semantic and attribution
representations
\[
\bar e_n = e_n - \vmu_k^{(e)},
\qquad
\bar a_n = a_n - \vmu_k^{(a)}.
\]
We perform PCA in the $\beta$-scaled joint representation
\[
z_n
=
\begin{bmatrix}
\sqrt{\beta_e}\,\bar e_n \\
\sqrt{\beta_a}\,\bar a_n
\end{bmatrix}.
\]
Equivalently, we seek the leading eigenvector of the weighted covariance matrix
\[
\Sigma_k
=
\sum_{n\in\mathcal{I}_k}
\tilde P_n z_n z_n^\top .
\]

The direction is estimated by power iteration as follows.
Initialize a random direction
\[
v =
\begin{bmatrix}
v_e \\
v_a
\end{bmatrix},
\qquad
\|v\|_2
=
\sqrt{\|v_e\|_2^2+\|v_a\|_2^2}
=1.
\]
Then, for up to 20 iterations, compute
\begin{align}
\mathrm{proj}_n
&=
z_n^\top v \notag\\
&=
\sqrt{\beta_e}\,\bar e_n^\top v_e
+
\sqrt{\beta_a}\,\bar a_n^\top v_a,
\\
u_e
&=
\sqrt{\beta_e}
\sum_{n\in\mathcal{I}_k}
\tilde P_n \,\mathrm{proj}_n\, \bar e_n,
\\
u_a
&=
\sqrt{\beta_a}
\sum_{n\in\mathcal{I}_k}
\tilde P_n \,\mathrm{proj}_n\, \bar a_n,
\\
(v_e,v_a)
&\leftarrow
\frac{(u_e,u_a)}
{\sqrt{\|u_e\|_2^2+\|u_a\|_2^2}}.
\end{align}
We stop early if the direction stabilizes.

After the final direction is obtained, we bisect the cluster at the weighted
median of the projected scores. Sort the samples in $\mathcal{I}_k$ by their
final projection values:
\[
\mathrm{proj}_{(1)}
\leq
\mathrm{proj}_{(2)}
\leq
\cdots
\leq
\mathrm{proj}_{(m)},
\]
where $m=|\mathcal{I}_k|$. Define the cumulative probability mass
\[
S_j = \sum_{i=1}^{j} P_{(i)}.
\]
We choose
\[
j^\star
=
\min
\left\{
j : S_j \geq \frac{W_k}{2}
\right\}.
\]
The initial child assignment is then defined by rank:
\[
\ell_{(i)}
=
\mathbf{1}\{i>j^\star\}.
\]
Equivalently, when there are no ties at the weighted median
\(\tau = \mathrm{proj}_{(j^\star)}\), this corresponds to
\[
\ell_n
=
\mathbf{1}\{\mathrm{proj}_n > \tau\}.
\]
This produces an approximately balanced split in probability mass.

We then refine this initialization with 5 iterations of a Lloyd-style
two-cluster update under the combined dissimilarity
\[
d(n,r)
=
\beta_e
\left\|
e_n - \vmu_r^{(e)}
\right\|_2^2
+
\beta_a
\left\|
a_n - \vmu_r^{(a)}
\right\|_1,
\qquad
r\in\{1,2\}.
\]
At each iteration, assignments are updated by
\[
\ell_n
\leftarrow
\argmin_{r\in\{1,2\}}
d(n,r),
\]
semantic child centers are updated by probability-weighted means,
\[
\vmu_r^{(e)}
=
\frac{
\sum_{n\in\mathcal{I}_k : \ell_n=r}
P_n e_n
}{
\sum_{n\in\mathcal{I}_k : \ell_n=r}
P_n
},
\]
and attribution child centers are updated by coordinate-wise
probability-weighted medians,
\[
\vmu_r^{(a)}
=
\operatorname{wmedian}
\left(
\{a_n : n\in\mathcal{I}_k,\ \ell_n=r\},
\{P_n : n\in\mathcal{I}_k,\ \ell_n=r\}
\right).
\]

The $\sqrt{\beta}$-scaled representation makes variance along the split
direction reflect the relative weighting of the two views in the
squared-Euclidean part of the objective, while the final mixed-distance
refinement adapts the candidate split to the actual objective used by the Split
operation.

\subsection{Convergence Criterion}
\label{app:convergence-criterion}

The algorithm terminates when the relative change in $\LRD$ falls below threshold:
\begin{equation}
\frac{|\LRD^{(t)} - \LRD^{(t-1)}|}{|\LRD^{(t-1)}| + \epsilon_0} < \epsilon
\end{equation}
with default $\epsilon = 10^{-3}$ and $\epsilon_0 = 10^{-10}$ for numerical stability when $\LRD^{(t-1)} \approx 0$.

\section{Continuation-Level Attribution Analysis.}
\label{app:continuation-attribution-analysis}
Many attribution studies~\citep{wang2022interpretabilitywildcircuitindirect, conmy2023automatedcircuitdiscoverymechanistic} focus on explaining the \emph{next-token} logit, largely because this setting aligns naturally with feature-to-feature connectivity graphs.
In this work, we instead represent each sampled continuation using \textbf{direct prefix-feature-to-continuation} attributions.
While this representation discards multi-step mediation and higher-order interactions, we hypothesize that it preserves the dominant continuation-conditional signal needed to differentiate computational patterns across continuations.

\subsection{What causal effect does pooled attribution represent?}
\label{app:continuation-attribution-analysis:pooling}
The aggregated attribution can be interpreted as capturing the average
sequence-level effect of position-wise attribution patterns, rather than
implying a uniform attribution trend across all continuation positions.
This distinction matches our target quantity: unlike conventional
next-token analyses, we measure intervention effects on the mean
demeaned continuation logit.

Let $j \equiv (i,l,f)$ index a prefix transcoder feature, and define the
token-wise demeaned logit as
\begin{equation}
\tilde{\ell}_{y_t}
:=
\ell_{y_t}
-
\bar{\ell},
\qquad
\bar{\ell}
:=
\frac{1}{|V|}\sum_{v \in V}\ell_v .
\end{equation}
Using the notation $do(h_j = h')$ for an intervention~\citep{pearl2009causality}
that forces feature $h_j$ to the value $h'$, the token-level intervention
effect~\citep{syed-etal-2024-attribution} between two feature values
$h_j^{+}$ and $h_j^{-}$ is
\begin{equation}
\tau_{j,t}(h_j^{+},h_j^{-})
:=
\tilde{\ell}_{y_t}
\left(x_{\mathrm{clean}} \mid do(h_j = h_j^{+})\right)
-
\tilde{\ell}_{y_t}
\left(x_{\mathrm{clean}} \mid do(h_j = h_j^{-})\right).
\end{equation}

For a continuation of length $T$, define the mean demeaned continuation
logit as
\begin{equation}
\tilde{\ell}^{\mathrm{mean}}_{1:T}
:=
\frac{1}{T}
\sum_{t=1}^T
\tilde{\ell}_{y_t}.
\end{equation}
The corresponding sequence-level intervention effect is

\begin{equation}
\tau_j^{\mathrm{mean}}(h_j^{+},h_j^{-})
:=
\tilde{\ell}^{\mathrm{mean}}_{1:T}
\left(x_{\mathrm{clean}} \mid do(h_j = h_j^{+})\right)
-
\tilde{\ell}^{\mathrm{mean}}_{1:T}
\left(x_{\mathrm{clean}} \mid do(h_j = h_j^{-})\right).
\end{equation}
By linearity of the mean,
\begin{equation}
\begin{aligned}
\tau_j^{\mathrm{mean}}(h_j^{+},h_j^{-})
&=
\left(
\frac{1}{T}
\sum_{t=1}^T
\tilde{\ell}_{y_t}
\left(x_{\mathrm{clean}} \mid do(h_j = h_j^{+})\right)
\right)
-
\left(
\frac{1}{T}
\sum_{t=1}^T
\tilde{\ell}_{y_t}
\left(x_{\mathrm{clean}} \mid do(h_j = h_j^{-})\right)
\right) \\
&=
\frac{1}{T}
\sum_{t=1}^T
\tau_{j,t}(h_j^{+},h_j^{-}) .
\end{aligned}
\end{equation}
Thus, the sequence-level effect is exactly the average of the token-level effects.
The pooled attribution therefore summarizes average generated-token causal effects, even when token-wise effects are heterogeneous across positions.

In our causal validation, this contrast is approximated by the signed steering intervention in Eq.~\ref{eq:steer}.
Positive steering $\epsilon > 0$ implements an amplification intervention $do(h_j = h_j^{+})$, while negative steering $\epsilon < 0$ implements a suppression intervention $do(h_j = h_j^{-})$. The empirical test therefore asks whether continuous feature scaling produces the continuation-level logit shifts predicted by the underlying causal direction.
In practice, we steer the top $|S|\in\{5,10\}$ features jointly (\S~\ref{sec:causal}), so the measured effect is a feature-set intervention effect rather than an isolated single-feature effect.

\subsection{Token-level Attribution Analysis}
\label{app:token-level-attribution-analysis}
We complement the pooled analysis with token-level diagnostics that examine how attribution structure changes across generated positions.
These diagnostics use AmbigQA continuations with direct prefix-to-token attributions computed following the procedure in \Cref{sec:clustering_analysis}.
For the Qwen3-4B scaled diagnostic reported below, the token-similarity analysis uses 100 sampled continuations from 150 Stage-3 context files.
The model-dependent source-mass and span-steering analyses use five sampled continuations from 40 context files; for dynamic source-mass attribution, we cap the recomputed attribution graph at 512 feature nodes and evaluate up to 32 target positions.

\textbf{Attribution magnitude across generated positions.}
For each continuation, we compute the $\ell_1$ norm of the attribution vector at every generated token position before pooling.
As shown in \Cref{fig:position_l1}, attribution magnitude is largest at the first one or two generated tokens and then decays smoothly with position.
The decay is gradual rather than abrupt: by position 31, the mean norm remains more than half of its peak value.
Thus, the original prefix continues to supply measurable attribution mass throughout generation, which supports pooling over the full generated span when constructing continuation-level representations.

\textbf{Token-wise attribution similarity.}
We next ask whether nearby generated tokens rely on similar attribution patterns.
For each pair of generated positions, we compare their local attribution vectors using top-100 Jaccard overlap and $\ell_1$ distance, and aggregate the result by positional gap.
As expected, attribution vectors are most similar for nearby targets: Jaccard overlap is high and $\ell_1$ distance is low at small gaps.
This local advantage over a random continuation from the same prefix weakens as the comparison moves from nearby tokens to earlier--later token pairs, disappearing after roughly gap 10 under top-100 Jaccard and after roughly gap 13 under $\ell_1$ distance.
The lower-triangular pairwise Jaccard heatmap in \Cref{fig:token_attr_similarity_jaccard_heatmap} shows the same local structure before gap averaging.
This indicates that local attribution patterns are coherent over short ranges but become substantially reconfigured over longer generated spans.

\textbf{Original-prefix versus generated-history source mass.}
To separate static prefix effects from autoregressive effects, we recompute attribution for each target token using the original prefix together with the preceding generated tokens as the source context.
We then partition source-node $\ell_1$ mass according to whether each source node belongs to the original prefix or to prior generated tokens.
The original prefix dominates early and middle target positions, while generated-history mass grows steadily over the continuation.
In the scaled Qwen3-4B run, generated-history mass becomes comparable to original-prefix mass around position 25 and exceeds it by position 30, where the split is approximately 45:55 between original-prefix and generated-history mass.
This transition is consistent with a generation process in which early tokens remain strongly conditioned on the prompt, while later tokens increasingly depend on the model's generated history.

\textbf{Causal validation of pooled features.}
Finally, we test whether the pooled attribution vectors identify causally relevant features.
For each pooled vector, we select the top-10 features by absolute attribution magnitude and compare them with 10 randomly sampled features.
We then steer these features using the intervention procedure in Eq.~\ref{eq:steer}.
The attribution-selected features produce clearer token-level logit changes than the random baseline: negative steering reliably decreases demeaned logits, while positive steering generally increases them.
Comparing attribution spans on the same Qwen3-4B subset shows the same signed trend for full-span, first-token, last-token, and first-five-token vectors.
The full-span vector gives the largest average shift, from $-0.093$ at $\epsilon=-1$ to $0.232$ at $\epsilon=1$, while single-token and first-five-token vectors show smaller but directionally consistent effects.
These results provide causal evidence that pooled continuation-span attributions preserve meaningful information about the generated sequence.

\begin{figure}[H]
  \centering
  \includegraphics[width=0.8\linewidth]{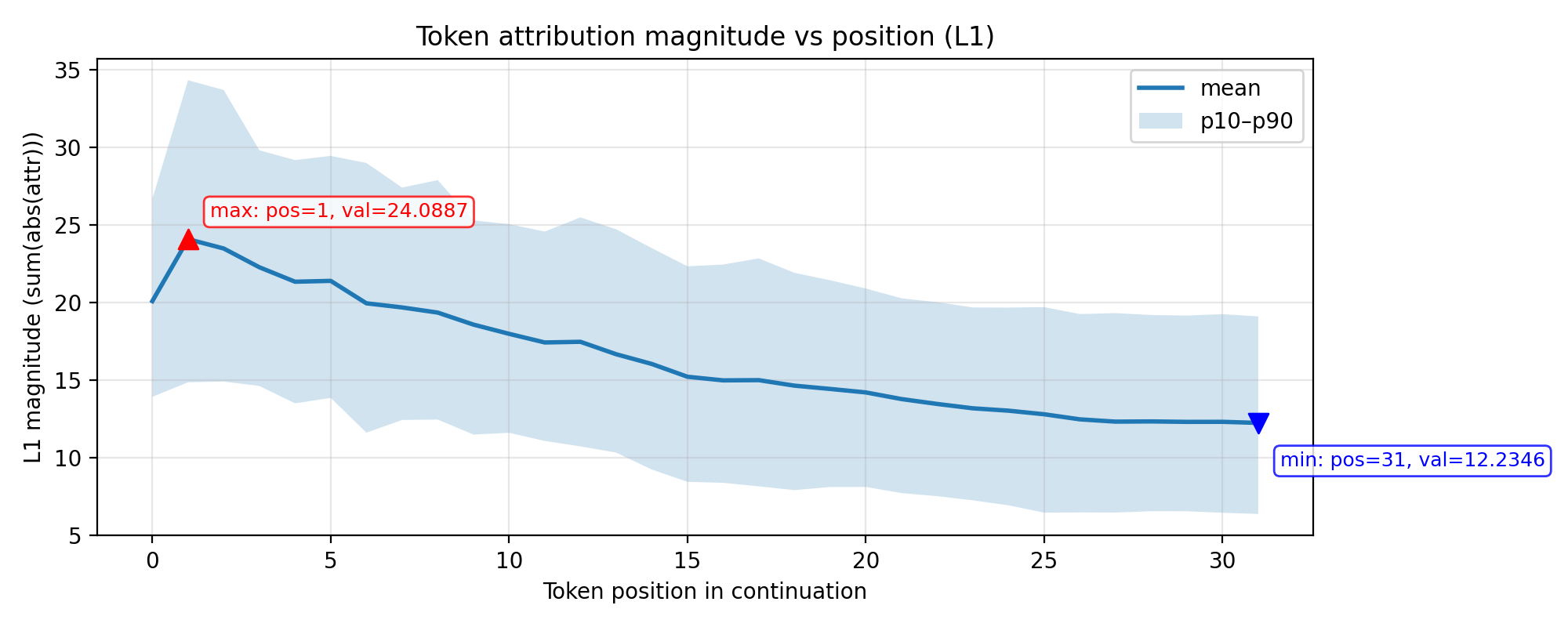}
    \caption{Mean $\ell_1$ norm of the attribution vector at each generated-token position.}
  \label{fig:position_l1}
\end{figure}

\begin{figure}[H]
  \centering
  \begin{subfigure}{0.48\linewidth}
    \centering
    \includegraphics[width=\linewidth]{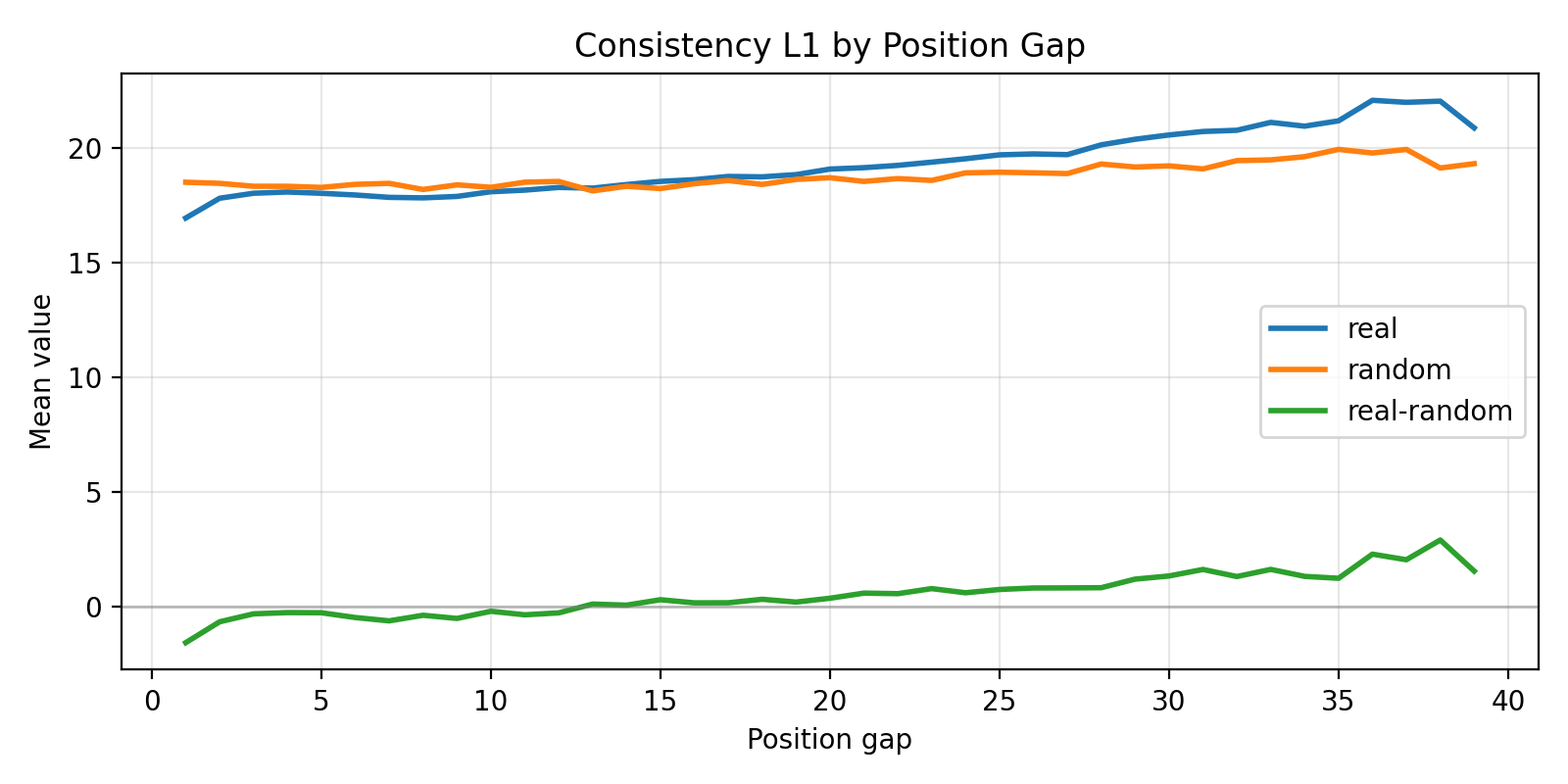}
    \caption{$\ell_1$ distance.}
    \label{fig:token_attr_similarity_l1}
  \end{subfigure}
  \hfill
  \begin{subfigure}{0.48\linewidth}
    \centering
    \includegraphics[width=\linewidth]{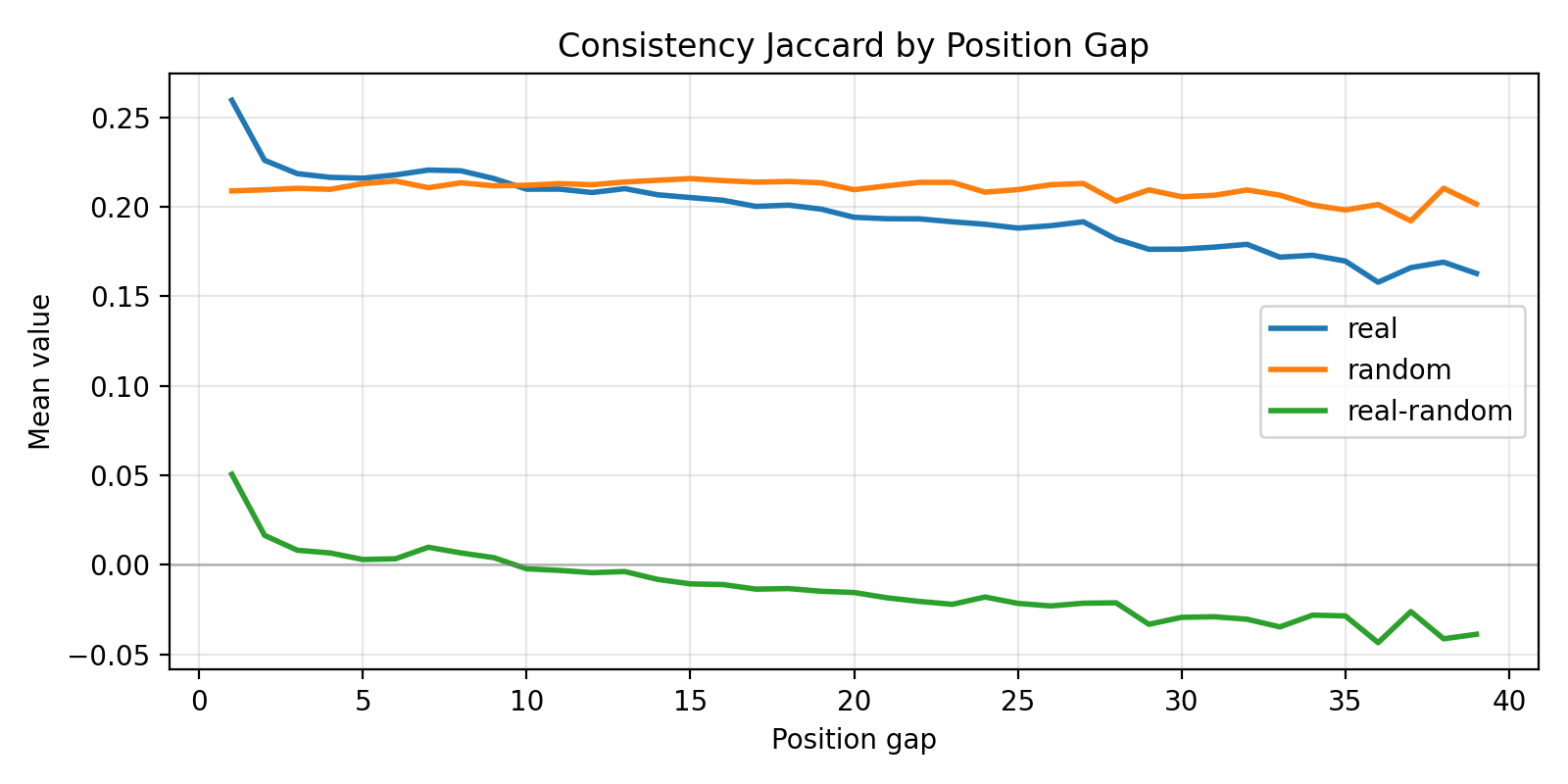}
    \caption{Top-100 Jaccard overlap.}
    \label{fig:token_attr_similarity_jaccard}
  \end{subfigure}
  \caption{Token-wise attribution similarity as a function of positional gap. Real pairs are compared against random continuations from the same prefix; lower $\ell_1$ and higher Jaccard indicate greater similarity.}
  \label{fig:token_attr_similarity_gap}
\end{figure}

\begin{figure}[H]
  \centering
  \includegraphics[width=0.72\linewidth]{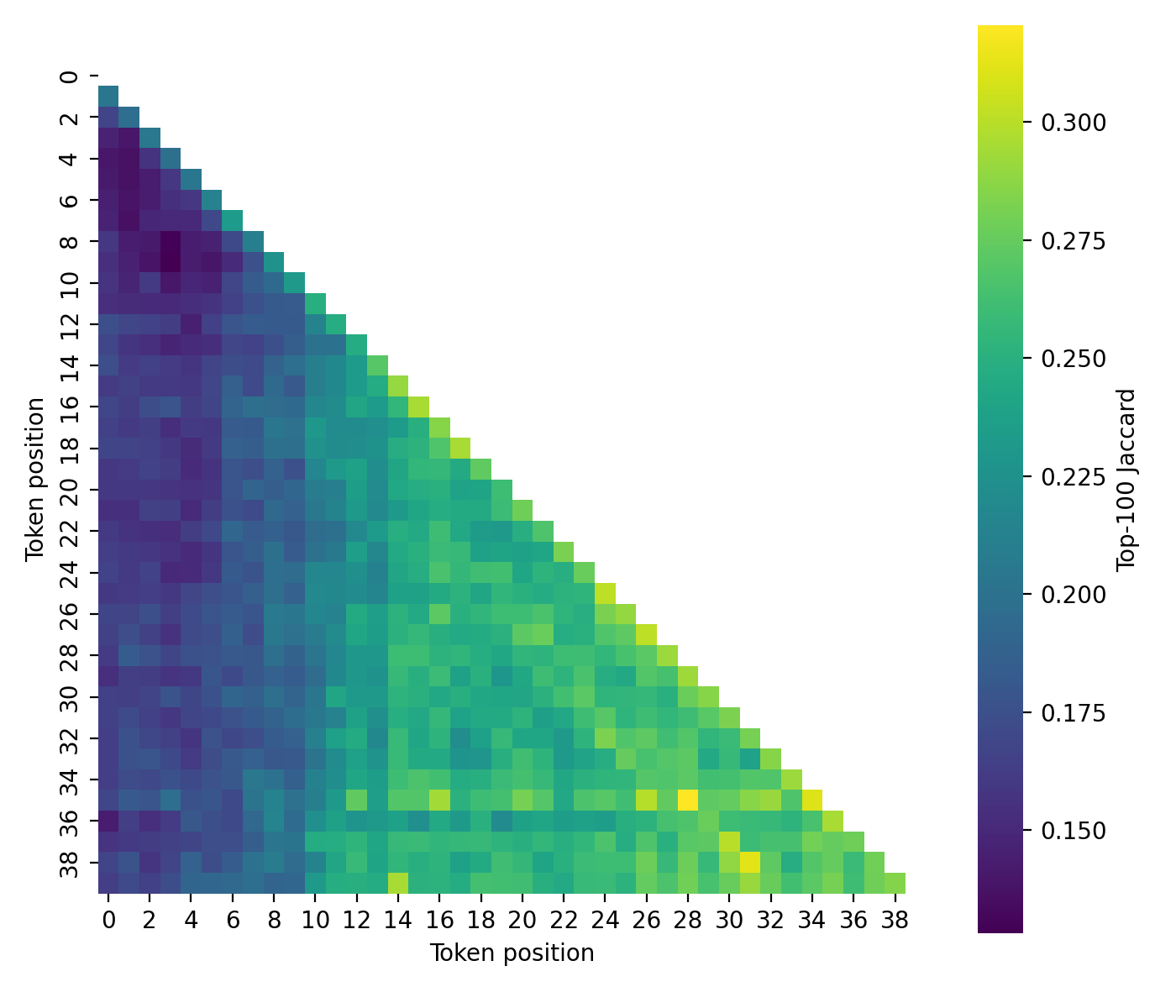}
  \caption{Lower-triangular heatmap of top-100 Jaccard overlap between token-level attribution vectors, averaged across the Qwen3-4B consistency subset.}
  \label{fig:token_attr_similarity_jaccard_heatmap}
\end{figure}

\begin{figure}[H]
  \centering
  \includegraphics[width=0.8\linewidth]{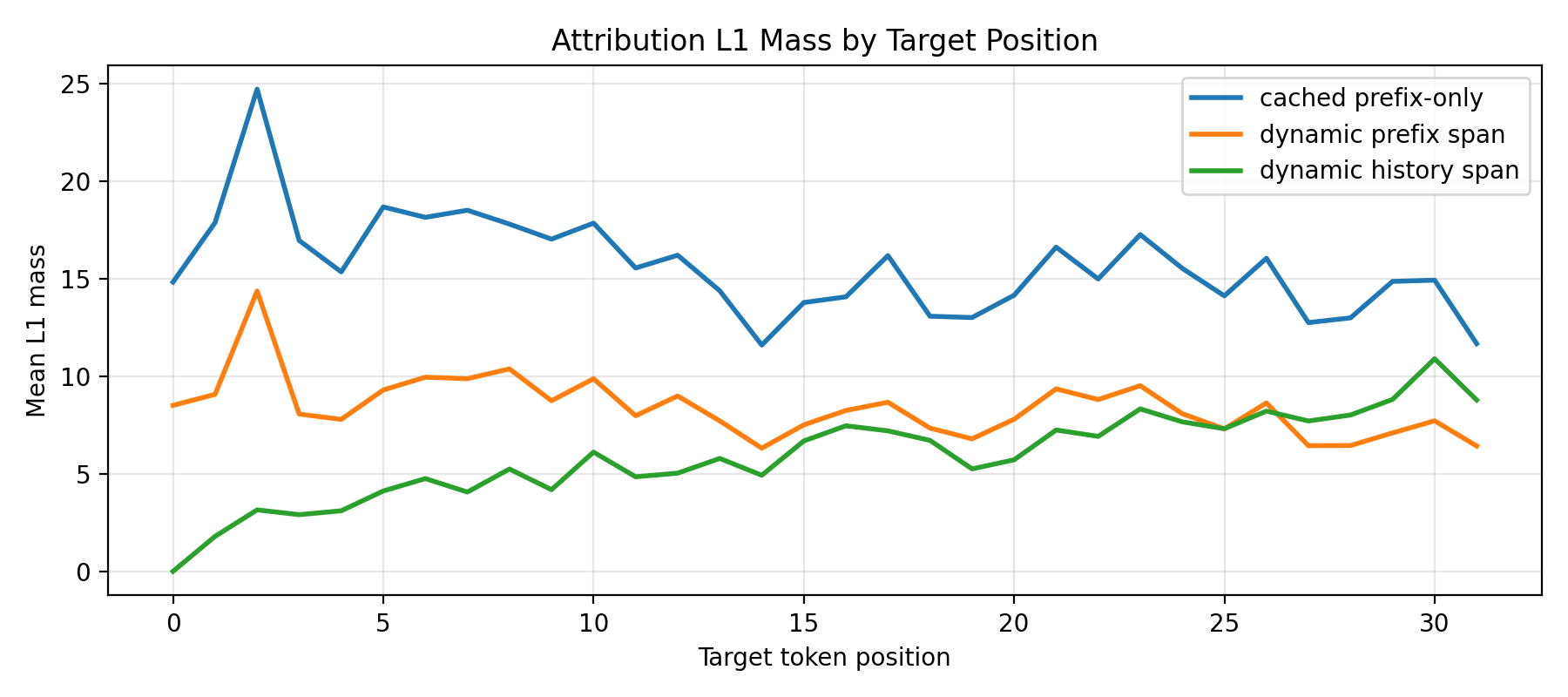}
  \caption{Dynamic source-mass decomposition by target position. The cached prefix-only curve uses the saved full-continuation attribution, while the dynamic prefix and history curves recompute attribution for each target token using the original prefix plus preceding generated tokens as source context.}
  \label{fig:prefix_history_l1_mass}
\end{figure}

\begin{figure}[H]
  \centering
  \includegraphics[width=0.8\linewidth]{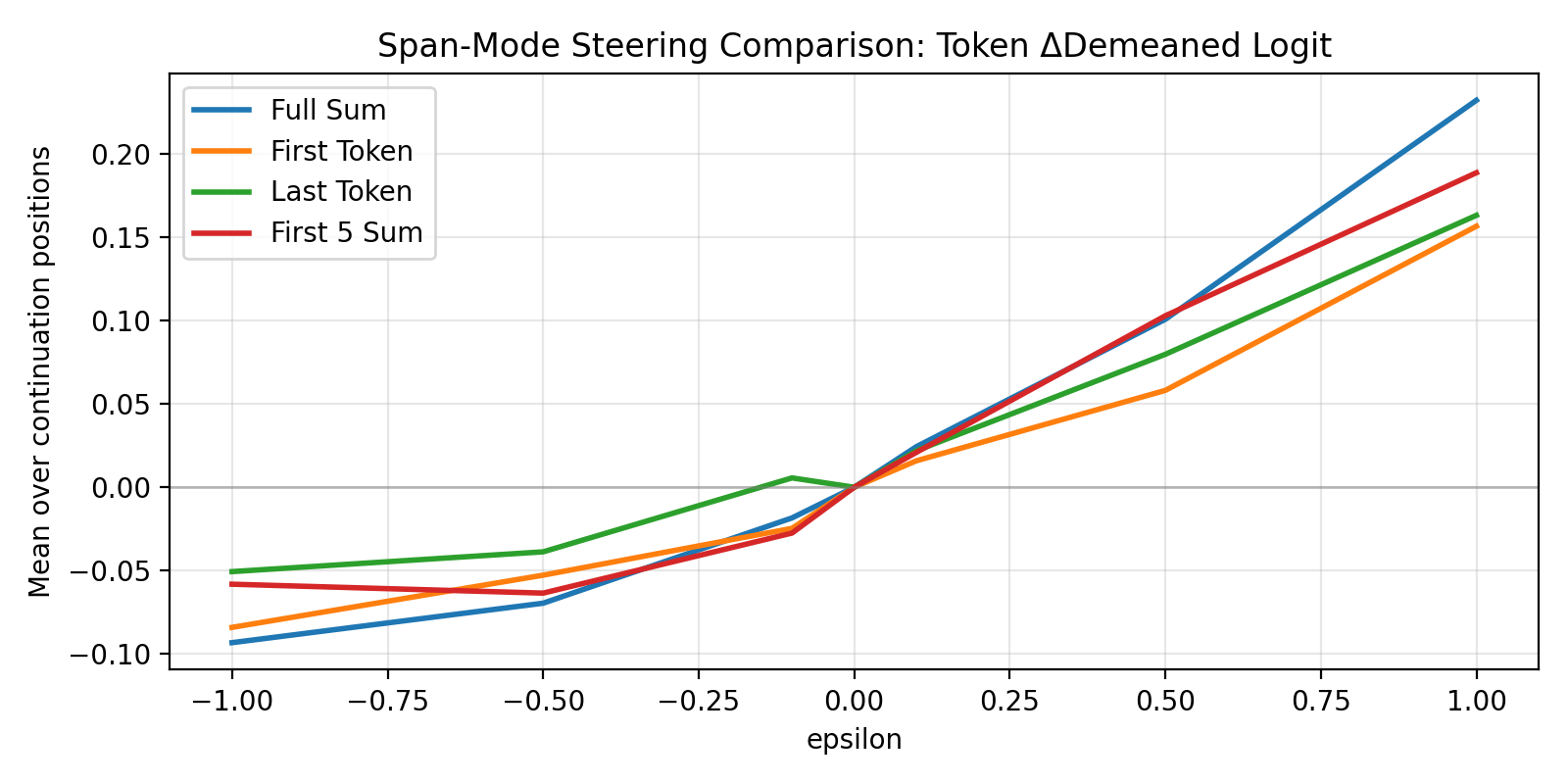}
  \caption{Span-mode steering comparison on the Qwen3-4B scaled subset. The full continuation span gives the strongest average signed response, while single-token spans are noisier but remain directionally aligned.}
  \label{fig:span_mode_compare_token_dlogit}
\end{figure}

\begin{figure}[H]
  \centering
  \begin{subfigure}{0.48\linewidth}
    \centering
    \includegraphics[width=\linewidth]{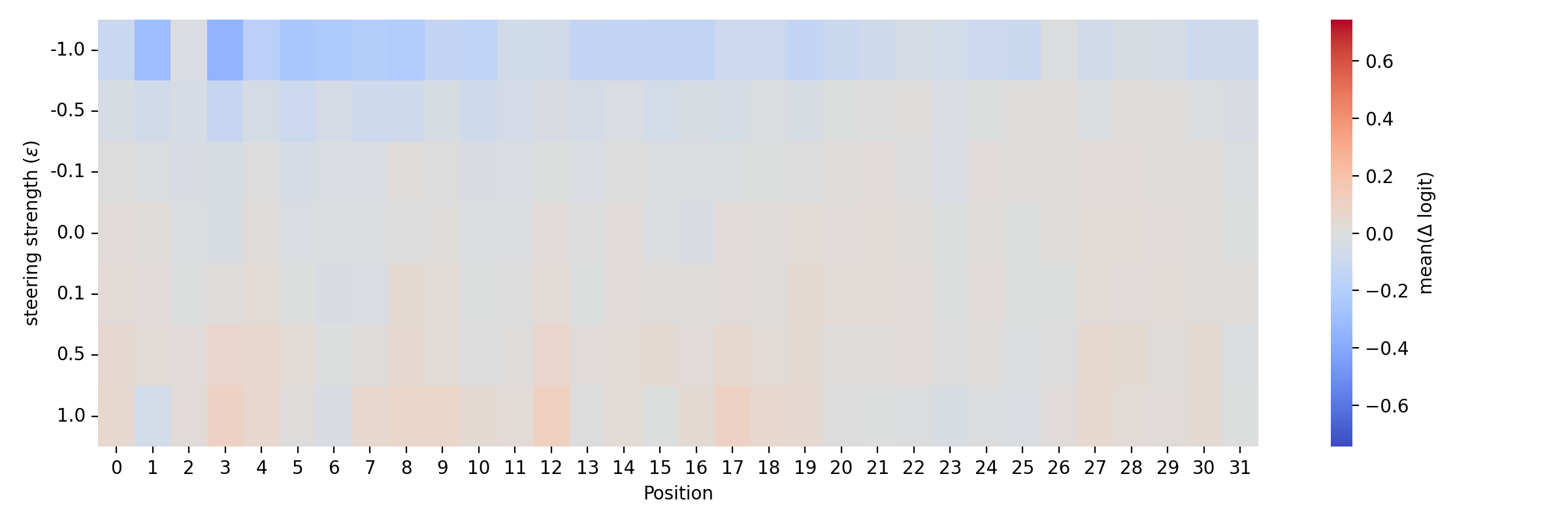}
    \caption{Attribution-selected features.}
    \label{fig:attr_token_dlogit_demeaned}
  \end{subfigure}
  \hfill
  \begin{subfigure}{0.48\linewidth}
    \centering
    \includegraphics[width=\linewidth]{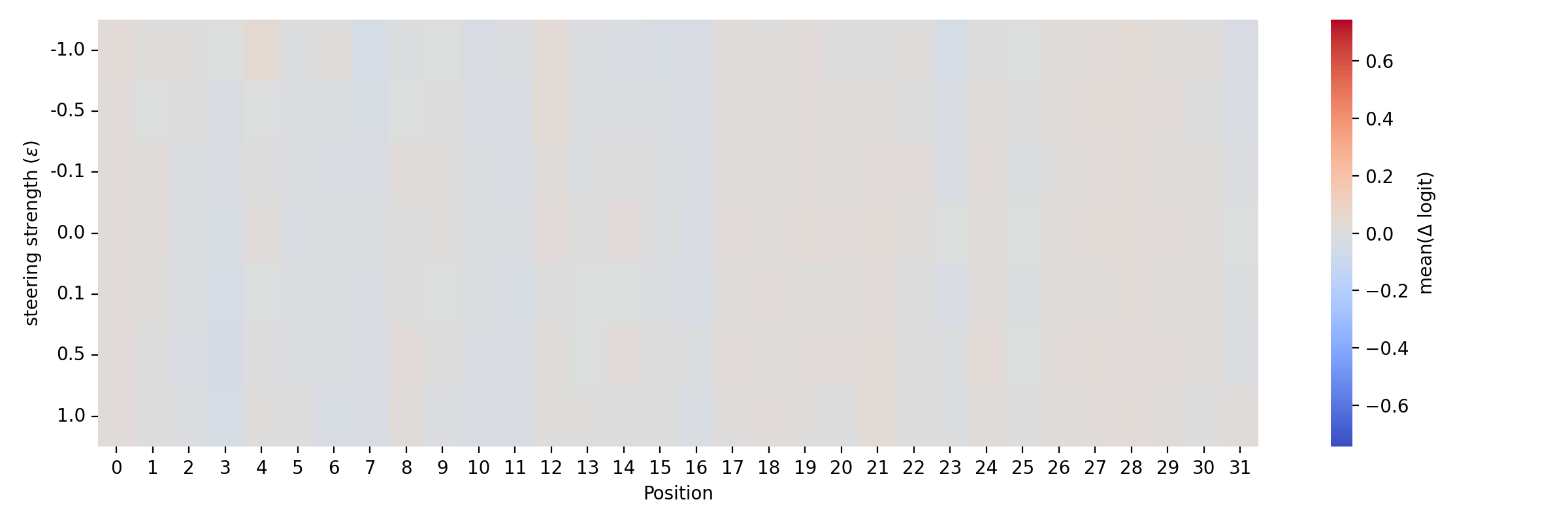}
    \caption{Random baseline features.}
    \label{fig:attr_token_dlogit_demeaned_random}
  \end{subfigure}
  \caption{Average token-level steering effect, measured as demeaned target-logit change across continuation positions. Attribution-selected features produce a more coherent signed response than matched random feature sets.}
  \label{fig:attr_token_dlogit_demeaned_comparison}
\end{figure}

\section{Limitations and Future Work.}
\label{app:limitations}
We introduce an unsupervised feature-discovery framework that clusters \emph{continuations} while jointly controlling semantic variance and mechanistic variance.
Despite the flexibility and empirical causal effectiveness of the approach, several limitations remain.

\begin{itemize}
    \item \textbf{Computational overhead.}
    Our pipeline requires computing semantic embeddings and mechanistic representations for each sampled continuation.
    In particular, attribution-based mechanisms typically require a backward pass (and, depending on the attribution choice, additional forward passes), which becomes expensive for long continuations and long-context prefixes.
    This cost can be a bottleneck in regimes such as long-context QA, agentic interaction traces, or multi-step reasoning where both the prefix and continuation lengths grow.
    That said, designing attribution methods that are simultaneously \emph{efficient} and \emph{faithful} remains an open challenge in mechanistic interpretability, and our framework is intentionally \emph{representation-agnostic}: it can incorporate alternative mechanism summaries (e.g., cheaper proxies, sparse feature readouts, cached gradients, or layer-/token-subsampled attributions) without changing the clustering objective.
    We view scaling to long-context settings as a concrete direction for future work, where more efficient mechanism representations could make continuation-level unsupervised discovery practical in agent and reasoning-model deployments.

    \item \textbf{Feature labeling and interpretability.}
    While our method discovers coherent groups of continuations and associated mechanistic signatures, we currently lack a principled evaluation framework for assigning \emph{human-interpretable} labels to clusters.
    In practice, a single cluster can entangle multiple axes of variation (e.g., answer identity, formatting/framing style, hedging, citation patterns), and the resulting cluster centers in mechanism space are high-dimensional and not immediately legible.
    Future work could integrate automatic labeling~\citep{paulo2025automaticallyinterpretingmillionsfeatures} via (i) exemplar-based summaries (e.g., cluster medoids used in our steering pipeline), (ii) feature-level interpretation tools (SAE or TC feature descriptions and activation-based retrieval)~\citep{lindsey2025biology}, and (iii) lightweight natural-language explanations that are explicitly validated by causal tests (e.g., targeted feature ablations or steering).
    Developing robust labeling protocols is especially important if clusters are treated as scientific hypotheses about circuits rather than merely descriptive groupings.

    \item \textbf{Token-level dynamics.}
    As discussed in Appendix~\ref{app:continuation-attribution-analysis}, our analysis does not attempt to trace the complete interventional effect to a target logit.
    Rather, we estimate effects from the \textit{prefix} to the continuation as a whole, instead of attributing influence from all preceding tokens to a single next-token prediction.
    A full tracing analysis of token-level dynamics remains an important direction for future work.

\end{itemize}
\section{Additional Results.}

\subsection{Detailed Jaccard Similarity Figure}
\begin{figure}[H]
  \centering
  \includegraphics[width=\linewidth]{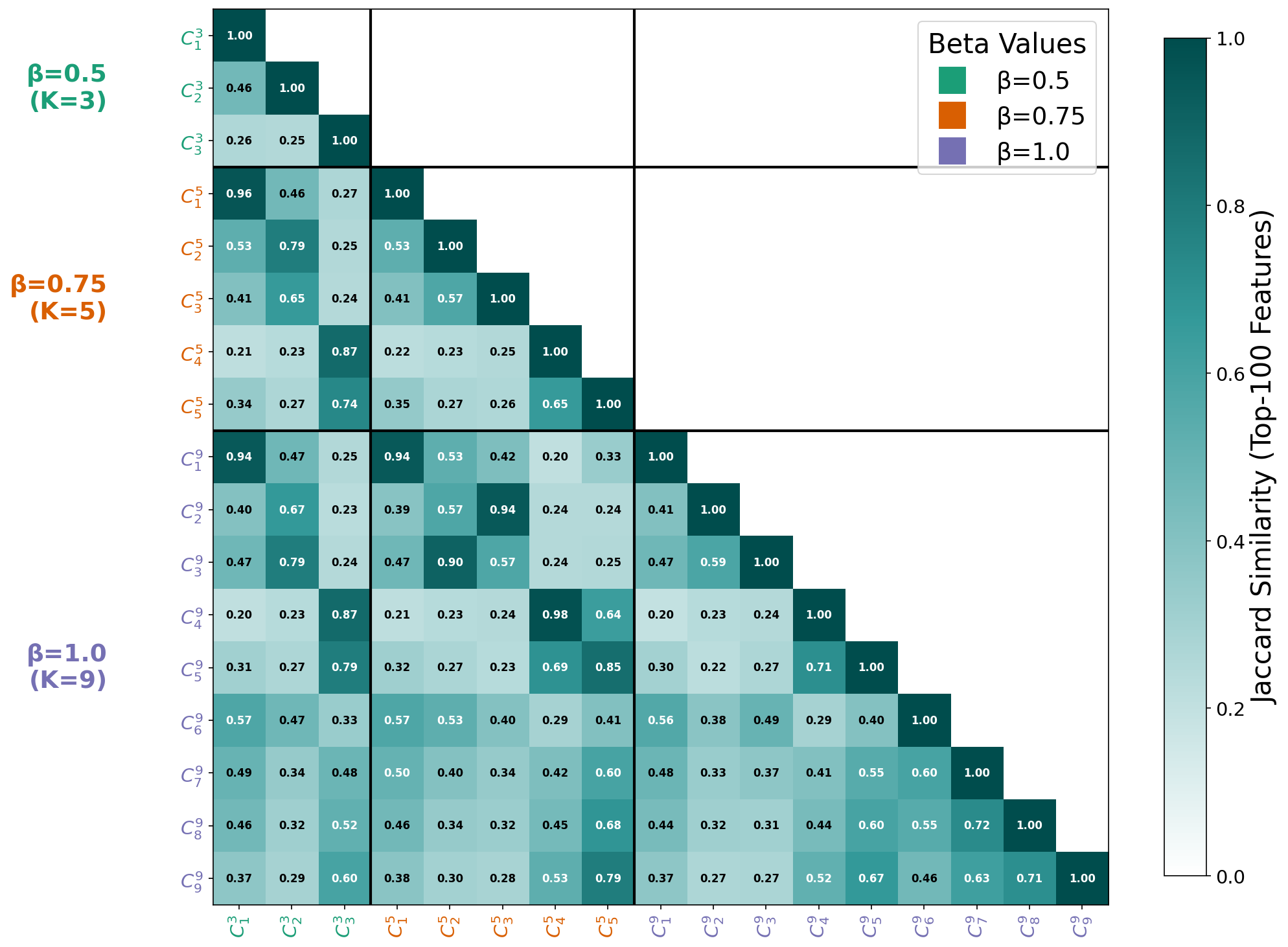}
\caption{Figure \ref{fig:jaccard_beta} with Jaccard similiarity scores labeled.}

  \label{fig:jaccard_number}
\end{figure}

\subsection{Extended Experiments}
\label{app:extened-results}

\textbf{Setup.}
We include additional MMLU and Gemma3 extensions.
The MMLU experiment evaluates Qwen3-4B and Qwen3-8B on a 209-question validation subset spanning international law, logical fallacies, moral disputes, philosophy, professional psychology, sociology, and world religions.
The Gemma3 experiment evaluates AmbigQA with Gemma3-4B-it.
For each raw question or prompt, we sample continuations and apply the same binary prefix-continuation analysis as the main AmbigQA setting: each continuation receives a semantic embedding and a prefix-to-continuation attribution vector, and clustering is performed on the resulting two-view representation.
This differs from Appendix~\ref{app:reasoning}, where each cell is a pair of reasoning steps $(i,j)$; here each cell is a standard question-level continuation distribution.

\textbf{Results.}
Table~\ref{tab:mmlu-clustering} reports the MMLU mixed distortion $D_\gamma$.
Across both model sizes, RD gives the lowest $D_\gamma$ in every listed $(\beta,\gamma)$ configuration, with especially large gaps over semantic-only K-means.
The result supports the main clustering claim outside AmbigQA: jointly optimizing semantic and mechanistic distortion recovers lower-distortion partitions than matching the rate with a single-view baseline.
Tables~\ref{tab:mmlu-qwen4b-steering} and~\ref{tab:mmlu-qwen8b-steering} add MMLU steering results for the same Qwen model family.
Qwen3-4B shows clear monotonic steering for RD and Single directions, while KM-Sem correlations remain near zero.
Qwen3-8B is substantially weaker: correlations are small across all three methods, so the 8B steering result is less decisive than the clustering result.
Table~\ref{tab:gemma-clustering} reports the Gemma3-4B-it AmbigQA clustering result.
RD is consistently better than semantic-only K-means, while attribution-only K-means is competitive at larger $\gamma$; this makes the Gemma result a useful stress test rather than a uniformly favorable case.

\begin{table*}[t]
\centering
\scriptsize
\setlength{\tabcolsep}{4pt}
\renewcommand{\arraystretch}{0.92}
\begin{tabular}{llcrrrr}
\toprule
Model & Method & $\gamma$ & $\beta=0.50$ & $\beta=0.75$ & $\beta=1.25$ & $\beta=1.50$ \\
\midrule
\multirow{9}{*}{Qwen3-4B}
& RD & 0.30 & \textbf{7.78} & \textbf{6.10} & \textbf{5.12} & \textbf{5.07} \\
& RD & 0.50 & \textbf{6.76} & \textbf{5.37} & \textbf{4.02} & \textbf{3.76} \\
& RD & 0.70 & \textbf{5.00} & \textbf{4.25} & \textbf{3.17} & \textbf{2.87} \\
\cmidrule(lr){2-7}
& KM-S & 0.30 & 29.00 & 25.58 & 23.75 & 23.66 \\
& KM-S & 0.50 & 22.77 & 20.21 & 17.46 & 17.10 \\
& KM-S & 0.70 & 14.62 & 13.91 & 11.98 & 11.36 \\
\cmidrule(lr){2-7}
& KM-A & 0.30 & 23.17 & 19.03 & 17.14 & 17.10 \\
& KM-A & 0.50 & 19.43 & 15.98 & 12.77 & 12.40 \\
& KM-A & 0.70 & 13.06 & 12.00 & 9.45 & 8.64 \\
\midrule
\multirow{9}{*}{Qwen3-8B}
& RD & 0.30 & \textbf{7.36} & \textbf{5.85} & \textbf{4.75} & \textbf{4.61} \\
& RD & 0.50 & \textbf{6.21} & \textbf{5.11} & \textbf{3.84} & \textbf{3.57} \\
& RD & 0.70 & \textbf{4.43} & \textbf{3.94} & \textbf{3.11} & \textbf{2.74} \\
\cmidrule(lr){2-7}
& KM-S & 0.30 & 28.06 & 25.19 & 23.16 & 22.98 \\
& KM-S & 0.50 & 21.55 & 19.59 & 17.35 & 16.77 \\
& KM-S & 0.70 & 13.63 & 13.15 & 11.80 & 11.26 \\
\cmidrule(lr){2-7}
& KM-A & 0.30 & 23.83 & 20.10 & 17.59 & 17.40 \\
& KM-A & 0.50 & 19.12 & 16.58 & 13.58 & 12.92 \\
& KM-A & 0.70 & 12.52 & 11.75 & 9.96 & 9.18 \\
\bottomrule
\end{tabular}
\caption{\textbf{MMLU clustering results.}
Rows report mixed distortion $D_\gamma$ on the 209-question MMLU validation subset; lower is better.
KM-S denotes K-means on semantic embeddings, and KM-A denotes K-means on attribution vectors.}
\label{tab:mmlu-clustering}
\end{table*}

\begin{table*}[t]
\centering
\scriptsize
\renewcommand{\arraystretch}{0.90}
\setlength{\tabcolsep}{2.6pt}
\begin{tabular}{@{}>{\raggedright\arraybackslash}p{1.55cm}l*{9}{r}@{}}
\toprule
\multicolumn{2}{c}{} & \multicolumn{9}{c}{$\beta$} \\
\cmidrule(lr){3-11}
Method & Metric
& \multicolumn{3}{c}{0.75}
& \multicolumn{3}{c}{1.25}
& \multicolumn{3}{c}{1.50} \\
\cmidrule(lr){3-5}\cmidrule(lr){6-8}\cmidrule(lr){9-11}
& $\gamma$
& 0.3 & 0.5 & 0.7
& 0.3 & 0.5 & 0.7
& 0.3 & 0.5 & 0.7 \\
\midrule
\multirow{8}{*}{RD}
& $\rho_s$ & 0.476 & 0.358 & 0.323 & 0.477 & 0.524 & 0.313 & 0.489 & 0.474 & 0.359 \\
& $\rho$ & 0.376 & 0.274 & 0.243 & 0.388 & 0.417 & 0.240 & 0.386 & 0.373 & 0.280 \\
\cmidrule(lr){2-11}
& $\epsilon_{-1.0}$ & -2.494 & -2.642 & -2.789 & -1.389 & -1.404 & -2.377 & -1.073 & -1.453 & -2.308 \\
& $\epsilon_{-0.5}$ & -2.161 & -2.250 & -2.448 & -1.458 & -1.543 & -2.208 & -1.288 & -1.639 & -2.134 \\
& $\epsilon_{-0.1}$ & -0.562 & -0.602 & -0.665 & -0.387 & -0.443 & -0.550 & -0.339 & -0.412 & -0.552 \\
& $\epsilon_{0.1}$ & 0.724 & 0.667 & 0.674 & 0.535 & 0.604 & 0.640 & 0.544 & 0.597 & 0.743 \\
& $\epsilon_{0.5}$ & 3.969 & 4.139 & 4.349 & 3.013 & 3.277 & 4.144 & 2.977 & 3.236 & 4.243 \\
& $\epsilon_{1.0}$ & 8.676 & 9.238 & 9.537 & 6.699 & 7.252 & 9.337 & 6.611 & 7.212 & 9.444 \\
\midrule
\multirow{8}{*}{KM-Sem}
& $\rho_s$ & 0.004 & -0.003 & -0.018 & 0.032 & -0.040 & -0.015 & 0.001 & -0.046 & 0.021 \\
& $\rho$ & -0.011 & -0.001 & -0.012 & 0.016 & -0.042 & -0.011 & -0.005 & -0.047 & 0.005 \\
\cmidrule(lr){2-11}
& $\epsilon_{-1.0}$ & 1.257 & 1.570 & 1.761 & 0.745 & 0.972 & 1.836 & 0.731 & 0.886 & 1.501 \\
& $\epsilon_{-0.5}$ & 0.366 & 0.294 & 0.451 & 0.155 & 0.205 & 0.422 & 0.142 & 0.191 & 0.357 \\
& $\epsilon_{-0.1}$ & 0.111 & -0.043 & -0.011 & 0.104 & 0.059 & 0.008 & 0.107 & 0.127 & 0.040 \\
& $\epsilon_{0.1}$ & 0.006 & -0.022 & -0.002 & 0.157 & 0.060 & -0.015 & 0.182 & 0.108 & 0.064 \\
& $\epsilon_{0.5}$ & 0.209 & 0.393 & 0.329 & 0.618 & 0.437 & 0.320 & 0.605 & 0.466 & 0.336 \\
& $\epsilon_{1.0}$ & 0.901 & 1.411 & 1.253 & 1.154 & 1.144 & 1.309 & 1.188 & 1.259 & 1.406 \\
\midrule
\multirow{8}{*}{Single}
& $\rho_s$ & 0.476 & 0.354 & 0.317 & 0.477 & 0.525 & 0.304 & 0.489 & 0.474 & 0.355 \\
& $\rho$ & 0.377 & 0.275 & 0.239 & 0.388 & 0.416 & 0.240 & 0.386 & 0.372 & 0.289 \\
\cmidrule(lr){2-11}
& $\epsilon_{-1.0}$ & -2.499 & -2.652 & -2.845 & -1.308 & -1.444 & -2.391 & -1.102 & -1.506 & -2.267 \\
& $\epsilon_{-0.5}$ & -2.125 & -2.277 & -2.486 & -1.415 & -1.534 & -2.210 & -1.293 & -1.630 & -2.087 \\
& $\epsilon_{-0.1}$ & -0.560 & -0.603 & -0.697 & -0.377 & -0.436 & -0.548 & -0.342 & -0.401 & -0.561 \\
& $\epsilon_{0.1}$ & 0.732 & 0.648 & 0.681 & 0.527 & 0.621 & 0.609 & 0.552 & 0.612 & 0.711 \\
& $\epsilon_{0.5}$ & 3.933 & 4.138 & 4.314 & 2.983 & 3.236 & 4.095 & 2.973 & 3.192 & 4.177 \\
& $\epsilon_{1.0}$ & 8.689 & 9.235 & 9.476 & 6.709 & 7.236 & 9.293 & 6.613 & 7.181 & 9.379 \\
\bottomrule
\end{tabular}
\caption{\textbf{MMLU Qwen3-4B steering results.}
Rows report correlations between steering strength and cluster-demeaned target-logit change, followed by the mean logit change at each steering strength.
\textbf{RD} uses the two-view cluster medoid direction; \textbf{KM-Sem} uses medoids from matched semantic-only K-means clusters; \textbf{Single} uses a single sampled attribution direction.
Rows are ordered as RD, KM-Sem, and Single; columns are grouped by $\beta$ and $\gamma$.}
\label{tab:mmlu-qwen4b-steering}
\end{table*}

\begin{table*}[t]
\centering
\scriptsize
\renewcommand{\arraystretch}{0.90}
\setlength{\tabcolsep}{2.6pt}
\begin{tabular}{@{}>{\raggedright\arraybackslash}p{1.55cm}l*{9}{r}@{}}
\toprule
\multicolumn{2}{c}{} & \multicolumn{9}{c}{$\beta$} \\
\cmidrule(lr){3-11}
Method & Metric
& \multicolumn{3}{c}{0.75}
& \multicolumn{3}{c}{1.25}
& \multicolumn{3}{c}{1.50} \\
\cmidrule(lr){3-5}\cmidrule(lr){6-8}\cmidrule(lr){9-11}
& $\gamma$
& 0.3 & 0.5 & 0.7
& 0.3 & 0.5 & 0.7
& 0.3 & 0.5 & 0.7 \\
\midrule
\multirow{8}{*}{RD}
& $\rho_s$ & -0.041 & 0.058 & 0.051 & 0.060 & -0.015 & 0.014 & 0.082 & 0.070 & 0.017 \\
& $\rho$ & -0.030 & 0.042 & 0.020 & 0.054 & -0.007 & -0.006 & 0.065 & 0.055 & 0.002 \\
\cmidrule(lr){2-11}
& $\epsilon_{-1.0}$ & 4.442 & 4.943 & 4.739 & 4.240 & 4.945 & 4.640 & 4.013 & 5.127 & 3.960 \\
& $\epsilon_{-0.5}$ & 1.766 & 1.382 & 0.992 & 1.472 & 1.463 & 1.224 & 1.627 & 1.592 & 1.516 \\
& $\epsilon_{-0.1}$ & 0.562 & 0.389 & 0.211 & 0.205 & 0.805 & 0.154 & 0.380 & 0.704 & 0.317 \\
& $\epsilon_{0.1}$ & -0.004 & 0.358 & 0.063 & -0.194 & 0.164 & 0.129 & 0.027 & -0.099 & 0.075 \\
& $\epsilon_{0.5}$ & 0.113 & 0.616 & -0.064 & -0.516 & -0.079 & 0.363 & -0.012 & 0.172 & 0.255 \\
& $\epsilon_{1.0}$ & 0.247 & 1.147 & 0.608 & -0.236 & -0.319 & 0.717 & 0.761 & 1.119 & 0.355 \\
\midrule
\multirow{8}{*}{KM-Sem}
& $\rho_s$ & 0.018 & -0.027 & -0.005 & 0.020 & 0.018 & -0.018 & 0.033 & -0.006 & -0.038 \\
& $\rho$ & 0.010 & -0.028 & -0.016 & 0.012 & 0.010 & -0.013 & 0.013 & 0.008 & -0.038 \\
\cmidrule(lr){2-11}
& $\epsilon_{-1.0}$ & 2.056 & 3.282 & 1.947 & 1.738 & 2.438 & 3.050 & 1.844 & 3.552 & 3.555 \\
& $\epsilon_{-0.5}$ & 0.429 & 0.903 & 0.533 & 0.137 & 0.680 & 0.878 & 0.127 & 1.031 & 1.065 \\
& $\epsilon_{-0.1}$ & -0.016 & 0.259 & 0.181 & -0.233 & 0.111 & 0.135 & -0.096 & 0.116 & 0.401 \\
& $\epsilon_{0.1}$ & 0.521 & 0.539 & 0.203 & 0.102 & 0.863 & 0.324 & 0.479 & 0.367 & 0.518 \\
& $\epsilon_{0.5}$ & 1.588 & 1.154 & 0.739 & 1.437 & 2.021 & 0.775 & 1.717 & 1.703 & 1.008 \\
& $\epsilon_{1.0}$ & 3.798 & 3.777 & 2.828 & 2.792 & 3.834 & 3.270 & 3.464 & 4.329 & 2.570 \\
\midrule
\multirow{8}{*}{Single}
& $\rho_s$ & -0.006 & 0.067 & 0.051 & 0.079 & 0.045 & 0.047 & 0.103 & 0.085 & 0.037 \\
& $\rho$ & -0.018 & 0.060 & 0.036 & 0.072 & 0.046 & 0.043 & 0.091 & 0.078 & 0.022 \\
\cmidrule(lr){2-11}
& $\epsilon_{-1.0}$ & 4.600 & 4.533 & 3.452 & 4.041 & 5.077 & 4.398 & 3.826 & 5.040 & 3.442 \\
& $\epsilon_{-0.5}$ & 1.658 & 1.228 & 0.917 & 1.437 & 1.659 & 1.330 & 1.626 & 1.847 & 1.287 \\
& $\epsilon_{-0.1}$ & 0.575 & 0.586 & 0.220 & 0.120 & 0.878 & 0.343 & 0.339 & 0.684 & 0.271 \\
& $\epsilon_{0.1}$ & -0.020 & 0.458 & 0.175 & -0.237 & 0.067 & 0.319 & -0.049 & -0.084 & 0.139 \\
& $\epsilon_{0.5}$ & 0.055 & 0.490 & 0.022 & -0.542 & -0.170 & 0.143 & -0.063 & 0.334 & 0.263 \\
& $\epsilon_{1.0}$ & 0.456 & 1.227 & 0.489 & -0.114 & 0.083 & 0.581 & 0.887 & 1.463 & 0.444 \\
\bottomrule
\end{tabular}
\caption{\textbf{MMLU Qwen3-8B steering results.}
Rows report correlations between steering strength and cluster-demeaned target-logit change, followed by the mean logit change at each steering strength.
\textbf{RD} uses the two-view cluster medoid direction; \textbf{KM-Sem} uses medoids from matched semantic-only K-means clusters; \textbf{Single} uses a single sampled attribution direction.
Rows are ordered as RD, KM-Sem, and Single; columns are grouped by $\beta$ and $\gamma$.}
\label{tab:mmlu-qwen8b-steering}
\end{table*}

\begin{table}[t]
\centering
\scriptsize
\setlength{\tabcolsep}{4pt}
\renewcommand{\arraystretch}{0.92}
\begin{tabular}{ccccc}
\toprule
$\gamma$ & $\beta$ & RD & KM-S & KM-A \\
\midrule
0.30 & 0.50 & \textbf{4.81} & 6.62 & 4.88 \\
0.30 & 0.75 & \textbf{3.31} & 5.58 & 3.73 \\
0.30 & 1.00 & \textbf{2.78} & 5.18 & 3.41 \\
0.50 & 0.50 & 4.70 & 5.50 & \textbf{4.42} \\
0.50 & 0.75 & \textbf{3.20} & 4.58 & 3.32 \\
0.50 & 1.00 & \textbf{2.52} & 4.10 & 2.79 \\
0.70 & 0.50 & 3.76 & 3.79 & \textbf{3.27} \\
0.70 & 0.75 & 3.03 & 3.43 & \textbf{2.82} \\
0.70 & 1.00 & 2.43 & 3.09 & \textbf{2.40} \\
\bottomrule
\end{tabular}
\caption{\textbf{AmbigQA+Gemma3-4B-it clustering results.}
Rows report mixed distortion $D_\gamma$; lower is better.}
\label{tab:gemma-clustering}
\end{table}

\FloatBarrier
\subsection{Reasoning Task Results}
\label{app:reasoning}

\textbf{Setup.}
Reasoning tasks differ from the binary prefix-continuation setting used in the main experiments because the unit of analysis is a pair of reasoning steps rather than a single fixed prefix.
We evaluate Qwen3-0.6B on GSM8K~\citep{cobbe2021trainingverifierssolvemath} and MATH500~\citep{lightman2023letsverifystepstep} using the Qwen3-0.6B transcoders.
For each example, we roll out up to 12 reasoning steps (delimited by ``\verb|\n\n|'') and sample 64 candidate traces per step.
The committed reasoning path used for later prefixes is deterministic: at each step, we select the candidate with the highest normalized sequence probability.
For every target step $j$, we then construct explicit pairwise inputs for each previous source step $i<j$.
The attribution and semantic embedding for a pair are computed from the fixed prefix containing the problem and committed earlier steps, while the continuation span is the target-step trace.
Thus, unlike the other datasets where one prefix is compared to one continuation span, the reasoning analysis directly asks how a previous reasoning step $i$ changes the distribution over a later step $j$.

We use the same clustering objective as in the main experiments.
The reported run uses combined-medoid clustering with $\beta=0.75$ and $\gamma=0.7$, selected from the sweep because it produced non-degenerate cluster counts in the target range.
For steering validation, we run the steering run separately for each explicit pair $(i,j)$ using the full cluster direction $h_c$, top-$B=5$ transcoder features, and steering strengths $\epsilon\in\{-1.0,-0.5,-0.1,0,0.1,0.5,1.0\}$.
We report correlations between steering strength and the target cluster-centered logit change.

\textbf{Quantitative results.}
Table~\ref{tab:reasoning-steering-qwen3-0-6b} summarizes the steering validation across the two reasoning benchmarks.
Across 2,898 step-pair prefix files, 2,445 valid pairwise steering evaluations, and 9,536 target-step clusters, the overall Spearman correlation is positive ($\rho_s=0.13$).
MATH500 is slightly stronger than GSM8K ($0.15$ versus $0.11$), but both are positive, indicating that the discovered source-step feature directions tend to shift probability mass toward their associated target-step clusters.

\begin{table}[t]
\centering
\small
\begin{tabular}{lrrrrr}
\toprule
Dataset & $n_{\text{pair}}$ & $n_{\text{valid}}$ & $n_{\text{cluster}}$ & $\rho_s$ & $\rho$ \\
\midrule
GSM8K & 1353 & 1136 & 4377 & 0.11 & 0.11 \\
MATH500 & 1545 & 1309 & 5159 & 0.15 & 0.15 \\
All & 2898 & 2445 & 9536 & 0.13 & 0.13 \\
\bottomrule
\end{tabular}
\caption{Reasoning centered-logit steering correlations for Qwen3-0.6B, combined medoid, $\beta=0.75$, $\gamma=0.7$, full $h_c$, and top-$B=5$. Correlations are averaged over valid target-step clusters.}
\label{tab:reasoning-steering-qwen3-0-6b}
\end{table}

The aggregate number is modest because the effect is highly step-dependent.
Table~\ref{tab:reasoning-step-pair-steering-qwen3-0-6b} and Figure~\ref{fig:reasoning-overall-step-pair-heatmap} report the same validation grouped by explicit source-target pairs $(i,j)$.
The earliest source step has the largest and most persistent effect: for example, the pooled step-pair correlations are $\rho_s=0.64$ for $(1,2)$, $0.50$ for $(1,3)$, and remain above $0.30$ through several later targets.
In contrast, many adjacent or late-step pairs are close to zero or negative.
This supports the use of explicit pairwise reasoning validation: treating ``all previous steps'' as one prefix would obscure which prior step is causally aligned with a target-step cluster.

\begin{figure}[t]
  \centering
  \includegraphics[width=0.7\linewidth]{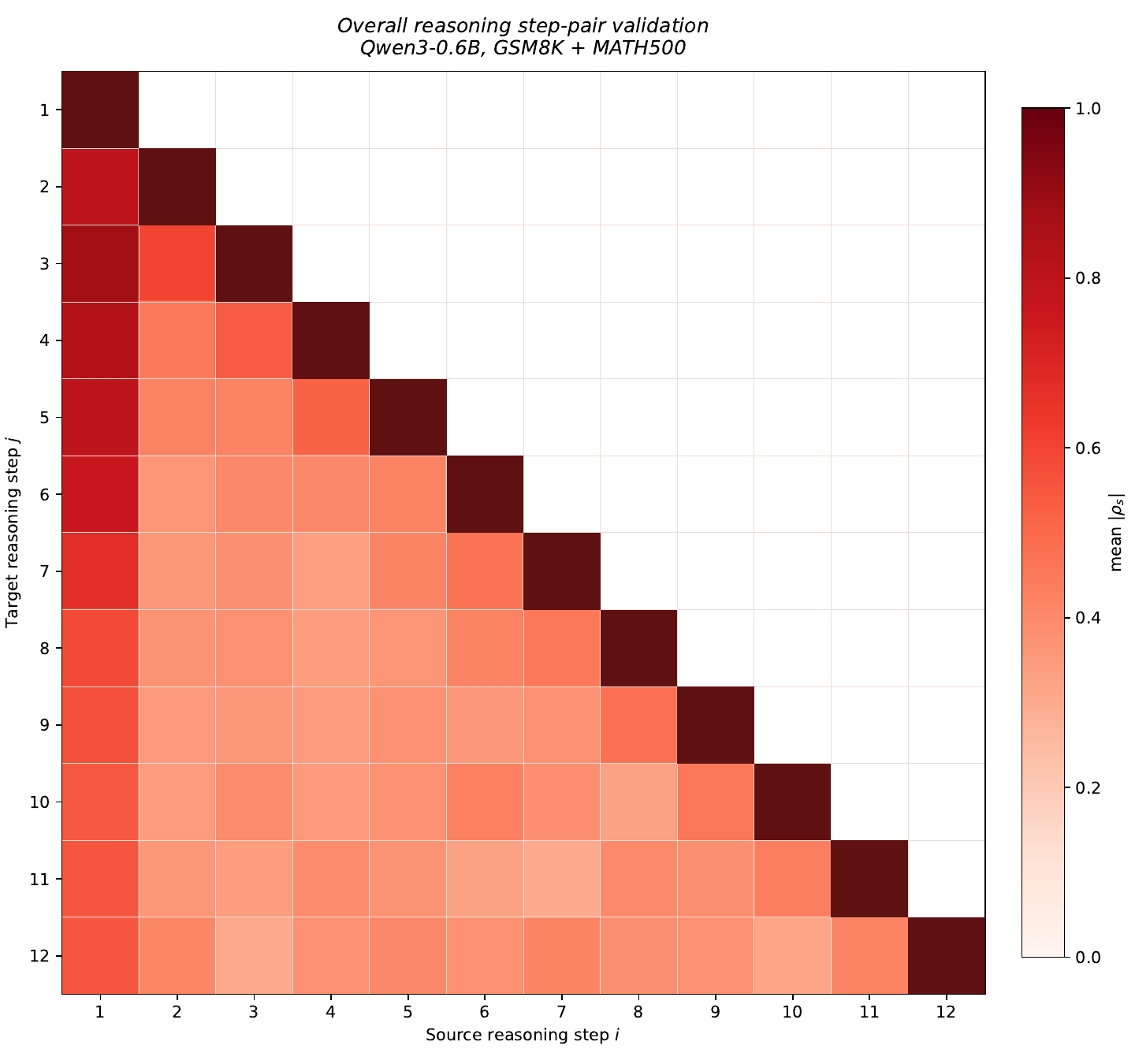}
  \caption{Reasoning step-pair validation heatmap for Qwen3-0.6B. Rows are target steps $j$, columns are previous source steps $i<j$, and cells show the mean absolute Spearman correlation $|\rho_s|$ across GSM8K and MATH500.}
  \label{fig:reasoning-overall-step-pair-heatmap}
\end{figure}

\begingroup
\small
\begin{longtable}{rrrrrr}
\caption{Reasoning centered-logit steering correlations grouped by explicit reasoning step pairs $(i,j)$ with $i<j$. Values average the per-pair cluster means across GSM8K and MATH500.}
\label{tab:reasoning-step-pair-steering-qwen3-0-6b}\\
\toprule
$i$ & $j$ & $n_{\text{pair}}$ & $n_{\text{cluster}}$ & $\rho_s$ & $\rho$ \\
\midrule
\endfirsthead
\toprule
$i$ & $j$ & $n_{\text{pair}}$ & $n_{\text{cluster}}$ & $\rho_s$ & $\rho$ \\
\midrule
\endhead
\midrule
\multicolumn{6}{r}{Continued on next page} \\
\endfoot
\bottomrule
\endlastfoot
1 & 2 & 109 & 394 & 0.64 & 0.65 \\
1 & 3 & 79 & 287 & 0.50 & 0.50 \\
2 & 3 & 88 & 364 & 0.14 & 0.14 \\
1 & 4 & 74 & 218 & 0.46 & 0.46 \\
2 & 4 & 89 & 381 & 0.10 & 0.11 \\
3 & 4 & 81 & 304 & 0.09 & 0.08 \\
1 & 5 & 48 & 166 & 0.48 & 0.48 \\
2 & 5 & 56 & 241 & 0.10 & 0.11 \\
3 & 5 & 58 & 229 & 0.05 & 0.04 \\
4 & 5 & 44 & 145 & -0.17 & -0.19 \\
1 & 6 & 55 & 197 & 0.46 & 0.47 \\
2 & 6 & 63 & 299 & 0.02 & 0.01 \\
3 & 6 & 59 & 230 & 0.02 & 0.03 \\
4 & 6 & 54 & 172 & 0.07 & 0.05 \\
5 & 6 & 41 & 136 & -0.03 & -0.05 \\
1 & 7 & 58 & 220 & 0.49 & 0.51 \\
2 & 7 & 57 & 257 & 0.03 & 0.03 \\
3 & 7 & 60 & 257 & 0.01 & 0.01 \\
4 & 7 & 55 & 206 & 0.01 & 0.02 \\
5 & 7 & 44 & 141 & 0.03 & 0.05 \\
6 & 7 & 36 & 123 & -0.03 & -0.02 \\
1 & 8 & 45 & 182 & 0.39 & 0.41 \\
2 & 8 & 47 & 203 & 0.01 & 0.02 \\
3 & 8 & 46 & 185 & 0.03 & 0.03 \\
4 & 8 & 41 & 186 & 0.01 & 0.01 \\
5 & 8 & 43 & 167 & 0.06 & 0.02 \\
6 & 8 & 32 & 110 & -0.09 & -0.08 \\
7 & 8 & 27 & 73 & -0.06 & -0.09 \\
1 & 9 & 45 & 166 & 0.38 & 0.39 \\
2 & 9 & 42 & 170 & 0.04 & 0.02 \\
3 & 9 & 39 & 176 & -0.00 & -0.03 \\
4 & 9 & 36 & 131 & 0.01 & -0.01 \\
5 & 9 & 39 & 154 & -0.01 & -0.03 \\
6 & 9 & 35 & 137 & 0.05 & 0.04 \\
7 & 9 & 30 & 102 & -0.03 & -0.01 \\
8 & 9 & 19 & 61 & 0.13 & 0.13 \\
1 & 10 & 38 & 160 & 0.32 & 0.34 \\
2 & 10 & 36 & 175 & 0.05 & 0.04 \\
3 & 10 & 32 & 124 & 0.00 & 0.02 \\
4 & 10 & 29 & 115 & 0.01 & 0.01 \\
5 & 10 & 30 & 121 & 0.04 & 0.03 \\
6 & 10 & 28 & 125 & -0.02 & -0.03 \\
7 & 10 & 18 & 65 & -0.07 & -0.07 \\
8 & 10 & 20 & 77 & -0.03 & -0.03 \\
9 & 10 & 13 & 46 & 0.06 & 0.07 \\
1 & 11 & 25 & 93 & 0.33 & 0.35 \\
2 & 11 & 23 & 113 & 0.11 & 0.09 \\
3 & 11 & 21 & 83 & 0.01 & 0.04 \\
4 & 11 & 17 & 72 & 0.05 & 0.02 \\
5 & 11 & 15 & 46 & -0.11 & -0.11 \\
6 & 11 & 21 & 82 & 0.02 & 0.04 \\
7 & 11 & 12 & 47 & -0.02 & 0.02 \\
8 & 11 & 11 & 47 & 0.08 & 0.05 \\
9 & 11 & 10 & 49 & 0.04 & 0.07 \\
10 & 11 & 9 & 33 & 0.16 & 0.15 \\
1 & 12 & 25 & 99 & 0.27 & 0.29 \\
2 & 12 & 18 & 70 & -0.07 & -0.03 \\
3 & 12 & 15 & 65 & -0.02 & -0.01 \\
4 & 12 & 16 & 70 & 0.07 & 0.07 \\
5 & 12 & 15 & 74 & 0.06 & 0.09 \\
6 & 12 & 19 & 93 & 0.00 & -0.02 \\
7 & 12 & 12 & 47 & 0.06 & 0.02 \\
8 & 12 & 13 & 52 & -0.00 & -0.03 \\
9 & 12 & 11 & 45 & -0.00 & -0.03 \\
10 & 12 & 10 & 39 & 0.08 & 0.11 \\
11 & 12 & 9 & 39 & -0.08 & -0.06 \\
\end{longtable}
\endgroup

\textbf{Qualitative results.}
Figures~\ref{fig:reasoning-fixed-target-clusters} and~\ref{fig:reasoning-alluvial-dynamics} show two representative examples.
The fixed target-cluster view holds the target step $j$ fixed and projects each prior source step $i<j$ onto the same target-step clusters.
For the GSM8K example with $j=6$, both target clusters are strongly positive at $i=1$ ($\rho_s\approx 0.96$ and $1.00$), but become negative for later source steps such as $i=4$ and $i=5$.
For the MATH500 geometry example with $j=7$, the first source step is again strongly positive, while the third source step is negative for both target clusters and the sixth source step separates the two clusters with opposite signs.
These examples show that the influence of previous reasoning steps is not monotone in distance from the target; different prior steps can strengthen, suppress, or separate different target-step clusters.

\begin{figure}[t]
  \centering
  \begin{subfigure}{0.48\linewidth}
    \centering
    \includegraphics[width=\linewidth]{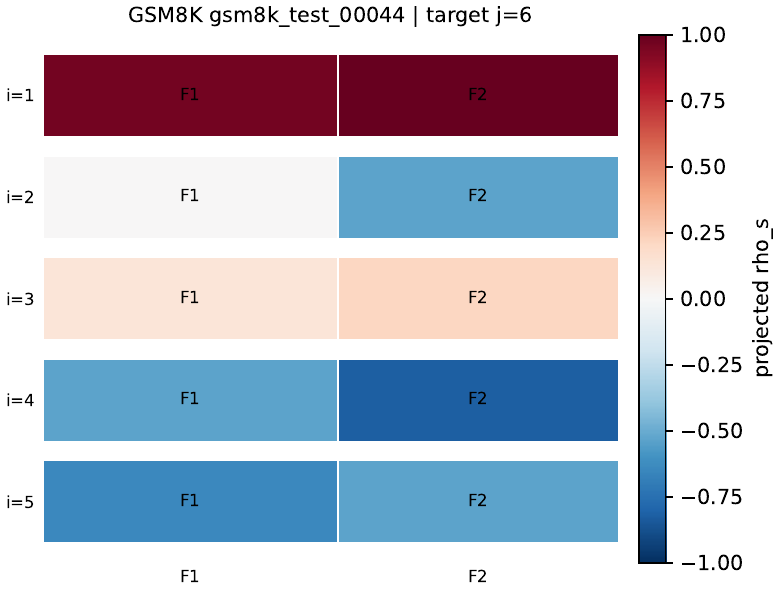}
    \caption{GSM8K, target $j=6$.}
  \end{subfigure}
  \hfill
  \begin{subfigure}{0.48\linewidth}
    \centering
    \includegraphics[width=\linewidth]{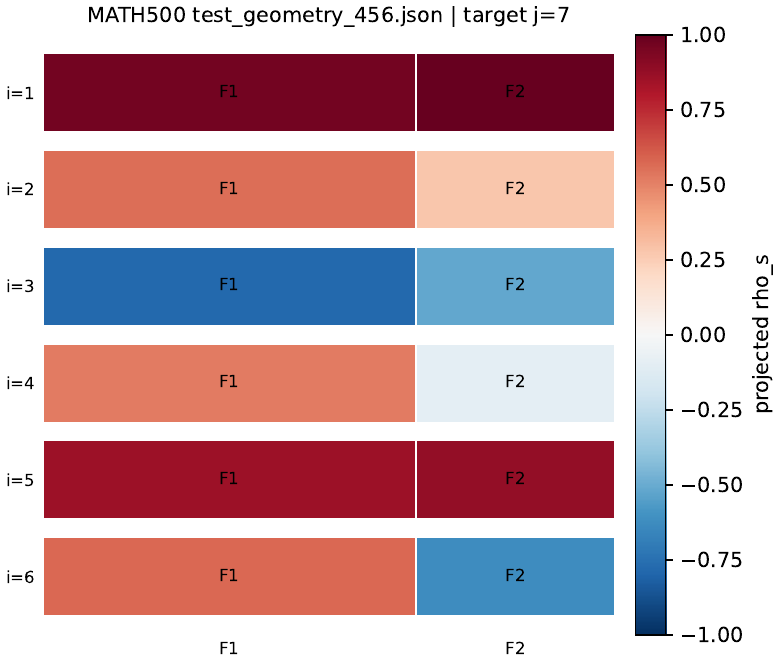}
    \caption{MATH500, target $j=7$.}
  \end{subfigure}
  \caption{Fixed target-cluster effects. Rows are prior source steps $i<j$, columns are fixed clusters of target-step continuations, column widths are target-cluster probability mass, and color is the projected Spearman correlation.}
  \label{fig:reasoning-fixed-target-clusters}
\end{figure}

This view complements the fixed-cluster projection by visualizing how local clusters split and merge as the source step changes.
Node heights encode local cluster probability mass, ribbons encode continuation-overlap mass between adjacent source-step clusterings, and node color encodes local $\rho_s$.
This results reveals when a later source step changes only the steering strength of an otherwise stable cluster, versus when the local continuation distribution itself re-partitions into different clusters.

\begin{tcolorbox}[
  enhanced,
  breakable,
  colback=gray!3,
  colframe=gray!45,
  coltitle=black,
  title={Qualitative target-step cluster labels for MATH500, target $j=7$},
  fonttitle=\bfseries,
  boxrule=0.5pt,
  arc=1mm,
  left=1.5mm,
  right=1.5mm,
  top=1mm,
  bottom=1mm
]
\captionof{table}{Raw qualitative reasoning case. Each row is a sampled continuation for target step $j=7$. The label vector $(C_1,\ldots,C_6)$ gives the local cluster label assigned when the source step is fixed to previous reasoning step $i$.}
\label{tab:reasoning-qualitative-cluster-labels}
\footnotesize
\textbf{Question.}
What is the length, in units, of the radius of a sphere whose volume and surface area, in cubic units and square units, respectively, are numerically equal?

\medskip
\textbf{Committed previous reasoning steps.}
\begin{enumerate}[leftmargin=*,nosep]
  \item Recall the formulas for the volume and surface area of a sphere.
  \item The volume is $V=\frac{4}{3}\pi r^3$ and the surface area is $A=4\pi r^2$.
  \item Set the numerical equality $\frac{4}{3}\pi r^3 = 4\pi r^2$ and solve for $r$.
  \item Divide both sides by $4\pi r^2$, assuming $r\neq 0$.
  \item Obtain $\frac{1}{3}r=1$.
  \item Re-check the division of $\frac{4}{3}\pi r^3 = 4\pi r^2$ by $4\pi r^2$.
\end{enumerate}

\medskip
\textbf{Candidate continuations for target step $j=7$.}
\par
\begingroup
\scriptsize
\renewcommand{\arraystretch}{1.12}
\begin{tabularx}{\linewidth}{@{}c>{\raggedright\arraybackslash}X>{\centering\arraybackslash}p{0.22\linewidth}@{}}
\toprule
$p$ & Continuation text & $(C_1,\ldots,C_6)$ \\
\midrule
0.923 & Left side: the volume term divided by the surface-area term gives $\frac{1}{3}r$. & $(3,2,3,2,3,3)$ \\
0.876 & Left side becomes $\frac{1}{3}r$, and the right side is $1$. So this simplifies to $\frac{1}{3}r=1$; multiplying both sides by 3 gives $r=3$. & $(2,11,5,7,3,3)$ \\
0.858 & Left side becomes $\frac{r}{3}$ and right side is $1$. Thus $r/3=1$, so $r=3$. & $(2,10,5,7,3,3)$ \\
0.843 & Left side becomes $\frac{1}{3}r$, right side becomes $1$. Therefore $\frac{1}{3}r=1$, so $r=3$ units. & $(2,5,4,5,2,2)$ \\
0.836 & Left side becomes $\frac{1}{3}r^3$ divided by $r^2$, which is $\frac{1}{3}r$. & $(3,6,3,6,3,3)$ \\
0.748 & $\frac{1}{3}r=1$. & $(3,2,3,2,3,2)$ \\
\bottomrule
\end{tabularx}
\endgroup
\end{tcolorbox}

\begin{figure}[H]
  \centering
  \begin{subfigure}{0.48\linewidth}
    \centering
    \includegraphics[width=\linewidth]{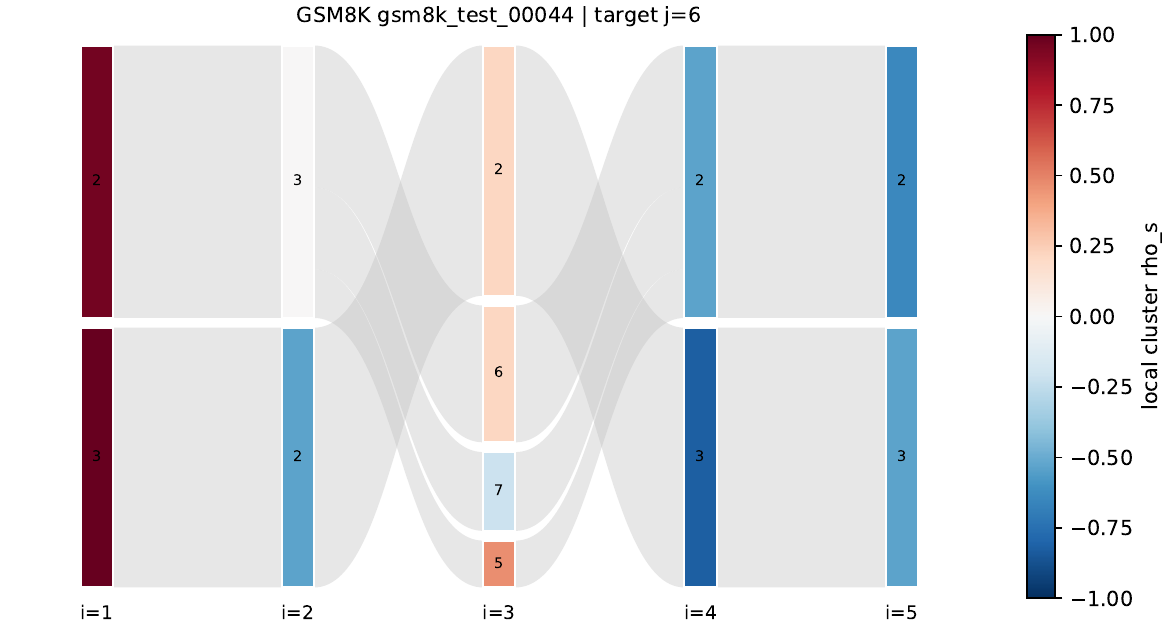}
    \caption{GSM8K, target $j=6$.}
  \end{subfigure}
  \hfill
  \begin{subfigure}{0.48\linewidth}
    \centering
    \includegraphics[width=\linewidth]{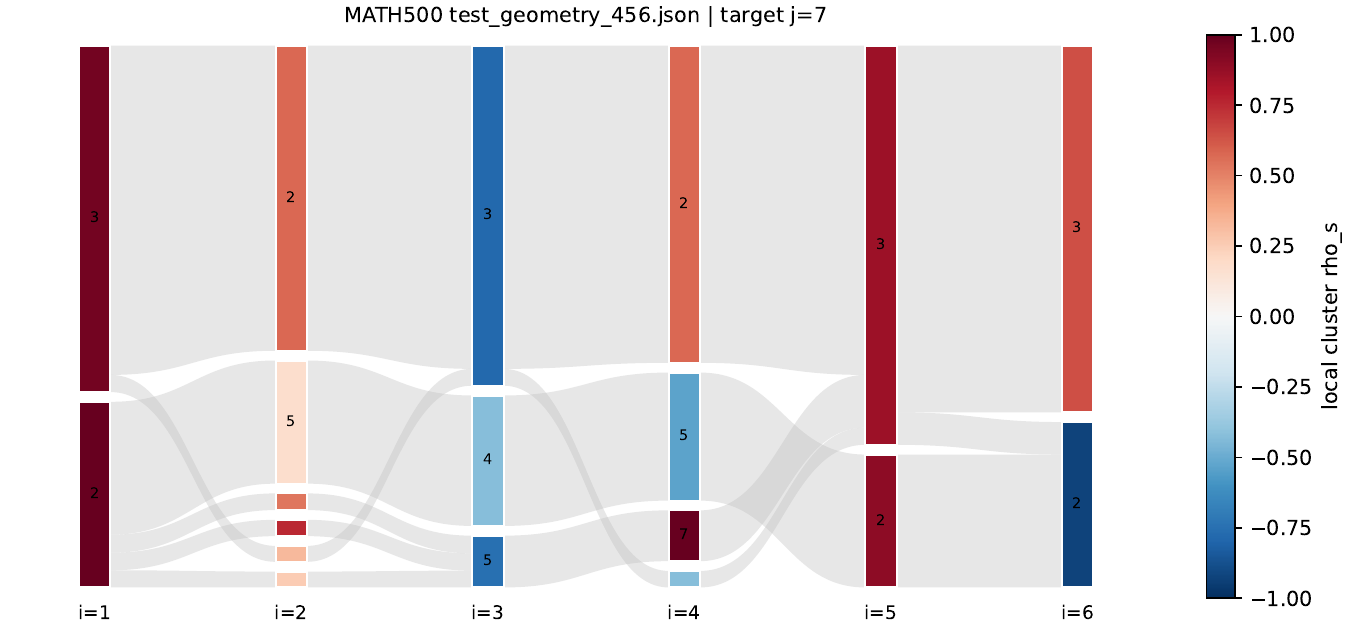}
    \caption{MATH500, target $j=7$.}
  \end{subfigure}
  \caption{Local cluster dynamics across previous reasoning steps. Columns are source steps $i<j$, node heights are local cluster probability mass, ribbons track adjacent continuation-overlap mass, and node colors show local $\rho_s$.}
  \label{fig:reasoning-alluvial-dynamics}
\end{figure}

\FloatBarrier
\subsection{AmbigQA on Qwen3-8B: Full steering results}
\label{app:qwen3-8B-full-steer}
\begin{longtable}{lll cc | cc cccccc}
\caption{Complete Steering Results: Method $\times$ Sign $\times$ Beam $\times$ $\beta$ $\times$ $\gamma$} \label{tab:full_steering} \\
\toprule
Method & Sign & B & $\beta$ & $\gamma$ & $\rho_s$ & $\rho$ & $\epsilon_{-1.0}$ & $\epsilon_{-0.5}$ & $\epsilon_{-0.1}$ & $\epsilon_{0.1}$ & $\epsilon_{0.5}$ & $\epsilon_{1.0}$ \\
\midrule
\endfirsthead
\toprule
Method & Sign & B & $\beta$ & $\gamma$ & $\rho_s$ & $\rho$ & $\epsilon_{-1.0}$ & $\epsilon_{-0.5}$ & $\epsilon_{-0.1}$ & $\epsilon_{0.1}$ & $\epsilon_{0.5}$ & $\epsilon_{1.0}$ \\
\midrule
\endhead
\midrule
\multicolumn{13}{r}{Continued on next page} \\
\bottomrule
\endfoot
\bottomrule
\endlastfoot
\textbf{RD} & Full & 5 & 0.75 & 0.3 & \cellcolor{gray!15}0.35 & \cellcolor{gray!15}0.38 & -1.94 & -0.76 & -0.16 & 0.16 & 0.39 & 0.36 \\
 &  &  &  & 0.5 & \cellcolor{gray!15}0.27 & \cellcolor{gray!15}0.30 & -2.55 & -0.72 & -0.17 & 0.10 & 0.46 & 0.58 \\
 &  &  &  & 0.7 & \cellcolor{gray!15}0.32 & \cellcolor{gray!15}0.33 & -1.67 & -0.78 & -0.21 & 0.04 & 0.26 & 0.44 \\
 &  &  & 1.00 & 0.3 & \cellcolor{gray!15}0.40 & \cellcolor{gray!15}0.44 & -1.79 & -0.49 & -0.16 & 0.03 & 0.03 & -0.03 \\
 &  &  &  & 0.5 & \cellcolor{gray!15}0.40 & \cellcolor{gray!15}0.43 & -2.12 & -0.73 & -0.13 & 0.18 & 0.55 & 0.65 \\
 &  &  &  & 0.7 & \cellcolor{gray!15}0.28 & \cellcolor{gray!15}0.31 & -2.29 & -0.72 & -0.10 & 0.12 & 0.40 & 0.55 \\
 &  &  & 1.25 & 0.3 & \cellcolor{gray!15}0.39 & \cellcolor{gray!15}0.45 & -1.06 & -0.43 & -0.11 & -0.01 & 0.03 & -0.02 \\
 &  &  &  & 0.5 & \cellcolor{gray!15}0.40 & \cellcolor{gray!15}0.42 & -1.13 & -0.48 & -0.18 & -0.03 & 0.18 & 0.28 \\
 &  &  &  & 0.7 & \cellcolor{gray!15}0.22 & \cellcolor{gray!15}0.24 & -2.32 & -0.63 & -0.17 & 0.05 & 0.23 & 0.22 \\
\cmidrule(lr){3-13}
 &  & 10 & 0.75 & 0.3 & \cellcolor{gray!15}0.36 & \cellcolor{gray!15}0.38 & -3.05 & -1.25 & -0.19 & 0.27 & 0.61 & 0.59 \\
 &  &  &  & 0.5 & \cellcolor{gray!15}0.26 & \cellcolor{gray!15}0.28 & -3.42 & -1.14 & -0.18 & 0.24 & 0.66 & 0.85 \\
 &  &  &  & 0.7 & \cellcolor{gray!15}0.31 & \cellcolor{gray!15}0.33 & -2.79 & -1.04 & -0.27 & 0.19 & 0.59 & 0.70 \\
 &  &  & 1.00 & 0.3 & \cellcolor{gray!15}0.49 & \cellcolor{gray!15}0.53 & -2.88 & -0.82 & -0.30 & 0.14 & 0.35 & 0.69 \\
 &  &  &  & 0.5 & \cellcolor{gray!15}0.41 & \cellcolor{gray!15}0.41 & -3.04 & -0.94 & -0.18 & 0.30 & 0.59 & 0.67 \\
 &  &  &  & 0.7 & \cellcolor{gray!15}0.32 & \cellcolor{gray!15}0.34 & -3.26 & -1.14 & -0.16 & 0.29 & 0.57 & 0.60 \\
 &  &  & 1.25 & 0.3 & \cellcolor{gray!15}0.31 & \cellcolor{gray!15}0.37 & -1.72 & -0.53 & -0.15 & 0.01 & 0.09 & 0.24 \\
 &  &  &  & 0.5 & \cellcolor{gray!15}0.41 & \cellcolor{gray!15}0.43 & -2.14 & -0.71 & -0.21 & 0.14 & 0.39 & 0.59 \\
 &  &  &  & 0.7 & \cellcolor{gray!15}0.24 & \cellcolor{gray!15}0.25 & -3.18 & -1.08 & -0.18 & 0.20 & 0.46 & 0.47 \\
\cmidrule(lr){2-13}
 & Positive & 5 & 0.75 & 0.3 & \cellcolor{gray!15}0.32 & \cellcolor{gray!15}0.35 & -2.06 & -0.77 & -0.14 & 0.12 & 0.29 & 0.23 \\
 &  &  &  & 0.5 & \cellcolor{gray!15}0.29 & \cellcolor{gray!15}0.31 & -2.62 & -0.76 & -0.13 & 0.08 & 0.45 & 0.58 \\
 &  &  &  & 0.7 & \cellcolor{gray!15}0.31 & \cellcolor{gray!15}0.33 & -1.91 & -0.79 & -0.15 & 0.07 & 0.36 & 0.51 \\
 &  &  & 1.00 & 0.3 & \cellcolor{gray!15}0.39 & \cellcolor{gray!15}0.43 & -1.98 & -0.53 & -0.20 & -0.05 & -0.13 & -0.20 \\
 &  &  &  & 0.5 & \cellcolor{gray!15}0.39 & \cellcolor{gray!15}0.42 & -2.51 & -0.76 & -0.09 & 0.16 & 0.51 & 0.60 \\
 &  &  &  & 0.7 & \cellcolor{gray!15}0.29 & \cellcolor{gray!15}0.32 & -2.55 & -0.77 & -0.08 & 0.12 & 0.45 & 0.57 \\
 &  &  & 1.25 & 0.3 & \cellcolor{gray!15}0.42 & \cellcolor{gray!15}0.48 & -1.24 & -0.44 & -0.18 & -0.06 & -0.07 & -0.13 \\
 &  &  &  & 0.5 & \cellcolor{gray!15}0.36 & \cellcolor{gray!15}0.38 & -1.19 & -0.43 & -0.18 & -0.05 & 0.01 & -0.01 \\
 &  &  &  & 0.7 & \cellcolor{gray!15}0.24 & \cellcolor{gray!15}0.26 & -2.39 & -0.70 & -0.15 & 0.03 & 0.24 & 0.30 \\
\cmidrule(lr){3-13}
 &  & 10 & 0.75 & 0.3 & \cellcolor{gray!15}0.36 & \cellcolor{gray!15}0.38 & -2.38 & -1.10 & -0.17 & 0.21 & 0.61 & 0.42 \\
 &  &  &  & 0.5 & \cellcolor{gray!15}0.30 & \cellcolor{gray!15}0.33 & -3.11 & -0.99 & -0.18 & 0.14 & 0.55 & 0.62 \\
 &  &  &  & 0.7 & \cellcolor{gray!15}0.34 & \cellcolor{gray!15}0.36 & -2.51 & -0.89 & -0.25 & 0.14 & 0.59 & 0.68 \\
 &  &  & 1.00 & 0.3 & \cellcolor{gray!15}0.37 & \cellcolor{gray!15}0.43 & -1.94 & -0.61 & -0.15 & -0.01 & 0.10 & 0.04 \\
 &  &  &  & 0.5 & \cellcolor{gray!15}0.42 & \cellcolor{gray!15}0.44 & -2.53 & -0.79 & -0.14 & 0.21 & 0.55 & 0.45 \\
 &  &  &  & 0.7 & \cellcolor{gray!15}0.31 & \cellcolor{gray!15}0.33 & -3.00 & -0.96 & -0.16 & 0.27 & 0.61 & 0.64 \\
 &  &  & 1.25 & 0.3 & \cellcolor{gray!15}0.30 & \cellcolor{gray!15}0.38 & -1.45 & -0.38 & -0.05 & 0.04 & -0.01 & -0.12 \\
 &  &  &  & 0.5 & \cellcolor{gray!15}0.33 & \cellcolor{gray!15}0.37 & -1.58 & -0.50 & -0.08 & -0.01 & 0.22 & 0.05 \\
 &  &  &  & 0.7 & \cellcolor{gray!15}0.27 & \cellcolor{gray!15}0.29 & -2.94 & -0.98 & -0.22 & 0.14 & 0.44 & 0.45 \\
\cmidrule(lr){2-13}
 & Negative & 5 & 0.75 & 0.3 & \cellcolor{gray!15}0.19 & \cellcolor{gray!15}0.17 & -0.12 & -0.25 & -0.19 & 0.10 & 0.33 & 0.41 \\
 &  &  &  & 0.5 & \cellcolor{gray!15}0.06 & \cellcolor{gray!15}0.05 & -0.38 & -0.34 & -0.15 & 0.06 & 0.34 & 0.47 \\
 &  &  &  & 0.7 & \cellcolor{gray!15}0.14 & \cellcolor{gray!15}0.13 & -0.22 & -0.18 & -0.11 & 0.06 & 0.21 & 0.37 \\
 &  &  & 1.00 & 0.3 & \cellcolor{gray!15}0.19 & \cellcolor{gray!15}0.20 & -0.47 & -0.51 & -0.37 & 0.12 & 0.67 & 1.44 \\
 &  &  &  & 0.5 & \cellcolor{gray!15}0.12 & \cellcolor{gray!15}0.12 & -0.13 & -0.22 & -0.14 & 0.08 & 0.33 & 0.61 \\
 &  &  &  & 0.7 & \cellcolor{gray!15}0.08 & \cellcolor{gray!15}0.07 & -0.16 & -0.11 & -0.06 & 0.14 & 0.18 & 0.17 \\
 &  &  & 1.25 & 0.3 & \cellcolor{gray!15}0.07 & \cellcolor{gray!15}0.09 & 0.03 & -0.22 & -0.22 & 0.04 & 0.34 & 0.89 \\
 &  &  &  & 0.5 & \cellcolor{gray!15}0.13 & \cellcolor{gray!15}0.13 & -0.30 & -0.37 & -0.29 & 0.10 & 0.57 & 1.25 \\
 &  &  &  & 0.7 & \cellcolor{gray!15}0.09 & \cellcolor{gray!15}0.08 & -0.16 & -0.26 & -0.21 & 0.03 & 0.15 & 0.20 \\
\cmidrule(lr){3-13}
 &  & 10 & 0.75 & 0.3 & \cellcolor{gray!15}0.19 & \cellcolor{gray!15}0.17 & -0.15 & -0.27 & -0.19 & 0.09 & 0.34 & 0.43 \\
 &  &  &  & 0.5 & \cellcolor{gray!15}0.07 & \cellcolor{gray!15}0.06 & -0.40 & -0.35 & -0.14 & 0.06 & 0.35 & 0.48 \\
 &  &  &  & 0.7 & \cellcolor{gray!15}0.14 & \cellcolor{gray!15}0.14 & -0.25 & -0.19 & -0.11 & 0.06 & 0.22 & 0.39 \\
 &  &  & 1.00 & 0.3 & \cellcolor{gray!15}0.20 & \cellcolor{gray!15}0.21 & -0.47 & -0.51 & -0.36 & 0.11 & 0.67 & 1.45 \\
 &  &  &  & 0.5 & \cellcolor{gray!15}0.12 & \cellcolor{gray!15}0.12 & -0.13 & -0.21 & -0.14 & 0.07 & 0.32 & 0.62 \\
 &  &  &  & 0.7 & \cellcolor{gray!15}0.08 & \cellcolor{gray!15}0.07 & -0.20 & -0.13 & -0.06 & 0.14 & 0.19 & 0.19 \\
 &  &  & 1.25 & 0.3 & \cellcolor{gray!15}0.08 & \cellcolor{gray!15}0.10 & 0.02 & -0.22 & -0.22 & 0.04 & 0.34 & 0.90 \\
 &  &  &  & 0.5 & \cellcolor{gray!15}0.13 & \cellcolor{gray!15}0.13 & -0.30 & -0.36 & -0.28 & 0.09 & 0.57 & 1.26 \\
 &  &  &  & 0.7 & \cellcolor{gray!15}0.09 & \cellcolor{gray!15}0.08 & -0.18 & -0.26 & -0.21 & 0.02 & 0.17 & 0.22 \\
\midrule
\textbf{KM-Sem} & Full & 5 & 0.75 & 0.3 & \cellcolor{gray!15}-0.05 & \cellcolor{gray!15}-0.05 & -0.59 & -0.08 & -0.05 & 0.04 & -0.11 & -0.84 \\
 &  &  &  & 0.5 & \cellcolor{gray!15}-0.04 & \cellcolor{gray!15}-0.03 & -0.60 & -0.17 & -0.02 & -0.01 & -0.04 & -0.65 \\
 &  &  &  & 0.7 & \cellcolor{gray!15}0.04 & \cellcolor{gray!15}0.04 & -0.70 & -0.22 & -0.12 & -0.02 & -0.02 & -0.35 \\
 &  &  & 1.00 & 0.3 & \cellcolor{gray!15}-0.05 & \cellcolor{gray!15}-0.06 & -0.12 & -0.09 & -0.13 & -0.09 & -0.17 & -0.44 \\
 &  &  &  & 0.5 & \cellcolor{gray!15}-0.07 & \cellcolor{gray!15}-0.07 & -0.30 & 0.03 & 0.14 & 0.10 & 0.07 & -0.41 \\
 &  &  &  & 0.7 & \cellcolor{gray!15}0.02 & \cellcolor{gray!15}0.02 & -0.90 & -0.16 & 0.00 & 0.11 & 0.06 & -0.53 \\
 &  &  & 1.25 & 0.3 & \cellcolor{gray!15}-0.12 & \cellcolor{gray!15}-0.13 & -0.16 & -0.04 & -0.03 & -0.11 & -0.13 & -0.27 \\
 &  &  &  & 0.5 & \cellcolor{gray!15}-0.12 & \cellcolor{gray!15}-0.13 & 0.14 & 0.18 & 0.04 & -0.15 & -0.40 & -0.77 \\
 &  &  &  & 0.7 & \cellcolor{gray!15}-0.05 & \cellcolor{gray!15}-0.04 & -0.40 & -0.17 & -0.02 & -0.05 & -0.11 & -0.73 \\
\cmidrule(lr){3-13}
 &  & 10 & 0.75 & 0.3 & \cellcolor{gray!15}-0.02 & \cellcolor{gray!15}-0.01 & -1.07 & -0.20 & 0.05 & 0.05 & -0.32 & -1.09 \\
 &  &  &  & 0.5 & \cellcolor{gray!15}-0.02 & \cellcolor{gray!15}-0.02 & -0.80 & -0.10 & 0.09 & 0.03 & -0.20 & -0.98 \\
 &  &  &  & 0.7 & \cellcolor{gray!15}0.04 & \cellcolor{gray!15}0.04 & -0.71 & -0.12 & 0.02 & -0.02 & -0.07 & -0.54 \\
 &  &  & 1.00 & 0.3 & \cellcolor{gray!15}-0.15 & \cellcolor{gray!15}-0.17 & -0.13 & 0.01 & -0.02 & -0.02 & -0.27 & -0.90 \\
 &  &  &  & 0.5 & \cellcolor{gray!15}-0.04 & \cellcolor{gray!15}-0.05 & -0.61 & 0.10 & 0.20 & 0.12 & -0.17 & -0.90 \\
 &  &  &  & 0.7 & \cellcolor{gray!15}0.04 & \cellcolor{gray!15}0.05 & -1.07 & -0.19 & 0.10 & 0.05 & -0.00 & -0.66 \\
 &  &  & 1.25 & 0.3 & \cellcolor{gray!15}-0.20 & \cellcolor{gray!15}-0.21 & 0.02 & 0.16 & 0.10 & -0.03 & -0.38 & -0.60 \\
 &  &  &  & 0.5 & \cellcolor{gray!15}-0.17 & \cellcolor{gray!15}-0.17 & -0.09 & 0.23 & 0.13 & -0.04 & -0.64 & -1.18 \\
 &  &  &  & 0.7 & \cellcolor{gray!15}-0.02 & \cellcolor{gray!15}-0.03 & -0.66 & -0.21 & -0.03 & -0.06 & -0.17 & -0.88 \\
\cmidrule(lr){2-13}
 & Positive & 5 & 0.75 & 0.3 & \cellcolor{gray!15}0.10 & \cellcolor{gray!15}0.12 & -1.09 & -0.31 & 0.01 & 0.09 & 0.04 & -0.02 \\
 &  &  &  & 0.5 & \cellcolor{gray!15}0.11 & \cellcolor{gray!15}0.12 & -1.01 & -0.27 & -0.06 & 0.02 & 0.10 & 0.06 \\
 &  &  &  & 0.7 & \cellcolor{gray!15}0.12 & \cellcolor{gray!15}0.13 & -0.94 & -0.27 & -0.12 & 0.08 & 0.15 & 0.18 \\
 &  &  & 1.00 & 0.3 & \cellcolor{gray!15}0.01 & \cellcolor{gray!15}0.02 & -0.36 & -0.14 & -0.09 & -0.01 & -0.14 & 0.03 \\
 &  &  &  & 0.5 & \cellcolor{gray!15}0.11 & \cellcolor{gray!15}0.13 & -0.65 & -0.13 & 0.13 & 0.14 & 0.08 & 0.22 \\
 &  &  &  & 0.7 & \cellcolor{gray!15}0.15 & \cellcolor{gray!15}0.17 & -1.43 & -0.37 & -0.03 & 0.14 & 0.23 & 0.28 \\
 &  &  & 1.25 & 0.3 & \cellcolor{gray!15}-0.01 & \cellcolor{gray!15}-0.01 & -0.33 & -0.04 & -0.02 & -0.09 & -0.18 & -0.14 \\
 &  &  &  & 0.5 & \cellcolor{gray!15}0.02 & \cellcolor{gray!15}0.04 & -0.42 & -0.03 & 0.02 & -0.04 & -0.14 & -0.10 \\
 &  &  &  & 0.7 & \cellcolor{gray!15}0.08 & \cellcolor{gray!15}0.10 & -0.83 & -0.26 & -0.07 & -0.07 & -0.03 & -0.08 \\
\cmidrule(lr){3-13}
 &  & 10 & 0.75 & 0.3 & \cellcolor{gray!15}0.08 & \cellcolor{gray!15}0.08 & -1.29 & -0.26 & -0.12 & 0.10 & 0.01 & -0.30 \\
 &  &  &  & 0.5 & \cellcolor{gray!15}0.09 & \cellcolor{gray!15}0.10 & -1.46 & -0.42 & -0.11 & 0.12 & 0.25 & 0.13 \\
 &  &  &  & 0.7 & \cellcolor{gray!15}0.11 & \cellcolor{gray!15}0.12 & -1.31 & -0.29 & -0.10 & 0.16 & 0.30 & 0.44 \\
 &  &  & 1.00 & 0.3 & \cellcolor{gray!15}0.06 & \cellcolor{gray!15}0.05 & -0.38 & -0.08 & -0.15 & 0.06 & -0.21 & -0.20 \\
 &  &  &  & 0.5 & \cellcolor{gray!15}0.12 & \cellcolor{gray!15}0.13 & -0.81 & -0.16 & 0.05 & 0.21 & 0.22 & 0.14 \\
 &  &  &  & 0.7 & \cellcolor{gray!15}0.13 & \cellcolor{gray!15}0.14 & -1.93 & -0.38 & -0.05 & 0.24 & 0.34 & 0.46 \\
 &  &  & 1.25 & 0.3 & \cellcolor{gray!15}0.01 & \cellcolor{gray!15}0.00 & -0.20 & 0.03 & 0.01 & -0.01 & -0.12 & -0.19 \\
 &  &  &  & 0.5 & \cellcolor{gray!15}0.03 & \cellcolor{gray!15}0.03 & -0.48 & 0.01 & -0.02 & 0.00 & -0.13 & -0.26 \\
 &  &  &  & 0.7 & \cellcolor{gray!15}0.06 & \cellcolor{gray!15}0.08 & -1.17 & -0.41 & -0.14 & 0.01 & 0.04 & -0.06 \\
\cmidrule(lr){2-13}
 & Negative & 5 & 0.75 & 0.3 & \cellcolor{gray!15}-0.11 & \cellcolor{gray!15}-0.12 & -0.11 & 0.02 & 0.05 & -0.04 & -0.46 & -1.10 \\
 &  &  &  & 0.5 & \cellcolor{gray!15}-0.11 & \cellcolor{gray!15}-0.12 & 0.07 & 0.15 & 0.00 & -0.09 & -0.38 & -1.22 \\
 &  &  &  & 0.7 & \cellcolor{gray!15}0.00 & \cellcolor{gray!15}0.01 & 0.10 & 0.00 & -0.01 & -0.09 & -0.30 & -0.94 \\
 &  &  & 1.00 & 0.3 & \cellcolor{gray!15}-0.15 & \cellcolor{gray!15}-0.16 & 0.10 & 0.11 & -0.10 & -0.13 & -0.43 & -0.95 \\
 &  &  &  & 0.5 & \cellcolor{gray!15}-0.12 & \cellcolor{gray!15}-0.13 & 0.06 & 0.27 & 0.16 & 0.03 & -0.21 & -0.88 \\
 &  &  &  & 0.7 & \cellcolor{gray!15}-0.05 & \cellcolor{gray!15}-0.06 & 0.03 & 0.11 & 0.08 & 0.02 & -0.27 & -1.04 \\
 &  &  & 1.25 & 0.3 & \cellcolor{gray!15}-0.22 & \cellcolor{gray!15}-0.23 & 0.12 & 0.02 & -0.03 & -0.12 & -0.30 & -0.62 \\
 &  &  &  & 0.5 & \cellcolor{gray!15}-0.19 & \cellcolor{gray!15}-0.20 & 0.39 & 0.30 & 0.03 & -0.16 & -0.59 & -1.25 \\
 &  &  &  & 0.7 & \cellcolor{gray!15}-0.07 & \cellcolor{gray!15}-0.09 & -0.07 & 0.01 & -0.06 & -0.12 & -0.35 & -1.22 \\
\cmidrule(lr){3-13}
 &  & 10 & 0.75 & 0.3 & \cellcolor{gray!15}-0.10 & \cellcolor{gray!15}-0.12 & -0.19 & 0.01 & 0.07 & -0.02 & -0.50 & -1.30 \\
 &  &  &  & 0.5 & \cellcolor{gray!15}-0.12 & \cellcolor{gray!15}-0.13 & 0.22 & 0.19 & 0.08 & -0.05 & -0.55 & -1.68 \\
 &  &  &  & 0.7 & \cellcolor{gray!15}-0.03 & \cellcolor{gray!15}-0.03 & 0.21 & 0.02 & 0.07 & -0.07 & -0.52 & -1.43 \\
 &  &  & 1.00 & 0.3 & \cellcolor{gray!15}-0.19 & \cellcolor{gray!15}-0.20 & 0.40 & 0.22 & 0.12 & -0.10 & -0.40 & -1.06 \\
 &  &  &  & 0.5 & \cellcolor{gray!15}-0.13 & \cellcolor{gray!15}-0.15 & -0.01 & 0.21 & 0.26 & 0.12 & -0.29 & -1.25 \\
 &  &  &  & 0.7 & \cellcolor{gray!15}-0.08 & \cellcolor{gray!15}-0.09 & 0.26 & 0.11 & 0.09 & 0.00 & -0.46 & -1.49 \\
 &  &  & 1.25 & 0.3 & \cellcolor{gray!15}-0.21 & \cellcolor{gray!15}-0.21 & 0.05 & 0.02 & 0.02 & -0.14 & -0.35 & -0.79 \\
 &  &  &  & 0.5 & \cellcolor{gray!15}-0.24 & \cellcolor{gray!15}-0.25 & 0.54 & 0.40 & 0.19 & -0.12 & -0.82 & -1.68 \\
 &  &  &  & 0.7 & \cellcolor{gray!15}-0.08 & \cellcolor{gray!15}-0.10 & 0.01 & -0.01 & -0.04 & -0.07 & -0.42 & -1.59 \\
\midrule
\textbf{Single} & Full & 5 & 0.75 & 0.3 & \cellcolor{gray!15}0.18 & \cellcolor{gray!15}0.20 & -0.89 & -0.34 & -0.13 & 0.01 & 0.04 & 0.13 \\
 &  &  &  & 0.5 & \cellcolor{gray!15}0.21 & \cellcolor{gray!15}0.23 & -1.38 & -0.46 & -0.11 & 0.11 & 0.29 & 0.30 \\
 &  &  &  & 0.7 & \cellcolor{gray!15}0.19 & \cellcolor{gray!15}0.20 & -1.08 & -0.42 & -0.10 & 0.05 & 0.22 & 0.30 \\
 &  &  & 1.00 & 0.3 & \cellcolor{gray!15}0.26 & \cellcolor{gray!15}0.28 & -1.40 & -0.50 & -0.19 & 0.10 & 0.18 & 0.23 \\
 &  &  &  & 0.5 & \cellcolor{gray!15}0.19 & \cellcolor{gray!15}0.20 & -1.17 & -0.46 & -0.09 & 0.08 & 0.26 & 0.33 \\
 &  &  &  & 0.7 & \cellcolor{gray!15}0.16 & \cellcolor{gray!15}0.17 & -1.28 & -0.41 & -0.04 & 0.11 & 0.17 & 0.12 \\
 &  &  & 1.25 & 0.3 & \cellcolor{gray!15}0.28 & \cellcolor{gray!15}0.32 & -1.30 & -0.47 & -0.19 & 0.05 & 0.22 & 0.42 \\
 &  &  &  & 0.5 & \cellcolor{gray!15}0.20 & \cellcolor{gray!15}0.20 & -1.26 & -0.48 & -0.18 & 0.15 & 0.17 & 0.23 \\
 &  &  &  & 0.7 & \cellcolor{gray!15}0.15 & \cellcolor{gray!15}0.18 & -1.29 & -0.40 & -0.09 & 0.03 & 0.16 & 0.13 \\
\cmidrule(lr){3-13}
 &  & 10 & 0.75 & 0.3 & \cellcolor{gray!15}0.19 & \cellcolor{gray!15}0.21 & -1.59 & -0.62 & -0.09 & 0.08 & 0.12 & 0.17 \\
 &  &  &  & 0.5 & \cellcolor{gray!15}0.17 & \cellcolor{gray!15}0.18 & -1.81 & -0.65 & -0.08 & 0.14 & 0.26 & 0.26 \\
 &  &  &  & 0.7 & \cellcolor{gray!15}0.19 & \cellcolor{gray!15}0.20 & -1.58 & -0.66 & -0.11 & 0.10 & 0.27 & 0.28 \\
 &  &  & 1.00 & 0.3 & \cellcolor{gray!15}0.34 & \cellcolor{gray!15}0.36 & -2.28 & -0.87 & -0.20 & 0.05 & 0.15 & 0.28 \\
 &  &  &  & 0.5 & \cellcolor{gray!15}0.19 & \cellcolor{gray!15}0.20 & -1.78 & -0.72 & -0.11 & 0.12 & 0.23 & 0.15 \\
 &  &  &  & 0.7 & \cellcolor{gray!15}0.15 & \cellcolor{gray!15}0.16 & -1.85 & -0.63 & -0.09 & 0.10 & 0.12 & 0.08 \\
 &  &  & 1.25 & 0.3 & \cellcolor{gray!15}0.20 & \cellcolor{gray!15}0.23 & -2.02 & -0.84 & -0.25 & 0.06 & 0.19 & -0.01 \\
 &  &  &  & 0.5 & \cellcolor{gray!15}0.24 & \cellcolor{gray!15}0.24 & -2.05 & -0.79 & -0.13 & 0.15 & 0.31 & 0.35 \\
 &  &  &  & 0.7 & \cellcolor{gray!15}0.12 & \cellcolor{gray!15}0.14 & -1.92 & -0.66 & -0.09 & 0.07 & 0.15 & 0.12 \\
\cmidrule(lr){2-13}
 & Positive & 5 & 0.75 & 0.3 & \cellcolor{gray!15}0.17 & \cellcolor{gray!15}0.19 & -0.93 & -0.31 & -0.07 & -0.00 & 0.01 & 0.03 \\
 &  &  &  & 0.5 & \cellcolor{gray!15}0.20 & \cellcolor{gray!15}0.22 & -1.38 & -0.43 & -0.08 & 0.11 & 0.29 & 0.28 \\
 &  &  &  & 0.7 & \cellcolor{gray!15}0.17 & \cellcolor{gray!15}0.18 & -1.11 & -0.40 & -0.08 & 0.05 & 0.19 & 0.23 \\
 &  &  & 1.00 & 0.3 & \cellcolor{gray!15}0.28 & \cellcolor{gray!15}0.29 & -1.49 & -0.48 & -0.13 & 0.06 & 0.16 & 0.13 \\
 &  &  &  & 0.5 & \cellcolor{gray!15}0.19 & \cellcolor{gray!15}0.20 & -1.17 & -0.47 & -0.12 & 0.06 & 0.26 & 0.28 \\
 &  &  &  & 0.7 & \cellcolor{gray!15}0.15 & \cellcolor{gray!15}0.16 & -1.31 & -0.42 & -0.04 & 0.08 & 0.17 & 0.11 \\
 &  &  & 1.25 & 0.3 & \cellcolor{gray!15}0.27 & \cellcolor{gray!15}0.30 & -1.45 & -0.49 & -0.15 & 0.04 & 0.12 & 0.17 \\
 &  &  &  & 0.5 & \cellcolor{gray!15}0.18 & \cellcolor{gray!15}0.18 & -1.28 & -0.47 & -0.16 & 0.09 & 0.15 & 0.22 \\
 &  &  &  & 0.7 & \cellcolor{gray!15}0.13 & \cellcolor{gray!15}0.15 & -1.32 & -0.39 & -0.07 & 0.02 & 0.11 & 0.04 \\
\cmidrule(lr){3-13}
 &  & 10 & 0.75 & 0.3 & \cellcolor{gray!15}0.17 & \cellcolor{gray!15}0.19 & -1.58 & -0.59 & -0.10 & 0.09 & 0.13 & 0.17 \\
 &  &  &  & 0.5 & \cellcolor{gray!15}0.16 & \cellcolor{gray!15}0.18 & -1.76 & -0.63 & -0.10 & 0.16 & 0.27 & 0.21 \\
 &  &  &  & 0.7 & \cellcolor{gray!15}0.15 & \cellcolor{gray!15}0.16 & -1.40 & -0.57 & -0.08 & 0.08 & 0.22 & 0.18 \\
 &  &  & 1.00 & 0.3 & \cellcolor{gray!15}0.30 & \cellcolor{gray!15}0.32 & -2.03 & -0.72 & -0.19 & 0.09 & 0.16 & 0.15 \\
 &  &  &  & 0.5 & \cellcolor{gray!15}0.19 & \cellcolor{gray!15}0.20 & -1.76 & -0.72 & -0.12 & 0.14 & 0.26 & 0.20 \\
 &  &  &  & 0.7 & \cellcolor{gray!15}0.13 & \cellcolor{gray!15}0.14 & -1.71 & -0.58 & -0.07 & 0.12 & 0.16 & 0.06 \\
 &  &  & 1.25 & 0.3 & \cellcolor{gray!15}0.19 & \cellcolor{gray!15}0.21 & -2.10 & -0.72 & -0.18 & 0.13 & 0.07 & 0.01 \\
 &  &  &  & 0.5 & \cellcolor{gray!15}0.16 & \cellcolor{gray!15}0.17 & -1.90 & -0.75 & -0.09 & 0.18 & 0.28 & 0.16 \\
 &  &  &  & 0.7 & \cellcolor{gray!15}0.11 & \cellcolor{gray!15}0.12 & -1.87 & -0.63 & -0.08 & 0.08 & 0.15 & -0.01 \\
\cmidrule(lr){2-13}
 & Negative & 5 & 0.75 & 0.3 & \cellcolor{gray!15}0.01 & \cellcolor{gray!15}0.01 & -0.14 & -0.07 & -0.05 & 0.02 & -0.06 & -0.05 \\
 &  &  &  & 0.5 & \cellcolor{gray!15}0.00 & \cellcolor{gray!15}-0.00 & -0.09 & -0.07 & -0.03 & -0.01 & -0.03 & -0.08 \\
 &  &  &  & 0.7 & \cellcolor{gray!15}0.06 & \cellcolor{gray!15}0.06 & -0.14 & -0.05 & 0.00 & 0.02 & 0.03 & 0.03 \\
 &  &  & 1.00 & 0.3 & \cellcolor{gray!15}0.02 & \cellcolor{gray!15}0.02 & -0.30 & -0.17 & -0.17 & -0.01 & 0.13 & 0.25 \\
 &  &  &  & 0.5 & \cellcolor{gray!15}0.04 & \cellcolor{gray!15}0.04 & -0.24 & -0.05 & -0.03 & 0.03 & -0.01 & 0.10 \\
 &  &  &  & 0.7 & \cellcolor{gray!15}0.05 & \cellcolor{gray!15}0.05 & -0.12 & -0.02 & 0.05 & 0.05 & 0.04 & 0.05 \\
 &  &  & 1.25 & 0.3 & \cellcolor{gray!15}0.02 & \cellcolor{gray!15}0.02 & -0.36 & -0.10 & -0.22 & -0.04 & 0.25 & 0.31 \\
 &  &  &  & 0.5 & \cellcolor{gray!15}0.10 & \cellcolor{gray!15}0.10 & -0.30 & -0.17 & -0.15 & 0.03 & 0.35 & 0.37 \\
 &  &  &  & 0.7 & \cellcolor{gray!15}0.05 & \cellcolor{gray!15}0.05 & -0.21 & -0.13 & -0.06 & -0.01 & 0.01 & 0.13 \\
\cmidrule(lr){3-13}
 &  & 10 & 0.75 & 0.3 & \cellcolor{gray!15}0.02 & \cellcolor{gray!15}0.01 & -0.15 & -0.07 & -0.04 & 0.02 & -0.06 & -0.07 \\
 &  &  &  & 0.5 & \cellcolor{gray!15}0.00 & \cellcolor{gray!15}-0.00 & -0.09 & -0.06 & -0.03 & -0.00 & -0.02 & -0.08 \\
 &  &  &  & 0.7 & \cellcolor{gray!15}0.06 & \cellcolor{gray!15}0.06 & -0.14 & -0.06 & 0.00 & 0.02 & 0.02 & 0.02 \\
 &  &  & 1.00 & 0.3 & \cellcolor{gray!15}0.03 & \cellcolor{gray!15}0.04 & -0.32 & -0.18 & -0.17 & 0.01 & 0.13 & 0.28 \\
 &  &  &  & 0.5 & \cellcolor{gray!15}0.03 & \cellcolor{gray!15}0.04 & -0.23 & -0.05 & -0.03 & 0.03 & -0.01 & 0.08 \\
 &  &  &  & 0.7 & \cellcolor{gray!15}0.05 & \cellcolor{gray!15}0.06 & -0.11 & -0.01 & 0.06 & 0.04 & 0.05 & 0.06 \\
 &  &  & 1.25 & 0.3 & \cellcolor{gray!15}0.03 & \cellcolor{gray!15}0.03 & -0.34 & -0.10 & -0.24 & -0.04 & 0.27 & 0.32 \\
 &  &  &  & 0.5 & \cellcolor{gray!15}0.10 & \cellcolor{gray!15}0.10 & -0.30 & -0.16 & -0.15 & 0.04 & 0.35 & 0.38 \\
 &  &  &  & 0.7 & \cellcolor{gray!15}0.04 & \cellcolor{gray!15}0.05 & -0.20 & -0.13 & -0.05 & -0.01 & 0.01 & 0.12 \\
\label{tab:qwen3-8b-steering-full}
\end{longtable}

\newpage

\subsection{AmbigQA on Qwen3-4B: Replication Results}
\label{app:qwen3-4b-repl}

\textbf{Clustering results.}
\begin{table*}[h]
\centering
\scriptsize
\renewcommand{\arraystretch}{0.90}
\setlength{\tabcolsep}{3.2pt}

\begin{tabular}{l c *{4}{c c c c}}
\toprule
\multicolumn{2}{c}{} & \multicolumn{16}{c}{$\beta$} \\
\cmidrule(lr){3-18}
Method & $\gamma$
& \multicolumn{4}{c}{0.50}
& \multicolumn{4}{c}{0.75}
& \multicolumn{4}{c}{1.00}
& \multicolumn{4}{c}{1.25} \\
\cmidrule(lr){3-6}\cmidrule(lr){7-10}\cmidrule(lr){11-14}\cmidrule(lr){15-18}
  & 
  & $H$ & $D^{(e)}$ & $D^{(a)}$ & $D_{\gamma}$
  & $H$ & $D^{(e)}$ & $D^{(a)}$ & $D_{\gamma}$
  & $H$ & $D^{(e)}$ & $D^{(a)}$ & $D_{\gamma}$
  & $H$ & $D^{(e)}$ & $D^{(a)}$ & $D_{\gamma}$ \\
\midrule

\multirow{5}{*}{\textbf{RD}}
& 0.1
& 1.95 & 0.19 & 10.4 & \textbf{9.34}
& 2.39 & 0.18 & 9.0 & \textbf{8.15}
& 2.60 & 0.17 & 8.4 & \textbf{7.61}
& 2.67 & 0.17 & 8.3 & \textbf{7.52}
 \\
& 0.3
& 1.55 & 0.20 & 11.8 & \textbf{8.31}
& 2.17 & 0.18 & 9.6 & \textbf{6.80}
& 2.45 & 0.18 & 8.9 & \textbf{6.25}
& 2.56 & 0.17 & 8.5 & \textbf{6.01}
 \\
& 0.5
& 1.09 & 0.22 & 13.9 & \textbf{7.04}
& 1.64 & 0.20 & 11.5 & \textbf{5.84}
& 2.13 & 0.19 & 9.7 & \textbf{4.96}
& 2.37 & 0.18 & 9.1 & \textbf{4.62}
 \\
& 0.7
& 0.40 & 0.25 & 18.1 & \textbf{5.60}
& 0.86 & 0.23 & 15.1 & \textbf{4.69}
& 1.35 & 0.21 & 12.6 & \textbf{3.93}
& 1.64 & 0.20 & 11.4 & \textbf{3.57}
 \\
& 0.9
& 0.00 & 0.28 & 22.3 & \textbf{2.49}
& 0.10 & 0.27 & 20.7 & \textbf{2.31}
& 0.18 & 0.27 & 19.8 & \textbf{2.22}
& 0.29 & 0.26 & 18.9 & \textbf{2.12}
 \\
\midrule

\multirow{5}{*}{\textbf{KM-S}}
& 0.1
& 1.92 & 0.15 & 38.7 & 34.81
& 2.21 & 0.14 & 37.0 & 33.29
& 2.35 & 0.13 & 36.6 & 32.93
& 2.40 & 0.13 & 36.4 & 32.78
 \\
& 0.3
& 1.57 & 0.16 & 41.0 & 28.77
& 2.06 & 0.14 & 38.1 & 26.68
& 2.26 & 0.13 & 36.6 & 25.67
& 2.33 & 0.13 & 36.6 & 25.67
 \\
& 0.5
& 1.14 & 0.18 & 44.2 & 22.21
& 1.65 & 0.16 & 40.4 & 20.27
& 2.04 & 0.14 & 38.1 & 19.11
& 2.20 & 0.13 & 37.2 & 18.66
 \\
& 0.7
& 0.68 & 0.20 & 47.7 & 14.46
& 0.97 & 0.19 & 45.8 & 13.87
& 1.38 & 0.17 & 42.6 & 12.89
& 1.65 & 0.16 & 39.9 & 12.07
 \\
& 0.9
& 0.60 & 0.21 & 48.7 & 5.06
& 0.61 & 0.21 & 48.7 & 5.05
& 0.61 & 0.21 & 48.6 & 5.05
& 0.64 & 0.21 & 48.3 & 5.02
 \\
\midrule

\multirow{5}{*}{\textbf{KM-A}}
& 0.1
& 1.92 & 0.19 & 26.1 & 23.52
& 2.20 & 0.18 & 24.1 & 21.69
& 2.34 & 0.18 & 23.3 & 20.98
& 2.37 & 0.17 & 23.2 & 20.91
 \\
& 0.3
& 1.56 & 0.20 & 29.8 & 20.93
& 2.06 & 0.18 & 25.1 & 17.65
& 2.24 & 0.18 & 23.8 & 16.70
& 2.32 & 0.18 & 23.4 & 16.43
 \\
& 0.5
& 1.15 & 0.21 & 35.1 & 17.67
& 1.66 & 0.19 & 28.8 & 14.49
& 2.03 & 0.18 & 25.3 & 12.72
& 2.20 & 0.18 & 24.1 & 12.14
 \\
& 0.7
& 0.68 & 0.23 & 41.9 & 12.74
& 0.98 & 0.22 & 37.3 & 11.34
& 1.40 & 0.20 & 31.8 & 9.68
& 1.66 & 0.19 & 28.7 & 8.75
 \\
& 0.9
& 0.59 & 0.23 & 44.0 & 4.61
& 0.60 & 0.23 & 43.7 & 4.58
& 0.60 & 0.23 & 43.7 & 4.58
& 0.63 & 0.23 & 43.0 & 4.51
 \\
\midrule

\bottomrule
\end{tabular}

\caption{\textbf{RD vs.\ K-means at matched rate $H$ (Qwen3-4B).} Bold indicates lowest $D_{\gamma}$.}
\label{tab:rd-vs-kmeans-fixed-rate-4b}
\end{table*}

\textbf{Steering results.}
\begin{longtable}{lll cc | cc cccccc}
\caption{Complete Steering Results: Method $\times$ Sign $\times$ Beam $\times$ $\beta$ $\times$ $\gamma$ (Qwen3-4B)} \label{tab:full_steering_qwen3_4b} \\
\toprule
Method & Sign & B & $\beta$ & $\gamma$ & $\rho_s$ & $\rho$ & $\epsilon_{-1.0}$ & $\epsilon_{-0.5}$ & $\epsilon_{-0.1}$ & $\epsilon_{0.1}$ & $\epsilon_{0.5}$ & $\epsilon_{1.0}$ \\
\midrule
\endfirsthead
\toprule
Method & Sign & B & $\beta$ & $\gamma$ & $\rho_s$ & $\rho$ & $\epsilon_{-1.0}$ & $\epsilon_{-0.5}$ & $\epsilon_{-0.1}$ & $\epsilon_{0.1}$ & $\epsilon_{0.5}$ & $\epsilon_{1.0}$ \\
\midrule
\endhead
\midrule
\multicolumn{13}{r}{Continued on next page} \\
\bottomrule
\endfoot
\bottomrule
\endlastfoot
\textbf{RD} & Full & 5 & 0.75 & 0.3 & \cellcolor{gray!15}0.83 & \cellcolor{gray!15}0.84 & -0.82 & -0.43 & -0.06 & 0.14 & 0.65 & 1.30 \\
 &  &  &  & 0.5 & \cellcolor{gray!15}0.82 & \cellcolor{gray!15}0.84 & -0.78 & -0.45 & -0.11 & 0.11 & 0.63 & 1.44 \\
 &  &  &  & 0.7 & \cellcolor{gray!15}0.74 & \cellcolor{gray!15}0.75 & -0.88 & -0.51 & -0.13 & 0.13 & 0.72 & 1.56 \\
 &  &  & 1.00 & 0.3 & \cellcolor{gray!15}0.75 & \cellcolor{gray!15}0.79 & -0.81 & -0.41 & -0.06 & 0.10 & 0.63 & 1.34 \\
 &  &  &  & 0.5 & \cellcolor{gray!15}0.84 & \cellcolor{gray!15}0.85 & -0.88 & -0.49 & -0.09 & 0.12 & 0.67 & 1.40 \\
 &  &  &  & 0.7 & \cellcolor{gray!15}0.76 & \cellcolor{gray!15}0.77 & -0.83 & -0.48 & -0.13 & 0.13 & 0.66 & 1.47 \\
 &  &  & 1.25 & 0.3 & \cellcolor{gray!15}0.74 & \cellcolor{gray!15}0.77 & -0.79 & -0.47 & -0.06 & 0.08 & 0.53 & 1.02 \\
 &  &  &  & 0.5 & \cellcolor{gray!15}0.80 & \cellcolor{gray!15}0.81 & -0.84 & -0.45 & -0.07 & 0.13 & 0.66 & 1.33 \\
 &  &  &  & 0.7 & \cellcolor{gray!15}0.83 & \cellcolor{gray!15}0.84 & -0.82 & -0.47 & -0.11 & 0.11 & 0.67 & 1.52 \\
\cmidrule(lr){3-13}
 &  & 10 & 0.75 & 0.3 & \cellcolor{gray!15}0.81 & \cellcolor{gray!15}0.82 & -1.30 & -0.74 & -0.12 & 0.19 & 0.89 & 1.68 \\
 &  &  &  & 0.5 & \cellcolor{gray!15}0.76 & \cellcolor{gray!15}0.77 & -1.27 & -0.77 & -0.15 & 0.16 & 0.91 & 1.90 \\
 &  &  &  & 0.7 & \cellcolor{gray!15}0.68 & \cellcolor{gray!15}0.67 & -1.36 & -0.82 & -0.18 & 0.20 & 1.01 & 2.15 \\
 &  &  & 1.00 & 0.3 & \cellcolor{gray!15}0.75 & \cellcolor{gray!15}0.78 & -1.35 & -0.76 & -0.10 & 0.13 & 0.84 & 1.67 \\
 &  &  &  & 0.5 & \cellcolor{gray!15}0.82 & \cellcolor{gray!15}0.83 & -1.43 & -0.83 & -0.16 & 0.17 & 0.98 & 1.97 \\
 &  &  &  & 0.7 & \cellcolor{gray!15}0.71 & \cellcolor{gray!15}0.71 & -1.41 & -0.81 & -0.17 & 0.18 & 0.95 & 2.02 \\
 &  &  & 1.25 & 0.3 & \cellcolor{gray!15}0.73 & \cellcolor{gray!15}0.74 & -1.01 & -0.57 & -0.14 & 0.15 & 0.60 & 1.09 \\
 &  &  &  & 0.5 & \cellcolor{gray!15}0.77 & \cellcolor{gray!15}0.77 & -1.30 & -0.79 & -0.16 & 0.19 & 0.91 & 1.75 \\
 &  &  &  & 0.7 & \cellcolor{gray!15}0.78 & \cellcolor{gray!15}0.79 & -1.38 & -0.85 & -0.17 & 0.18 & 1.00 & 2.09 \\
\cmidrule(lr){2-13}
 & Positive & 5 & 0.75 & 0.3 & \cellcolor{gray!15}0.49 & \cellcolor{gray!15}0.51 & -0.49 & -0.21 & -0.03 & 0.04 & 0.25 & 0.41 \\
 &  &  &  & 0.5 & \cellcolor{gray!15}0.37 & \cellcolor{gray!15}0.38 & -0.43 & -0.21 & -0.06 & 0.02 & 0.22 & 0.47 \\
 &  &  &  & 0.7 & \cellcolor{gray!15}0.22 & \cellcolor{gray!15}0.23 & -0.39 & -0.23 & -0.06 & 0.05 & 0.25 & 0.55 \\
 &  &  & 1.00 & 0.3 & \cellcolor{gray!15}0.36 & \cellcolor{gray!15}0.40 & -0.43 & -0.17 & -0.03 & -0.04 & 0.21 & 0.34 \\
 &  &  &  & 0.5 & \cellcolor{gray!15}0.46 & \cellcolor{gray!15}0.48 & -0.52 & -0.26 & -0.04 & 0.02 & 0.29 & 0.52 \\
 &  &  &  & 0.7 & \cellcolor{gray!15}0.34 & \cellcolor{gray!15}0.36 & -0.51 & -0.26 & -0.07 & 0.03 & 0.27 & 0.57 \\
 &  &  & 1.25 & 0.3 & \cellcolor{gray!15}0.26 & \cellcolor{gray!15}0.28 & -0.23 & -0.11 & 0.02 & 0.01 & 0.04 & 0.02 \\
 &  &  &  & 0.5 & \cellcolor{gray!15}0.41 & \cellcolor{gray!15}0.43 & -0.45 & -0.21 & -0.03 & 0.01 & 0.22 & 0.37 \\
 &  &  &  & 0.7 & \cellcolor{gray!15}0.35 & \cellcolor{gray!15}0.37 & -0.48 & -0.26 & -0.07 & 0.02 & 0.29 & 0.58 \\
\cmidrule(lr){3-13}
 &  & 10 & 0.75 & 0.3 & \cellcolor{gray!15}0.51 & \cellcolor{gray!15}0.52 & -0.66 & -0.26 & -0.03 & 0.12 & 0.36 & 0.73 \\
 &  &  &  & 0.5 & \cellcolor{gray!15}0.48 & \cellcolor{gray!15}0.49 & -0.71 & -0.36 & -0.09 & 0.09 & 0.39 & 0.90 \\
 &  &  &  & 0.7 & \cellcolor{gray!15}0.30 & \cellcolor{gray!15}0.31 & -0.73 & -0.40 & -0.08 & 0.09 & 0.46 & 1.01 \\
 &  &  & 1.00 & 0.3 & \cellcolor{gray!15}0.43 & \cellcolor{gray!15}0.45 & -0.58 & -0.16 & -0.04 & 0.08 & 0.28 & 0.64 \\
 &  &  &  & 0.5 & \cellcolor{gray!15}0.54 & \cellcolor{gray!15}0.54 & -0.82 & -0.38 & -0.08 & 0.13 & 0.48 & 1.10 \\
 &  &  &  & 0.7 & \cellcolor{gray!15}0.41 & \cellcolor{gray!15}0.43 & -0.83 & -0.45 & -0.09 & 0.11 & 0.49 & 1.08 \\
 &  &  & 1.25 & 0.3 & \cellcolor{gray!15}0.29 & \cellcolor{gray!15}0.31 & -0.05 & 0.10 & 0.03 & 0.02 & -0.13 & -0.16 \\
 &  &  &  & 0.5 & \cellcolor{gray!15}0.47 & \cellcolor{gray!15}0.47 & -0.62 & -0.25 & -0.03 & 0.12 & 0.34 & 0.73 \\
 &  &  &  & 0.7 & \cellcolor{gray!15}0.48 & \cellcolor{gray!15}0.49 & -0.78 & -0.41 & -0.11 & 0.10 & 0.48 & 1.04 \\
\cmidrule(lr){2-13}
 & Negative & 5 & 0.75 & 0.3 & \cellcolor{gray!15}0.84 & \cellcolor{gray!15}0.83 & -0.71 & -0.48 & -0.10 & 0.17 & 0.78 & 1.57 \\
 &  &  &  & 0.5 & \cellcolor{gray!15}0.86 & \cellcolor{gray!15}0.86 & -0.73 & -0.49 & -0.12 & 0.15 & 0.74 & 1.60 \\
 &  &  &  & 0.7 & \cellcolor{gray!15}0.81 & \cellcolor{gray!15}0.81 & -0.77 & -0.51 & -0.11 & 0.16 & 0.81 & 1.70 \\
 &  &  & 1.00 & 0.3 & \cellcolor{gray!15}0.89 & \cellcolor{gray!15}0.88 & -0.70 & -0.50 & -0.10 & 0.17 & 0.80 & 1.59 \\
 &  &  &  & 0.5 & \cellcolor{gray!15}0.85 & \cellcolor{gray!15}0.86 & -0.72 & -0.49 & -0.12 & 0.15 & 0.75 & 1.58 \\
 &  &  &  & 0.7 & \cellcolor{gray!15}0.79 & \cellcolor{gray!15}0.80 & -0.75 & -0.48 & -0.10 & 0.15 & 0.78 & 1.63 \\
 &  &  & 1.25 & 0.3 & \cellcolor{gray!15}0.87 & \cellcolor{gray!15}0.87 & -0.66 & -0.44 & -0.11 & 0.11 & 0.63 & 1.31 \\
 &  &  &  & 0.5 & \cellcolor{gray!15}0.87 & \cellcolor{gray!15}0.87 & -0.73 & -0.51 & -0.11 & 0.18 & 0.84 & 1.70 \\
 &  &  &  & 0.7 & \cellcolor{gray!15}0.88 & \cellcolor{gray!15}0.88 & -0.78 & -0.53 & -0.12 & 0.16 & 0.82 & 1.74 \\
\cmidrule(lr){3-13}
 &  & 10 & 0.75 & 0.3 & \cellcolor{gray!15}0.80 & \cellcolor{gray!15}0.80 & -0.58 & -0.46 & -0.09 & 0.20 & 0.83 & 1.70 \\
 &  &  &  & 0.5 & \cellcolor{gray!15}0.85 & \cellcolor{gray!15}0.86 & -0.66 & -0.49 & -0.13 & 0.15 & 0.82 & 1.79 \\
 &  &  &  & 0.7 & \cellcolor{gray!15}0.82 & \cellcolor{gray!15}0.82 & -0.78 & -0.57 & -0.14 & 0.18 & 0.90 & 1.91 \\
 &  &  & 1.00 & 0.3 & \cellcolor{gray!15}0.80 & \cellcolor{gray!15}0.80 & -0.51 & -0.45 & -0.08 & 0.19 & 0.85 & 1.72 \\
 &  &  &  & 0.5 & \cellcolor{gray!15}0.83 & \cellcolor{gray!15}0.85 & -0.64 & -0.47 & -0.10 & 0.18 & 0.84 & 1.74 \\
 &  &  &  & 0.7 & \cellcolor{gray!15}0.80 & \cellcolor{gray!15}0.81 & -0.72 & -0.51 & -0.11 & 0.17 & 0.85 & 1.80 \\
 &  &  & 1.25 & 0.3 & \cellcolor{gray!15}0.80 & \cellcolor{gray!15}0.80 & -0.51 & -0.40 & -0.06 & 0.14 & 0.65 & 1.33 \\
 &  &  &  & 0.5 & \cellcolor{gray!15}0.81 & \cellcolor{gray!15}0.82 & -0.56 & -0.48 & -0.11 & 0.19 & 0.88 & 1.83 \\
 &  &  &  & 0.7 & \cellcolor{gray!15}0.87 & \cellcolor{gray!15}0.87 & -0.72 & -0.54 & -0.14 & 0.16 & 0.89 & 1.93 \\
\midrule
\textbf{KM-Sem} & Full & 5 & 0.75 & 0.3 & \cellcolor{gray!15}0.13 & \cellcolor{gray!15}0.12 & 0.08 & -0.00 & 0.02 & 0.06 & 0.05 & 0.11 \\
 &  &  &  & 0.5 & \cellcolor{gray!15}0.05 & \cellcolor{gray!15}0.05 & 0.14 & 0.04 & 0.00 & 0.01 & 0.07 & 0.23 \\
 &  &  &  & 0.7 & \cellcolor{gray!15}0.05 & \cellcolor{gray!15}0.04 & 0.11 & 0.03 & 0.02 & 0.03 & 0.08 & 0.29 \\
 &  &  & 1.00 & 0.3 & \cellcolor{gray!15}0.14 & \cellcolor{gray!15}0.11 & 0.13 & -0.04 & -0.01 & 0.06 & 0.02 & 0.10 \\
 &  &  &  & 0.5 & \cellcolor{gray!15}0.04 & \cellcolor{gray!15}0.04 & 0.14 & 0.02 & 0.02 & 0.02 & 0.01 & 0.05 \\
 &  &  &  & 0.7 & \cellcolor{gray!15}0.07 & \cellcolor{gray!15}0.07 & 0.19 & 0.07 & 0.02 & 0.03 & 0.06 & 0.24 \\
 &  &  & 1.25 & 0.3 & \cellcolor{gray!15}0.05 & \cellcolor{gray!15}0.07 & -0.04 & -0.04 & 0.02 & 0.03 & 0.01 & -0.03 \\
 &  &  &  & 0.5 & \cellcolor{gray!15}0.08 & \cellcolor{gray!15}0.07 & 0.15 & 0.01 & 0.02 & 0.05 & -0.00 & 0.06 \\
 &  &  &  & 0.7 & \cellcolor{gray!15}0.00 & \cellcolor{gray!15}0.01 & 0.20 & 0.06 & 0.02 & 0.01 & 0.07 & 0.24 \\
\cmidrule(lr){3-13}
 &  & 10 & 0.75 & 0.3 & \cellcolor{gray!15}0.04 & \cellcolor{gray!15}0.04 & -0.03 & -0.05 & -0.02 & 0.06 & 0.05 & 0.23 \\
 &  &  &  & 0.5 & \cellcolor{gray!15}-0.01 & \cellcolor{gray!15}-0.02 & 0.06 & -0.01 & -0.03 & 0.03 & 0.08 & 0.38 \\
 &  &  &  & 0.7 & \cellcolor{gray!15}0.04 & \cellcolor{gray!15}0.04 & 0.08 & -0.04 & -0.01 & 0.04 & 0.13 & 0.52 \\
 &  &  & 1.00 & 0.3 & \cellcolor{gray!15}0.05 & \cellcolor{gray!15}0.02 & -0.01 & -0.12 & -0.02 & 0.05 & 0.03 & 0.30 \\
 &  &  &  & 0.5 & \cellcolor{gray!15}-0.01 & \cellcolor{gray!15}-0.02 & 0.06 & -0.04 & -0.02 & 0.02 & 0.01 & 0.15 \\
 &  &  &  & 0.7 & \cellcolor{gray!15}0.03 & \cellcolor{gray!15}0.03 & 0.08 & 0.01 & -0.00 & 0.04 & 0.09 & 0.41 \\
 &  &  & 1.25 & 0.3 & \cellcolor{gray!15}-0.02 & \cellcolor{gray!15}-0.03 & -0.23 & -0.06 & -0.01 & -0.00 & -0.03 & 0.05 \\
 &  &  &  & 0.5 & \cellcolor{gray!15}-0.02 & \cellcolor{gray!15}-0.02 & 0.09 & -0.05 & -0.02 & -0.00 & -0.03 & 0.10 \\
 &  &  &  & 0.7 & \cellcolor{gray!15}-0.00 & \cellcolor{gray!15}-0.01 & 0.15 & 0.02 & -0.03 & 0.03 & 0.07 & 0.44 \\
\cmidrule(lr){2-13}
 & Positive & 5 & 0.75 & 0.3 & \cellcolor{gray!15}-0.04 & \cellcolor{gray!15}-0.04 & 0.08 & 0.04 & 0.04 & -0.01 & -0.01 & -0.03 \\
 &  &  &  & 0.5 & \cellcolor{gray!15}-0.09 & \cellcolor{gray!15}-0.09 & 0.14 & 0.10 & 0.01 & -0.01 & -0.01 & 0.04 \\
 &  &  &  & 0.7 & \cellcolor{gray!15}-0.15 & \cellcolor{gray!15}-0.15 & 0.16 & 0.10 & 0.03 & 0.00 & -0.01 & 0.02 \\
 &  &  & 1.00 & 0.3 & \cellcolor{gray!15}0.01 & \cellcolor{gray!15}0.01 & -0.06 & -0.01 & -0.01 & -0.06 & -0.03 & 0.03 \\
 &  &  &  & 0.5 & \cellcolor{gray!15}-0.10 & \cellcolor{gray!15}-0.11 & 0.08 & 0.03 & 0.01 & -0.03 & -0.03 & -0.04 \\
 &  &  &  & 0.7 & \cellcolor{gray!15}-0.11 & \cellcolor{gray!15}-0.12 & 0.24 & 0.15 & 0.03 & -0.00 & -0.04 & -0.03 \\
 &  &  & 1.25 & 0.3 & \cellcolor{gray!15}-0.10 & \cellcolor{gray!15}-0.09 & -0.16 & 0.04 & 0.02 & -0.04 & -0.05 & -0.09 \\
 &  &  &  & 0.5 & \cellcolor{gray!15}-0.03 & \cellcolor{gray!15}-0.04 & 0.08 & -0.00 & -0.00 & -0.05 & -0.04 & -0.02 \\
 &  &  &  & 0.7 & \cellcolor{gray!15}-0.14 & \cellcolor{gray!15}-0.14 & 0.25 & 0.14 & 0.02 & -0.03 & -0.03 & 0.01 \\
\cmidrule(lr){3-13}
 &  & 10 & 0.75 & 0.3 & \cellcolor{gray!15}0.01 & \cellcolor{gray!15}0.01 & 0.04 & 0.04 & 0.03 & 0.03 & 0.04 & 0.16 \\
 &  &  &  & 0.5 & \cellcolor{gray!15}-0.09 & \cellcolor{gray!15}-0.09 & 0.21 & 0.12 & 0.01 & -0.00 & 0.02 & 0.20 \\
 &  &  &  & 0.7 & \cellcolor{gray!15}-0.05 & \cellcolor{gray!15}-0.05 & 0.21 & 0.07 & 0.01 & 0.01 & 0.09 & 0.31 \\
 &  &  & 1.00 & 0.3 & \cellcolor{gray!15}0.14 & \cellcolor{gray!15}0.12 & -0.13 & -0.04 & -0.03 & -0.01 & 0.11 & 0.33 \\
 &  &  &  & 0.5 & \cellcolor{gray!15}-0.01 & \cellcolor{gray!15}-0.03 & 0.07 & 0.03 & -0.01 & 0.00 & 0.04 & 0.23 \\
 &  &  &  & 0.7 & \cellcolor{gray!15}-0.07 & \cellcolor{gray!15}-0.07 & 0.24 & 0.13 & 0.03 & 0.01 & 0.06 & 0.25 \\
 &  &  & 1.25 & 0.3 & \cellcolor{gray!15}-0.00 & \cellcolor{gray!15}-0.01 & -0.22 & 0.03 & 0.02 & 0.02 & -0.02 & 0.05 \\
 &  &  &  & 0.5 & \cellcolor{gray!15}0.03 & \cellcolor{gray!15}0.00 & 0.12 & 0.00 & -0.03 & -0.03 & 0.01 & 0.19 \\
 &  &  &  & 0.7 & \cellcolor{gray!15}-0.10 & \cellcolor{gray!15}-0.11 & 0.33 & 0.17 & 0.02 & 0.00 & 0.03 & 0.25 \\
\cmidrule(lr){2-13}
 & Negative & 5 & 0.75 & 0.3 & \cellcolor{gray!15}0.23 & \cellcolor{gray!15}0.25 & -0.21 & -0.11 & -0.02 & 0.08 & 0.21 & 0.43 \\
 &  &  &  & 0.5 & \cellcolor{gray!15}0.19 & \cellcolor{gray!15}0.19 & -0.19 & -0.13 & -0.04 & 0.05 & 0.22 & 0.50 \\
 &  &  &  & 0.7 & \cellcolor{gray!15}0.26 & \cellcolor{gray!15}0.28 & -0.17 & -0.13 & -0.03 & 0.04 & 0.24 & 0.60 \\
 &  &  & 1.00 & 0.3 & \cellcolor{gray!15}0.09 & \cellcolor{gray!15}0.10 & -0.09 & -0.09 & -0.02 & 0.05 & 0.16 & 0.19 \\
 &  &  &  & 0.5 & \cellcolor{gray!15}0.17 & \cellcolor{gray!15}0.20 & -0.11 & -0.08 & -0.03 & 0.03 & 0.15 & 0.32 \\
 &  &  &  & 0.7 & \cellcolor{gray!15}0.22 & \cellcolor{gray!15}0.23 & -0.14 & -0.08 & -0.01 & 0.05 & 0.22 & 0.53 \\
 &  &  & 1.25 & 0.3 & \cellcolor{gray!15}0.09 & \cellcolor{gray!15}0.10 & -0.12 & -0.08 & -0.00 & 0.05 & 0.13 & 0.07 \\
 &  &  &  & 0.5 & \cellcolor{gray!15}0.12 & \cellcolor{gray!15}0.14 & -0.13 & -0.11 & -0.04 & 0.04 & 0.12 & 0.25 \\
 &  &  &  & 0.7 & \cellcolor{gray!15}0.21 & \cellcolor{gray!15}0.22 & -0.14 & -0.12 & -0.03 & 0.05 & 0.22 & 0.54 \\
\cmidrule(lr){3-13}
 &  & 10 & 0.75 & 0.3 & \cellcolor{gray!15}0.18 & \cellcolor{gray!15}0.19 & -0.12 & -0.10 & 0.00 & 0.04 & 0.21 & 0.60 \\
 &  &  &  & 0.5 & \cellcolor{gray!15}0.09 & \cellcolor{gray!15}0.10 & -0.08 & -0.06 & -0.02 & 0.02 & 0.19 & 0.59 \\
 &  &  &  & 0.7 & \cellcolor{gray!15}0.23 & \cellcolor{gray!15}0.24 & -0.09 & -0.10 & -0.02 & 0.02 & 0.23 & 0.64 \\
 &  &  & 1.00 & 0.3 & \cellcolor{gray!15}0.17 & \cellcolor{gray!15}0.19 & 0.04 & -0.02 & -0.04 & 0.03 & 0.13 & 0.56 \\
 &  &  &  & 0.5 & \cellcolor{gray!15}0.13 & \cellcolor{gray!15}0.15 & -0.02 & -0.05 & -0.01 & 0.01 & 0.14 & 0.48 \\
 &  &  &  & 0.7 & \cellcolor{gray!15}0.17 & \cellcolor{gray!15}0.19 & -0.06 & -0.05 & -0.00 & 0.01 & 0.19 & 0.60 \\
 &  &  & 1.25 & 0.3 & \cellcolor{gray!15}0.16 & \cellcolor{gray!15}0.19 & -0.13 & -0.07 & -0.01 & 0.08 & 0.10 & 0.27 \\
 &  &  &  & 0.5 & \cellcolor{gray!15}0.14 & \cellcolor{gray!15}0.15 & -0.02 & -0.07 & -0.04 & 0.01 & 0.07 & 0.41 \\
 &  &  &  & 0.7 & \cellcolor{gray!15}0.16 & \cellcolor{gray!15}0.17 & -0.05 & -0.05 & -0.01 & 0.03 & 0.21 & 0.65 \\
\midrule
\textbf{Single} & Full & 5 & 0.75 & 0.3 & \cellcolor{gray!15}0.83 & \cellcolor{gray!15}0.84 & -0.81 & -0.44 & -0.07 & 0.14 & 0.66 & 1.29 \\
 &  &  &  & 0.5 & \cellcolor{gray!15}0.82 & \cellcolor{gray!15}0.83 & -0.80 & -0.47 & -0.12 & 0.11 & 0.64 & 1.45 \\
 &  &  &  & 0.7 & \cellcolor{gray!15}0.71 & \cellcolor{gray!15}0.73 & -0.82 & -0.48 & -0.12 & 0.12 & 0.68 & 1.51 \\
 &  &  & 1.00 & 0.3 & \cellcolor{gray!15}0.75 & \cellcolor{gray!15}0.79 & -0.82 & -0.42 & -0.07 & 0.10 & 0.64 & 1.35 \\
 &  &  &  & 0.5 & \cellcolor{gray!15}0.83 & \cellcolor{gray!15}0.84 & -0.87 & -0.49 & -0.09 & 0.12 & 0.67 & 1.40 \\
 &  &  &  & 0.7 & \cellcolor{gray!15}0.75 & \cellcolor{gray!15}0.76 & -0.82 & -0.47 & -0.13 & 0.12 & 0.66 & 1.45 \\
 &  &  & 1.25 & 0.3 & \cellcolor{gray!15}0.74 & \cellcolor{gray!15}0.76 & -0.78 & -0.47 & -0.06 & 0.08 & 0.52 & 1.01 \\
 &  &  &  & 0.5 & \cellcolor{gray!15}0.80 & \cellcolor{gray!15}0.81 & -0.86 & -0.47 & -0.07 & 0.13 & 0.67 & 1.34 \\
 &  &  &  & 0.7 & \cellcolor{gray!15}0.82 & \cellcolor{gray!15}0.84 & -0.82 & -0.48 & -0.11 & 0.11 & 0.67 & 1.52 \\
\cmidrule(lr){3-13}
 &  & 10 & 0.75 & 0.3 & \cellcolor{gray!15}0.82 & \cellcolor{gray!15}0.83 & -1.26 & -0.73 & -0.11 & 0.20 & 0.91 & 1.70 \\
 &  &  &  & 0.5 & \cellcolor{gray!15}0.74 & \cellcolor{gray!15}0.74 & -1.24 & -0.77 & -0.14 & 0.17 & 0.89 & 1.87 \\
 &  &  &  & 0.7 & \cellcolor{gray!15}0.65 & \cellcolor{gray!15}0.65 & -1.27 & -0.76 & -0.17 & 0.19 & 0.94 & 2.04 \\
 &  &  & 1.00 & 0.3 & \cellcolor{gray!15}0.75 & \cellcolor{gray!15}0.78 & -1.32 & -0.74 & -0.12 & 0.13 & 0.82 & 1.70 \\
 &  &  &  & 0.5 & \cellcolor{gray!15}0.83 & \cellcolor{gray!15}0.84 & -1.43 & -0.84 & -0.15 & 0.20 & 1.00 & 1.99 \\
 &  &  &  & 0.7 & \cellcolor{gray!15}0.71 & \cellcolor{gray!15}0.72 & -1.37 & -0.79 & -0.16 & 0.20 & 0.96 & 1.97 \\
 &  &  & 1.25 & 0.3 & \cellcolor{gray!15}0.71 & \cellcolor{gray!15}0.72 & -0.94 & -0.55 & -0.14 & 0.14 & 0.58 & 1.10 \\
 &  &  &  & 0.5 & \cellcolor{gray!15}0.78 & \cellcolor{gray!15}0.78 & -1.27 & -0.78 & -0.17 & 0.19 & 0.92 & 1.77 \\
 &  &  &  & 0.7 & \cellcolor{gray!15}0.78 & \cellcolor{gray!15}0.79 & -1.37 & -0.84 & -0.16 & 0.18 & 0.99 & 2.08 \\
\cmidrule(lr){2-13}
 & Positive & 5 & 0.75 & 0.3 & \cellcolor{gray!15}0.46 & \cellcolor{gray!15}0.48 & -0.50 & -0.22 & -0.03 & 0.04 & 0.29 & 0.43 \\
 &  &  &  & 0.5 & \cellcolor{gray!15}0.39 & \cellcolor{gray!15}0.40 & -0.41 & -0.20 & -0.06 & 0.02 & 0.23 & 0.43 \\
 &  &  &  & 0.7 & \cellcolor{gray!15}0.20 & \cellcolor{gray!15}0.21 & -0.40 & -0.22 & -0.05 & 0.04 & 0.24 & 0.48 \\
 &  &  & 1.00 & 0.3 & \cellcolor{gray!15}0.36 & \cellcolor{gray!15}0.39 & -0.47 & -0.18 & -0.04 & -0.04 & 0.17 & 0.29 \\
 &  &  &  & 0.5 & \cellcolor{gray!15}0.47 & \cellcolor{gray!15}0.49 & -0.54 & -0.29 & -0.04 & 0.03 & 0.36 & 0.58 \\
 &  &  &  & 0.7 & \cellcolor{gray!15}0.34 & \cellcolor{gray!15}0.36 & -0.46 & -0.23 & -0.06 & 0.04 & 0.28 & 0.57 \\
 &  &  & 1.25 & 0.3 & \cellcolor{gray!15}0.25 & \cellcolor{gray!15}0.27 & -0.23 & -0.13 & 0.02 & 0.00 & 0.05 & -0.02 \\
 &  &  &  & 0.5 & \cellcolor{gray!15}0.40 & \cellcolor{gray!15}0.43 & -0.46 & -0.23 & -0.03 & 0.02 & 0.23 & 0.35 \\
 &  &  &  & 0.7 & \cellcolor{gray!15}0.33 & \cellcolor{gray!15}0.35 & -0.45 & -0.24 & -0.06 & 0.01 & 0.28 & 0.54 \\
\cmidrule(lr){3-13}
 &  & 10 & 0.75 & 0.3 & \cellcolor{gray!15}0.45 & \cellcolor{gray!15}0.47 & -0.61 & -0.21 & -0.01 & 0.08 & 0.33 & 0.65 \\
 &  &  &  & 0.5 & \cellcolor{gray!15}0.44 & \cellcolor{gray!15}0.45 & -0.69 & -0.34 & -0.07 & 0.07 & 0.33 & 0.79 \\
 &  &  &  & 0.7 & \cellcolor{gray!15}0.29 & \cellcolor{gray!15}0.30 & -0.70 & -0.37 & -0.07 & 0.09 & 0.42 & 0.93 \\
 &  &  & 1.00 & 0.3 & \cellcolor{gray!15}0.41 & \cellcolor{gray!15}0.43 & -0.55 & -0.17 & -0.05 & 0.07 & 0.28 & 0.61 \\
 &  &  &  & 0.5 & \cellcolor{gray!15}0.53 & \cellcolor{gray!15}0.53 & -0.86 & -0.36 & -0.09 & 0.12 & 0.46 & 1.05 \\
 &  &  &  & 0.7 & \cellcolor{gray!15}0.43 & \cellcolor{gray!15}0.44 & -0.84 & -0.42 & -0.09 & 0.11 & 0.50 & 1.07 \\
 &  &  & 1.25 & 0.3 & \cellcolor{gray!15}0.28 & \cellcolor{gray!15}0.29 & 0.01 & 0.10 & 0.03 & 0.00 & -0.08 & -0.10 \\
 &  &  &  & 0.5 & \cellcolor{gray!15}0.45 & \cellcolor{gray!15}0.45 & -0.65 & -0.26 & -0.04 & 0.11 & 0.35 & 0.68 \\
 &  &  &  & 0.7 & \cellcolor{gray!15}0.46 & \cellcolor{gray!15}0.47 & -0.70 & -0.39 & -0.09 & 0.07 & 0.44 & 0.97 \\
\cmidrule(lr){2-13}
 & Negative & 5 & 0.75 & 0.3 & \cellcolor{gray!15}0.83 & \cellcolor{gray!15}0.83 & -0.71 & -0.46 & -0.11 & 0.15 & 0.79 & 1.58 \\
 &  &  &  & 0.5 & \cellcolor{gray!15}0.85 & \cellcolor{gray!15}0.85 & -0.72 & -0.48 & -0.13 & 0.13 & 0.75 & 1.61 \\
 &  &  &  & 0.7 & \cellcolor{gray!15}0.80 & \cellcolor{gray!15}0.81 & -0.77 & -0.49 & -0.11 & 0.14 & 0.82 & 1.72 \\
 &  &  & 1.00 & 0.3 & \cellcolor{gray!15}0.88 & \cellcolor{gray!15}0.88 & -0.71 & -0.49 & -0.12 & 0.15 & 0.81 & 1.63 \\
 &  &  &  & 0.5 & \cellcolor{gray!15}0.86 & \cellcolor{gray!15}0.86 & -0.74 & -0.48 & -0.13 & 0.12 & 0.77 & 1.58 \\
 &  &  &  & 0.7 & \cellcolor{gray!15}0.78 & \cellcolor{gray!15}0.78 & -0.73 & -0.47 & -0.11 & 0.13 & 0.77 & 1.61 \\
 &  &  & 1.25 & 0.3 & \cellcolor{gray!15}0.86 & \cellcolor{gray!15}0.87 & -0.64 & -0.44 & -0.10 & 0.10 & 0.62 & 1.31 \\
 &  &  &  & 0.5 & \cellcolor{gray!15}0.86 & \cellcolor{gray!15}0.86 & -0.73 & -0.49 & -0.12 & 0.17 & 0.84 & 1.68 \\
 &  &  &  & 0.7 & \cellcolor{gray!15}0.88 & \cellcolor{gray!15}0.88 & -0.78 & -0.52 & -0.13 & 0.14 & 0.83 & 1.76 \\
\cmidrule(lr){3-13}
 &  & 10 & 0.75 & 0.3 & \cellcolor{gray!15}0.80 & \cellcolor{gray!15}0.80 & -0.58 & -0.46 & -0.10 & 0.17 & 0.83 & 1.69 \\
 &  &  &  & 0.5 & \cellcolor{gray!15}0.85 & \cellcolor{gray!15}0.86 & -0.67 & -0.50 & -0.13 & 0.13 & 0.82 & 1.75 \\
 &  &  &  & 0.7 & \cellcolor{gray!15}0.81 & \cellcolor{gray!15}0.82 & -0.77 & -0.56 & -0.14 & 0.14 & 0.86 & 1.86 \\
 &  &  & 1.00 & 0.3 & \cellcolor{gray!15}0.80 & \cellcolor{gray!15}0.80 & -0.51 & -0.44 & -0.09 & 0.17 & 0.86 & 1.71 \\
 &  &  &  & 0.5 & \cellcolor{gray!15}0.83 & \cellcolor{gray!15}0.84 & -0.64 & -0.48 & -0.11 & 0.15 & 0.85 & 1.72 \\
 &  &  &  & 0.7 & \cellcolor{gray!15}0.80 & \cellcolor{gray!15}0.81 & -0.73 & -0.53 & -0.10 & 0.13 & 0.83 & 1.76 \\
 &  &  & 1.25 & 0.3 & \cellcolor{gray!15}0.79 & \cellcolor{gray!15}0.80 & -0.51 & -0.40 & -0.06 & 0.12 & 0.65 & 1.31 \\
 &  &  &  & 0.5 & \cellcolor{gray!15}0.80 & \cellcolor{gray!15}0.80 & -0.53 & -0.44 & -0.11 & 0.20 & 0.89 & 1.79 \\
 &  &  &  & 0.7 & \cellcolor{gray!15}0.86 & \cellcolor{gray!15}0.87 & -0.75 & -0.55 & -0.14 & 0.14 & 0.90 & 1.90 \\
\label{tab:qwen3_4b-steering-full}
\end{longtable}
The single attribution vector steering and $RD$ method is near same, since we observe that, top-\{5, 10\} are nearly identical among continuations in the same cluster (93\% for top-5 and 95\% overlap with top-10 features in the same cluster.).

\end{document}